  \newcommand{\jmlrBlackBox}{\rule{1.5ex}{1.5ex}}
  \newcommand{\jmlrQED}{\hfill\jmlrBlackBox\par\bigskip}
\providecommand{\proofname}{Proof}
  \newenvironment{proof}%
  {%
   \par\noindent{\bfseries\upshape \proofname\ }%
  }%
  {\jmlrQED}
  \newcommand*{\theorembodyfont}[1]{%
    \renewcommand*{\@theorembodyfont}{#1}%
  }
  \newcommand*{\@theorembodyfont}{\normalfont\itshape}%
  \newcommand*{\theoremheaderfont}[1]{%
    \renewcommand*{\@theoremheaderfont}{#1}%
  }
  \newcommand*{\@theoremheaderfont}{\normalfont\bfseries }%
  \newcommand*{\theoremsep}[1]{%
    \renewcommand*{\@theoremsep}{#1}%
  }
  \newcommand*{\@theoremsep}{}%
  \newcommand*{\theorempostheader}[1]{%
    \renewcommand*{\@theorempostheader}{#1}%
  }
  \newcommand*{\@theorempostheader}{}%
  \let\jmlr@org@newtheorem\newtheorem
  \renewcommand*{\newtheorem}{\@ifstar\jmlr@snewtheorem\jmlr@newtheorem}
  \newcommand*{\jmlr@snewtheorem}[2]{%
    \cslet{jmlr@thm@#1@body@font}{\@theorembodyfont}%
    \cslet{jmlr@thm@#1@header@font}{\@theoremheaderfont}%
    \cslet{jmlr@thm@#1@sep}{\@theoremsep}%
    \cslet{jmlr@thm@#1@postheader}{\@theorempostheader}%
    \newenvironment{#1}%
    {%
      \trivlist
        \item
        [%
          \hskip\labelsep{\csuse{jmlr@thm@#1@header@font}#2%
            \csuse{jmlr@thm@#1@postheader}%
          }%
        ]%
        \mbox{}\csuse{jmlr@thm@#1@sep}%
        \csuse{jmlr@thm@#1@body@font}%
    }%
    {%
      \endtrivlist
    }%
  }
  \newcommand{\jmlr@newtheorem}[1]{%
    \cslet{jmlr@thm@#1@body@font}{\@theorembodyfont}%
    \cslet{jmlr@thm@#1@header@font}{\@theoremheaderfont}%
    \cslet{jmlr@thm@#1@sep}{\@theoremsep}%
    \cslet{jmlr@thm@#1@postheader}{\@theorempostheader}%
    \jmlr@org@newtheorem{#1}%
  }
  \renewcommand*{\@xthm}[2]{%
    \def\@jmlr@currentthm{#1}%
    \@begintheorem{#2}{\csname the#1\endcsname}%
    \ignorespaces
  }
  \def\@ythm#1#2[#3]{%
    \def\@jmlr@currentthm{#1}%
    \@opargbegintheorem{#2}{\csname the#1\endcsname}{#3}%
    \ignorespaces
  }
  \renewcommand*{\@begintheorem}[2]{%
    \ifdef{\@jmlr@currentthm}%
    {%
      \letcs{\jmlr@this@theoremheader}{jmlr@thm@\@jmlr@currentthm @header@font}%
      \letcs{\jmlr@this@theorembody}{jmlr@thm@\@jmlr@currentthm @body@font}%
      \letcs{\jmlr@this@theoremsep}{jmlr@thm@\@jmlr@currentthm @sep}%
      \letcs{\jmlr@this@theorempostheader}%
         {jmlr@thm@\@jmlr@currentthm @postheader}%
    }%
    {%
      \let\jmlr@this@theorembody\@theorembodyfont
      \let\jmlr@this@theoremheader\@theoremheaderfont
      \let\jmlr@this@theoremsep\@theoremsep
      \let\jmlr@this@theorempostheader\@theorempostheader
    }%
    \trivlist
      \item
       [%
        \hskip\labelsep{\jmlr@this@theoremheader #1\ #2%
           \jmlr@this@theorempostheader}%
       ]%
      \mbox{}\jmlr@this@theoremsep
      \jmlr@this@theorembody
  }
  \renewcommand*{\@opargbegintheorem}[3]{%
    \ifdef{\@jmlr@currentthm}%
    {%
      \letcs{\jmlr@this@theoremheader}{jmlr@thm@\@jmlr@currentthm @header@font}%
      \letcs{\jmlr@this@theorembody}{jmlr@thm@\@jmlr@currentthm @body@font}%
      \letcs{\jmlr@this@theoremsep}{jmlr@thm@\@jmlr@currentthm @sep}%
      \letcs{\jmlr@this@theorempostheader}%
         {jmlr@thm@\@jmlr@currentthm @postheader}%
    }%
    {%
      \let\jmlr@this@theorembody\@theorembodyfont
      \let\jmlr@this@theoremheader\@theoremheaderfont
      \let\jmlr@this@theoremsep\@theoremsep
      \let\jmlr@this@theorempostheader\@theorempostheader
    }%
    \trivlist
     \item[\hskip\labelsep{\jmlr@this@theoremheader #1\ #2\ (#3)%
       \jmlr@this@theorempostheader}]%
     \mbox{}\jmlr@this@theoremsep
     \jmlr@this@theorembody
  }
\newtheorem{theorem}{Theorem}
\newtheorem{lemma}[theorem]{Lemma}
\newtheorem{proposition}[theorem]{Proposition}
\newtheorem{remark}[theorem]{Remark}
\newtheorem{corollary}[theorem]{Corollary}
\newtheorem{definition}[theorem]{Definition}
\newtheorem{assumption}[theorem]{Assumption}
\crefname{enumi}{Statement}{Statements} 
\crefname{equation}{}{}
\newcommand{\abs}[1]{\lvert #1 \rvert}
\newcommand{\norm}[1]{\| #1 \|} 
\newcommand{\innp}[1]{\langle #1 \rangle}
\DeclareMathOperator{\supp}{supp}                       
\renewcommand*\R{\mathbb{R}}                              
\let\epsilon\varepsilon
\newcommand{\defi}{\stackrel{\mathrm{\scriptscriptstyle def}}{=}}
\newcommand{\bigO}{O}
\newcommand{\cstLip}{G}
\newcommand{\cstSmooth}{L} 
\newcommand{\cstUnifC}{\mu} 
\newcommand{\cstRegC}{\alpha} 
\DeclareMathOperator*{\argmax}{arg\,max}                
\DeclareMathOperator*{\argmin}{arg\,min}                
\title[Black-Box Uniform Stability for Non-Euclidean Empirical Risk Minimization]{Black-Box Uniform Stability for Non-Euclidean Empirical Risk Minimization}
\algnewcommand{\lst}{\texttt{lst}}
\algnewcommand{\slst}{\texttt{slst}}
\algnewcommand{\SEND}{\textbf{send}}
\newsavebox{\algleft}
\newsavebox{\algright}
\newcounter{algorithmicH}
\let\oldalgorithmic\algorithmic
\renewcommand{\algorithmic}{%
  \stepcounter{algorithmicH}
  \oldalgorithmic}
\renewcommand{\theHALG@line}{ALG@line.\thealgorithmicH.\arabic{ALG@line}}
\begin{document}

\maketitle

\begin{abstract}
	We study first-order algorithms that are uniformly stable for empirical risk minimization (\newtarget{def:acronym_empirical_risk_minimization}{\ERM{}}) problems that are convex and smooth with respect to $p$-norms, $p \geq 1$. We propose a black-box reduction method that, by  employing properties of uniformly convex regularizers, turns an optimization algorithm for Hölder smooth convex losses into a uniformly stable learning algorithm with optimal statistical risk bounds on the excess risk, up to a constant factor depending on $p$. Achieving a black-box reduction for uniform stability was posed as an open question by \citet{attia2022uniform}, which had solved the Euclidean case $p=2$. We explore applications that leverage non-Euclidean geometry in addressing binary classification problems.
\end{abstract}

\begin{keywords}
  Uniform Stability, Convex Optimization, Non-Euclidean Optimization
\end{keywords}

\section{Introduction}
We study how to obtain an \emph{optimally} stable algorithm for a general learning problem from an optimization algorithm in a black-box manner. Given a distribution $P$ within a family $\mathcal{P}$, our task is to minimize the \emph{population risk} $f(x) = \mathbb{E}_{z\sim P} [\ell(x;z)]$ where $\ell(\cdot, z):\X\rightarrow \mathbb{R}$ is a convex smooth loss function for every $z$. In this work, we focus on convex smooth losses with respect to $p$-norms.  
The statistical question is how to bound the \emph{excess risk} of an estimator $\hat{x}$, which is measured as the difference between the population risk of $\hat{x}$ and the minimal population risk over the parameter domain $\X$:
\begin{equation*}
    \delta f(\hat x) = f(\hat x) - f(\tilde{x}),
\end{equation*}
where $\tilde{x} \in \argmin_{x\in\X} f(x)$. 

In the classical situation where the data distribution is unknown and we have access only to a finite training dataset $S=\left\{z_1, \ldots, z_n\right\}\subset\mathcal{Z}$ of $n$-samples drawn i.i.d.\ from $P$, we can use the \emph{empirical risk} $f_S(x) = \frac1n \sum_{i=1}^n \ell(x; z_i)$ as a sample-average proxy of the population risk.  The expected excess risk of an estimator is typically bounded by balancing the trade-off between error terms originating from the generalization component and those arising from offline optimization of the empirical risk:
\begin{equation}
     \mathbb{E}_S\left[\delta f(\hat x)\right] = \underbrace{\mathbb{E}_S\left[ f(\hat{x}) - f_S(\hat x) \right]}_{\text{generalization}} + \underbrace{\mathbb{E}_S\left[ f_S(\hat{x}) - f_S(\tilde{x}) \right]}_{\text{optimization}}. \label{eq:excess_risk_decomp1}
\end{equation}
Here the optimization error can be further upper bounded using the empirical risk minimizer (ERM) $x^\ast \in \argmin_{\X} f_S(x)$ and noting that $\mathbb{E}_S\left[ f_S(\hat{x}) - f_S(\tilde{x}) \right] \leq \mathbb{E}_S\left[ f_S(\hat{x}) - f_S(x^\ast) \right]$.

In this paper we are interested in providing optimal risk bounds which involve bounding both the generalization and the optimization error. The notion of \emph{algorithmic stability} \citep{bousquet2002stability, Shalev-Shwartz2009Stochastic} has been successfully used to control the generalization error. This notion pertains to the ability of the algorithm to be robust to small perturbations in the training dataset. 

Using the \emph{uniform stability} framework, which is a particular type of algorithmic stability, \citet{attia2022uniform} showed that it is possible to perform a black-box conversion from a given optimization algorithm for a convex, smooth objective to a uniformly stable algorithm with the same optimization convergence rate in the Euclidean case ($p=2$), up to a log factor. However, \citet{attia2022uniform} noted that it is currently unknown whether similar black-box results are achievable when the function regularity is measured with general non-Euclidean geometries $\norm{\cdot}_p$ for $p\geq 1$, posing the following open problem:
\begin{quote}
    \emph{Given an optimization algorithm for a convex, and smooth objective in $\ell_p$-geometry for $p\geq 1$, is it possible to perform a black-box conversion to an algorithm with the same convergence rate that is also uniformly stable?}
\end{quote}

\subsection{Contributions}

In this paper, motivated by the open problem of \citet{attia2022uniform}, we develop techniques for uniformly stable empirical risk minimization with convex smooth objectives with respect to non-Euclidean norms $\norm{\cdot}_p$. By studying the uniform stability of estimators that are approximate minimizers of the empirical risk minimization after adding uniformly convex regularization, see~\cref{lemma:stability_unif_reg}, we design uniformly stable algorithms for problems with regularity measured in general $\norm{\cdot}_p$ normed spaces.

\paragraph{Black-box reduction in non-Euclidean geometry.} We design an algorithm, \emph{Uniform Stable Optimization with $\ell_p$ Regularity} ($\mathrm{USOLP}$), see \cref{alg:usolp}. The algorithm performs a black-box reduction from a given optimization algorithm for convex Hölder smooth functions that is assumed to remain within a bounded domain to a learning algorithm with the same convergence rate that is \emph{optimally uniformly stable} (\cref{thm:blackbox_uniform_stabp}). Our black-box reduction to a stable algorithm is based on: (A) stability properties of uniformly convex regularization (\cref{lemma:stability_unif_reg}), and (B) observing that restarting an optimization algorithm for a convex Hölder smooth functions captures the \emph{uniformly convex structure} of the objective and achieves corresponding convergence rates, which can be faster in some situations, e.g., see \citep{renegar2022simple}. Thus we provide a positive answer to the open problem posed by \citet{attia2022uniform} with the caveat that our USOLP reduction algorithm for $\ell_p$ geometry works for optimization methods for $(\cstSmooth,\min\{2,p\})$-Hölder smooth objectives and requires knowledge of an upper bound on the distance of the initial point to the minimizer, the Lipschitz constant, and the guarantee that the given optimization method remains in the corresponding domain. The reduction is also optimal, since it exhibits the best convergence versus stability trade-off: an improvement to either of the rates would contradict the statistical lower bounds that were derived by \citet{levy2019necessary} for $p\in[1,2]$, and that we establish for $p\geq 2$ in \cref{thm:risk_lower}.

\paragraph{Optimal expected excess risk bounds.}

\cref{thm:risk_upper} shows that an (approximate) minimizer of the empirical risk with a specific level of uniformly convex regularization achieves an optimal expected excess risk, up to a constant factor. Consequently, \cref{corr:usolp_excess_risk_low_dim} and \cref{corr:usolp_excess_risk_high_dim} show that our black-box reductions converge to a point $x_T$ that achieves an excess risk
\begin{align*}
    \mathbb{E}_S\left[ \delta f(x_T)\right] = \begin{cases}
        \bigO_p\left(\cstSmooth R^2 \frac{d^{1/2-1/\hat{p}}}{n^{1/2}} \right) \quad & \text{when}\, d\leq n \\
        \bigO_p\left(\cstSmooth R^2 \frac{1}{n^{1/\hat{p}}} \right) \quad & \text{when}\, d>n
    \end{cases},
\end{align*}
that is optimal up to a constant factor in $p$ for $\cstSmooth$-smooth losses over the domain $\ballp[x_0][R]$ where $x_0$ is the starting point of our algorithm and $\hat{p} = \max\{2,p\}$. These rates are identical up to a constant to the excess risk bound obtained by employing the \emph{non}-black-box algorithm proposed by \citep{diakonikolas2021complementary}, see \cref{corr:usolp_excess_risk_high_dim} and \cref{thm:diakonikolas_generalization_bound} respectively. We summarize our results in \cref{tab:overview}.

\begin{table*}[t]
\resizebox{\textwidth}{!}{%
\begin{tabular}{c | c c c}
    \toprule
    & LB (Non-Euclidean, $\ell_p$-norm) & UB (Euclidean $\ell_2$-norm) & UB (Non-Euclidean, $\ell_p$-norm)\\ \midrule
    $d \leq n$ & $\widetilde\Omega_p(\cstSmooth R^2 \frac{d^{1/2-1/\hat{p}}}{n^{1/2}})$ \citep{levy2019necessary} & \multirow{2}{4cm}{\quad  \ \ \ $\widetilde\bigO_p(\cstSmooth R^2 \left(\frac{1}{n}\right)^{1/2})$  \\ \quad \quad \quad \ \  \citep{attia2022uniform} }
    & $\widetilde\bigO_p(\cstSmooth R^2 \frac{d^{1/2-1/\hat{p}}}{n^{1/2}})$  (\cref{corr:usolp_excess_risk_low_dim}) \\
    $d > n$ & $\widetilde\Omega_p(\cstSmooth R^2 \left(\frac{1}{n}\right)^{1/\hat{p}})$ (\cref{thm:risk_lower}) &  & $\widetilde\bigO_p(\cstSmooth R^2 \left(\frac{1}{n}\right)^{1/\hat{p}})$ (\cref{corr:usolp_excess_risk_high_dim})\\
    \bottomrule
\end{tabular}}
\caption{Excess risk bounds for black-box reduction algorithms for \ERM{} with loss functions in $\R^d$ that are $\cstSmooth$-smooth over the ball of radius $R$ w.r.t.\ $\ell_p$-norm for $n=\Omega_p(1)$. We denote $\hat{p} = \max\{ p, 2\}$. The lower bound on the excess risk (LB) is for estimators within an $\ell_p$-ball of radius $R$ and by observing that the Lipschitz constant is bounded by $2\cstSmooth R$. The lower bound that we establish in the case $d>n$ improves upon the lower bound derived in \citep{sridharan2012Learning} as discussed in \cref{sec:related_work}.} The upper bounds on the excess risk (UB) are achieved by the black-box reduction USOL2 \citep{attia2022uniform} when $p=2$ and USOLP (\cref{alg:usolp}) for $p\geq1$.\label{tab:overview}
\end{table*}

\subsection{Related work}\label{sec:related_work}

The first results obtaining generalization bounds via algorithmic stability can be found in \citep{rogers1978finite,devroye1979distribution}. The work of \citet{bousquet2002stability} continued this direction and provided guarantees for general supervised learning algorithms with regularization. \citet{hardt2016train} showed algorithmic stability for stochastic gradient descent without explicit regularization by exploiting gradient descent's non expansivity and \citet{mou2018generalizations} showed stability for the stochastic gradient Langevin algorithm. Several works have focused on obtaining algorithmic stability for smooth convex functions, where the dependence on number of iterations for the optimization rates is faster than for the class of Lipschitz convex functions. In particular, \citet{chen2018stability} showed that accelerated gradient descent \citep{nesterov1983method} enjoys certain uniform stability for quadratic functions, pointed out the trade-off between stability and optimization showing as a result a lower bound on the optimization rates for this algorithm of $\bigomega{1/T^2}$. \citet{attia2021algorithmic} showed that for general convex smooth functions the uniform stability grows exponentially with the number of iterations. \citet{attia2022uniform} designed a method that obtains a stable algorithm from an optimization algorithm for Euclidean smooth convex functions by using the latter as a black-box. They also show stability for a specific unaccelerated algorithm designed for convex smooth functions with respect to $\norm{\cdot}_p$, for $p\in[1,2]$. They pose an open problem, restated in the introduction, whose solution implies that it is possible to use optimization methods for convex and smooth functions with $\ell_p$-norm regularity to achieve statistical guarantees. In this work, we solve the open problem in an affirmative with the additional benefit of recovering \emph{optimal} statistical guarantees. \citet{zhang2022bring} use Euclidean regularization in order to obtain a procedure that uses an optimization algorithm as a black-box and obtains a differentially private algorithm.

Concerning optimization, \citep{nemirovskii1985optimal,khachiyan1993optimal} presented modifications of Nesterov's accelerated gradient descent in order to obtain accelerated algorithms for convex and smooth function with respect to $p$-norms. An algorithm and proof can also be found in \citep[Section 4]{daspremont2018optimal}. Also for non-Euclidean regularity assumptions, \cite{diakonikolas2021complementary} provided algorithms for objectives consisting of the sum of a smooth term and a proximable term, with a special focus on achieving acceleration when the proximable term is uniformly convex. The studied stability 
for $p\in[1,2]$ with their specific algorithm via regularization, as opposed to our black-box algorithm transformation. \citet{juditsky2014deterministic} studied optimization algorithms for Lipschitz, uniformly convex problems with respect to non-Euclidean norms.

We require for the algorithms that we transform that they either work with a compact convex set as a constraint or that their iterates are guaranteed to naturally stay in one such set. A variety of unconstrained algorithms for convex problems have the property that their iterates remain in a ball of center a minimizer $x^\ast$ and radius a constant times the initial distance to $x^\ast$. For instance, this is satisfied by gradient descent for convex smooth functions with respect to $\norm{\cdot}_2$. Similarly, it is possible to show that the iterates of Nesterov's accelerated gradient descent, as well as their more general accelerated hybrid proximal extragradient algorithms, remain bounded in a ball around a minimizer of radius $\bigo{R_0}$, where $R_0$ is the initial distance to this minimizer \citep[Theorem 3.10]{monteiro2013accelerated}. The iterates $x_t$ of mirror descent algorithms with a $(\cstUnifC, \nu)$-uniformly convex regularizer $\psi$ with respect to a norm $\norm{\cdot}$, satisfy $\frac{\cstUnifC}{\nu}\norm{x^\ast - x_t}^\nu \leq D_\psi(x^\ast, x_t) \leq D_{\psi}(x^\ast, x_0)$. 

There are known lower bounds for the minimax risk for a class of random loss functions in $\ballpzero[1]$ which are linear and $1$-Lipschitz w.r.t.\ $\ell_p$-norm. For the low-dimensional case, i.e., $d\leq n$, and $p\geq 1$, \citet{levy2019necessary} provided lower bounds for the minimax risk which are $\widetilde\Omega_p(d^{1/2-1/\hat{p}} / n^{1/2})$ where $\hat{p}=\max\{p,2\}$ and they showed that a mirror descent with a specific distance function achieves these rates. In the high-dimensional case when $d > n$, the work of \citet[Section 8]{sridharan2012Learning} combined with the fact that the $\ell_{p^\ast}$ space cannot have Rademacher type greater than $p^\ast = p/(p-1)$ when $p^\ast \in [1, 2]$ \citep[Remark 6.2.11.g]{albiac2006topics}, implies that the lowest minimax risk for a class of random linear $1$-Lipschit losses in $\ballpzero[1]$ cannot be less or equal to $c_p n^{-1/p-\delta}$ when $p\geq 2$ for any universal constants $c_p, \delta >0$ where $c_p$ may depend on $p$. We show a slightly stronger of $\Omega_p(1/n^{1/\hat{p}})$ for $\hat{p}=\max\{p,2\}$ in \cref{thm:risk_lower} using a very different technique, inspired to the construction given by \citet{levy2019necessary}. We match these rates with our black-box reduction algorithm as shown in \cref{thm:blackbox_uniform_stabp} for $p\in[1,2]$.

\paragraph{Notation and terminology.} 

We denote $g^\ast(y)= \sup_{x\in\mathbb{R}^d} \{\innp{x,y} - g(x) \}$ the Fenchel dual function of a function $g$.  We denote a general norm as $\norm{\cdot}$ and its dual norm as $\norm{\cdot}_\ast$. The $\ell_p$-norm is denoted as $\norm{\cdot}_p$. We use $\newtarget{def:ball}{\ballp[x][R]}$ for the $\ell_p$ ball of center $x$ and radius $R$, and we use $\ballpzero[R]$ if it is centered at $0$. As it is usual, we say $\hat{x}$ is an $\epsilon$-minimizer of the problem $\min_{x\in\X}f(x)$ if $f(\hat{x})- \min_{x\in\X}f(x) \leq \epsilon$.

\section{Preliminaries}

\subsection{Generalizations of Smoothness and Convexity}
In this work we are interested in the optimization of convex functions that are smooth with respect to non-Euclidean geometries, over compact convex sets $\newtarget{def:optimization_domain}{\X}\subseteq \mathbb{R}^d$, i.e., when measuring distances with $p$-norms. The standard notions of strong convexity and smoothness are not well suited for this setting.  For example, function $\psi(x) = \frac12 \|x\|^2_p$ has its strong convexity parameter bounded above by $1/d^{1-\frac2p}$ \citep{diakonikolas2021complementary} which is small for high-dimensional problems and yields dimension-dependent rates when $p\gg2$. The same is true for any strongly convex function w.r.t\ $\norm{\cdot}_p$ whose range is bounded above by a constant on a unit $\ell_p$-ball \citep[Example 5.1]{daspremont2018optimal}.

In order to get dimension-independent rates, one can exploit instead the regularity of the objective function in terms of uniform convexity and Hölder smoothness, which are generalizations of strong convexity and smoothness.

\begin{definition}[Uniform convexity] \label{def:uniform_convexity}
We say $f$ is $(\cstUnifC, \nu)$-uniformly convex with respect to norm $\norm{\cdot}$ if for any $x, y \in \R^d$ we have
    \begin{equation*}
        f\left(t x + (1-t)y \right) \geq t f(x) + (1-t) f(y) + t(1-t)\frac{\cstUnifC}{\nu}\| x-y \|^\nu,
    \end{equation*}
    or, equivalently, for differentiable $f$:
    \begin{equation*}
        f(y) \geq f(x)-\innp{\nabla f(x), x - y} + \frac{\cstUnifC}{\nu} \| x - y\|^{\nu}, \forall  x, y \in \R^d,
    \end{equation*}
    which is shown, for example in \citep[Section 4.2.2]{nesterov2018lectures}.
\end{definition}
Note that the above definition simplifies to the notion of strong convexity when $\nu=2$.

\begin{definition}[Hölder smoothness]\label{def:holder_smooth}
    We say that  $f:\mathbb{R}^d\rightarrow \mathbb{R}$ is $(\cstSmooth, \nu)$-Hölder smooth w.r.t.\ $\norm{\cdot}$, if there exists $\cstSmooth>0$ and $\nu\in[1,2]$, such that for all $x, y \in \R^d$
    \begin{equation*}
        f(y) \leq f(x) + \innp{\nabla f(x), y-x} + \frac{\cstSmooth}{\nu} \norm{x - y}^{\nu},
    \end{equation*}
    or equivalently, when $f$ is also convex,
    \begin{equation*}
        \norm{\nabla f(x) - \nabla f(y)}_\ast \leq \cstSmooth \norm{x - y}^{\nu-1} \text{ for all } x,y\in\R^d.
    \end{equation*} 
\end{definition}
We say that a function is $\cstSmooth$-smooth if it is $(\cstSmooth, 2)$-Hölder smooth. 

The notion of uniform convexity allows us to capture function regularity in terms of non-Euclidean  $\ell_p$-norm geometry without needing to resort to dimension-dependent bounds. The function of $\ell_p$-norm squared $\psi(x) = \frac12\norm{x}_p^2$ has its strong convexity constant bounded above by $1/d^{1-2/p}$, for $p\geq 2$ \citep{diakonikolas2021complementary}, which becomes small in high-dimensional settings. In fact, \citet[Example 5.1]{daspremont2018optimal} show that \emph{any} strongly convex function w.r.t.\ $\norm{\cdot}_p$ with a
constant bounded range in a unit $\ell_p$-ball must have its strong convexity smaller than $1/d^{1-2/p}$ for $p\geq 2$. The notion of uniform convexity (\cref{def:uniform_convexity}) allows to show that $\frac1p\norm{x}_p^p$ is $(2^{2-p}, p)$-uniformly convex w.r.t\ $\norm{\cdot}_p$, i.e., its uniform convexity constant is \emph{dimension-independent}, see \cref{lemma:unif_cvxty_of_norm_p_to_the_p} and references therein. \citet{ball1994sharp} show that the bound on the uniform convexity is no longer dimension-dependent with the aforementioned $\psi(x) = \frac1p\norm{x}_p^p$ being a specific example of a function whose uniform convexity constant is dimension-independent.

The fact that the uniform convexity constant of $\psi(x) = \frac1p\norm{x}_p^p$ w.r.t.\ $\norm{\cdot}_p$ norm is dimension-independent combined with optimization algorithms for uniformly convex objectives with dimension-independent convergence rates, e.g., Generalized AGD+ \citep{diakonikolas2021complementary}, allows us to derive learning algorithms with both, dimension-independent stability and optimization rates.

\begin{lemma}[Duality of Hölder smoothness and uniform convexity]\linktoproof{lemma:duality_holder_unifconvex} \label{lemma:duality_holder_unifconvex}
    Let $f:\R^{n}\rightarrow\R$ be $(\cstUnifC, \nu)$-uniformly convex function w.r.t.\ $\|\cdot\|$. Then its Fenchel dual $f^*$ is $((\frac{\nu^\ast}{2\cstUnifC(\nu^\ast-1)})^{\nu^\ast-1}, \nu^\ast)$ - Hölder smooth w.r.t.\ $\|\cdot\|_*$, where $\nu^\ast = \nu/(\nu-1)$.
\end{lemma}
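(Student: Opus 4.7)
The plan is to prove the dual characterization via the gradient inequality form in \cref{def:holder_smooth}, i.e., I aim to establish
\[
    \norm{\nabla f^\ast(y_1) - \nabla f^\ast(y_2)} \leq \left(\frac{\nu^\ast}{2\cstUnifC(\nu^\ast-1)}\right)^{\nu^\ast-1} \norm{y_1-y_2}_\ast^{\nu^\ast-1}
\]
for all $y_1, y_2$, which combined with convexity of $f^\ast$ gives the Hölder smoothness bound (using that the dual of $\norm{\cdot}_\ast$ is $\norm{\cdot}$). Uniform convexity implies $f$ is strictly convex and coercive (since $\nu \geq 2$ is assumed in the relevant regime for $\ell_p$ geometry), so $f^\ast$ is finite and differentiable on $\R^d$ by standard Fenchel duality, with $\nabla f^\ast(y) = \arg\max_{x}\{\innp{x, y} - f(x)\}$, hence $y_i \in \partial f(x_i)$ for $x_i \defi \nabla f^\ast(y_i)$.

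The core step is then a two-line calculation. Applying the gradient form of uniform convexity in \cref{def:uniform_convexity} at the pair $(x_1, x_2)$ and at $(x_2, x_1)$ and adding the two inequalities yields
\[
    \innp{y_1 - y_2, x_1 - x_2} \geq \frac{2\cstUnifC}{\nu} \norm{x_1 - x_2}^{\nu}.
\]
By the definition of the dual norm, $\innp{y_1-y_2, x_1-x_2} \leq \norm{y_1-y_2}_\ast \norm{x_1-x_2}$, so rearranging and using $\nu - 1 > 0$ gives
\[
    \norm{x_1 - x_2} \leq \left(\frac{\nu}{2\cstUnifC}\right)^{\frac{1}{\nu-1}} \norm{y_1 - y_2}_\ast^{\frac{1}{\nu-1}}.
\]
Finally, using the conjugate exponent identities $\nu^\ast - 1 = 1/(\nu-1)$ and $\nu = \nu^\ast/(\nu^\ast-1)$, the constant becomes $\left(\frac{\nu^\ast}{2\cstUnifC(\nu^\ast-1)}\right)^{\nu^\ast-1}$ and the exponent becomes $\nu^\ast - 1$, matching the claim.

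The main conceptual obstacle is ensuring that $\nabla f^\ast$ is well-defined and that we can pass from subdifferentials of $f$ back to gradients of $f^\ast$; this is where strict convexity and coercivity of $f$ (both consequences of uniform convexity) are used. The remainder is a routine application of the dual-norm inequality and bookkeeping of the conjugate exponent, so the calculation itself is not the hard part.
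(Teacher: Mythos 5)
Your proof is correct and follows essentially the same route as the paper's: both derive the monotonicity inequality $\innp{y_1-y_2, x_1-x_2} \geq \frac{2\cstUnifC}{\nu}\norm{x_1-x_2}^\nu$, apply the dual-norm (H\"older) inequality to get $\norm{\nabla f^*(y_1) - \nabla f^*(y_2)} \leq (\nu/2\cstUnifC)^{1/(\nu-1)}\norm{y_1-y_2}_*^{1/(\nu-1)}$, and then pass to the function-value form. The two minor differences are in your favor: you obtain the monotonicity inequality directly by symmetrizing the first-order uniform-convexity inequality at $(x_1,x_2)$ and $(x_2,x_1)$, avoiding the paper's detour through the auxiliary midpoint $x_\gamma = \gamma x_1 + (1-\gamma)x_2$ and the subsequent cancellation; and you finish by invoking the equivalence already stated in \cref{def:holder_smooth}, whereas the paper re-derives that equivalence from scratch via a discretized Riemann-sum integration of the gradient bound. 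Both shortcuts are sound—in particular the ``gradient form $\Rightarrow$ function-value form'' direction of the equivalence is a standard integration of $t \mapsto \nabla f^*(y + t\lambda)$ along the segment and does not depend on anything not available here—so the result, including the constant $\left(\frac{\nu^*}{2\cstUnifC(\nu^*-1)}\right)^{\nu^*-1}$, comes out exactly as claimed.
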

The proof is given in \cref{subsec:proof_duality_holder_unifconvex} and follows a similar reasoning as the proof of \citep[Lemma 15]{shalev2007online} which shows duality between strongly convex and smooth functions. A different proof can be found also in \citep[Theorem 3.5.5]{zalinescu2002convex}.

\subsection{Algorithmic stability}
There are different notions of algorithmic stability that are useful for controlling generalization error \citep{bousquet2002stability, kutin2013almost}. Herein, we use the notion of \emph{uniform algorithmic stability} introduced by \citet{bousquet2002stability}.

\begin{definition}[Uniform stability]\label{def:stability}
    An algorithm $\mathcal{A}$, which outputs an estimator $\hat x$ and $\hat x'$ when given the datasets $S$ and $S'$ respectively, that differ in at most one sample, is said to be \emph{$\newtarget{def:stability}{\stability}$-uniformly stable}, if
    \begin{equation*}
        \sup_{z\in\mathcal{Z}} \left| \ell(\hat x; z) - \ell(\hat x'; z) \right| \leq \stability(\mathcal{A}).
    \end{equation*}
\end{definition}

A known result of \citet{bousquet2002stability} is that an estimator $\hat{x}$ obtained from a uniformly stable algorithm $\mathcal{A}$ has bounded the generalization error term in \eqref{eq:excess_risk_decomp1} as 
\begin{equation}\label{eq:stability_to_generalization}
    \mathbb{E}_S\left[ f(\hat{x}) - f_S(\hat x) \right] \leq \stability(\mathcal{A}). 
\end{equation}

\section{Stability after uniformly convex regularization}\label{sec:unif_convex_generalization}

We study the uniform stability of estimators that approximately solve the regularized problem
\begin{equation}
    x^*_\cstUnifC \in \argmin_{x\in\X} f^{(\cstUnifC)}_S(x) \defi f_S(x) + \cstUnifC\psi(x),\label{eq:optimization_regul}
\end{equation}
where $\psi(x):x\in\R^d\rightarrow \R$ is a $(1,\nu)$-uniformly convex regularizer that attains its minimum at a point $x_0\in\R^d$. For example, a viable choice for a $(1,p)$-uniformly convex regularizer based on the $\ell_p$-norm, which is also used in \cref{sec:blackbox_stability}, is $\psi(x)=\frac{2^{p-2}}{p}\norm{x- x_0}_p^p$ for $p\geq 2$ and some $x_0\in\X$, cf. \cref{lemma:unif_cvxty_of_norm_p_to_the_p}. Our results assume an upper bound on the regularizer distance between the minimum of the \ERM{} and the regularizer, i.e., $\psi(x^\ast) - \psi(x_0) \leq D^\nu$ for $x^\ast\in\argmin_{x\in\mathcal{X}} f_S(x)$, for $D\geq 0$.

We show that for the right choice of  $\cstUnifC$, the approximate solutions $\hat{x}$ to the optimization problem in \eqref{eq:optimization_regul} are uniformly stable while also having bounded distance to the original solution of the \emph{unregularized} empirical risk.

\begin{lemma}[Stability after uniformly convex regularization] \linktoproof{lemma:stability_unif_reg}\label{lemma:stability_unif_reg}
    Let $\nu\geq 2$, $\cstUnifC>0$, and $\mathcal{A}$ be an algorithm that for a dataset $S$ returns an $\hat\varepsilon$-minimizer $\hat{x}$ of the regularized \ERM{} $f_S^{(\cstUnifC)}$, where the losses $\ell(\cdot,z)$ are $\cstLip$-Lipschitz and the added regularizer $\cstUnifC\psi(x)$ is $(\cstUnifC,\nu)$-uniformly convex  w.r.t.\ $\norm{\cdot}$. If the accuracy is bounded as $\hat\varepsilon \leq \min \{ \cstUnifC D^\nu, (\frac{\nu}{\cstUnifC})^{1/(\nu-1)} (\frac{2\cstLip}{n})^{\nu/(\nu-1)} \}$, then the algorithm $\mathcal{A}$ is uniformly stable as
    \begin{equation*}
        \stability(\mathcal{A}) \leq 3 \left( \frac{2\nu}{n\cstUnifC} \cstLip^\nu \right)^{1/(\nu-1)},
    \end{equation*}
    while the optimization error of $\hat{x} = \mathcal{A}(S)$ on the unregularized empirical risk is upper bounded as 
    \begin{equation*}
        \mathcal{E}_\mathrm{opt} \defi  f_S(\hat{x}) - \min_{x\in\X} f_S(x) \leq 2\cstUnifC D^\nu.
    \end{equation*}
\end{lemma}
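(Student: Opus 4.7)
My plan is to follow the classical Bousquet--Elisseeff template for stability of regularized ERM, but in the uniformly convex rather than strongly convex regime, and with an extra layer of accounting for approximate minimization. Let $S, S'$ be two datasets that differ in a single sample, write $f_S^{(\cstUnifC)} = f_S + \cstUnifC\psi$ and $f_{S'}^{(\cstUnifC)} = f_{S'} + \cstUnifC\psi$, and let $x_\cstUnifC^\ast, x_\cstUnifC^{\prime\ast}$ denote their exact minimizers over $\X$, while $\hat x, \hat x'$ denote the $\hat\varepsilon$-approximate minimizers output by $\mathcal A$.

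\textbf{Step 1 (stability of exact regularized minimizers).} Since $f_S^{(\cstUnifC)}$ and $f_{S'}^{(\cstUnifC)}$ are $(\cstUnifC,\nu)$-uniformly convex (from the added regularizer and convexity of the losses), applying \cref{def:uniform_convexity} twice---once at $x_\cstUnifC^\ast$ with comparison to $x_\cstUnifC^{\prime\ast}$ for $f_{S'}^{(\cstUnifC)}$, and symmetrically---and adding the two first-order inequalities yields
\begin{equation*}
    \bigl(f_{S'}^{(\cstUnifC)} - f_S^{(\cstUnifC)}\bigr)(x_\cstUnifC^\ast) - \bigl(f_{S'}^{(\cstUnifC)} - f_S^{(\cstUnifC)}\bigr)(x_\cstUnifC^{\prime\ast}) \geq \frac{2\cstUnifC}{\nu} \norm{x_\cstUnifC^\ast - x_\cstUnifC^{\prime\ast}}^\nu.
\end{equation*}
The regularizer cancels in the left-hand side, which reduces to $\tfrac{1}{n}[(\ell(\cdot, z_i') - \ell(\cdot, z_i))(x_\cstUnifC^\ast) - (\ell(\cdot, z_i') - \ell(\cdot, z_i))(x_\cstUnifC^{\prime\ast})]$ and, by $\cstLip$-Lipschitzness of each $\ell(\cdot,z)$, is bounded above by $\tfrac{2\cstLip}{n}\norm{x_\cstUnifC^\ast - x_\cstUnifC^{\prime\ast}}$. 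Dividing by $\norm{x_\cstUnifC^\ast - x_\cstUnifC^{\prime\ast}}$ and solving gives
\begin{equation*}
    \norm{x_\cstUnifC^\ast - x_\cstUnifC^{\prime\ast}} \leq \left(\frac{\nu \cstLip}{n\cstUnifC}\right)^{1/(\nu-1)}.
\end{equation*}

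\textbf{Step 2 (approximation error).} Uniform convexity of $f_S^{(\cstUnifC)}$ at its minimizer $x_\cstUnifC^\ast$ gives $f_S^{(\cstUnifC)}(\hat x) - f_S^{(\cstUnifC)}(x_\cstUnifC^\ast) \geq \frac{\cstUnifC}{\nu}\norm{\hat x - x_\cstUnifC^\ast}^\nu$, and the left-hand side is at most $\hat\varepsilon$ by assumption. Plugging in the hypothesis $\hat\varepsilon \leq (\nu/\cstUnifC)^{1/(\nu-1)}(2\cstLip/n)^{\nu/(\nu-1)}$ and simplifying using $\tfrac{1}{\nu} + \tfrac{1}{\nu(\nu-1)} = \tfrac{1}{\nu-1}$ yields
\begin{equation*}
    \norm{\hat x - x_\cstUnifC^\ast} \leq \left(\frac{2\nu \cstLip}{n\cstUnifC}\right)^{1/(\nu-1)},
\end{equation*}
and the same bound for $\norm{\hat x' - x_\cstUnifC^{\prime\ast}}$. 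A triangle inequality combining Steps 1 and 2 (using that $(\nu\cstLip/(n\cstUnifC))^{1/(\nu-1)} \leq (2\nu\cstLip/(n\cstUnifC))^{1/(\nu-1)}$) produces $\norm{\hat x - \hat x'} \leq 3(2\nu\cstLip/(n\cstUnifC))^{1/(\nu-1)}$. Multiplying by $\cstLip$ via the Lipschitz property of $\ell(\cdot, z)$ delivers the claimed stability bound $\stability(\mathcal A) \leq 3(2\nu \cstLip^\nu/(n\cstUnifC))^{1/(\nu-1)}$.

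\textbf{Step 3 (optimization error).} Let $x^\ast \in \argmin_{\X} f_S$. The $\hat\varepsilon$-optimality of $\hat x$ on $f_S^{(\cstUnifC)}$ together with $f_S^{(\cstUnifC)}(x_\cstUnifC^\ast) \leq f_S^{(\cstUnifC)}(x^\ast)$ yields
\begin{equation*}
    f_S(\hat x) - f_S(x^\ast) \leq \cstUnifC \bigl(\psi(x^\ast) - \psi(\hat x)\bigr) + \hat\varepsilon \leq \cstUnifC\bigl(\psi(x^\ast) - \psi(x_0)\bigr) + \hat\varepsilon \leq \cstUnifC D^\nu + \hat\varepsilon,
\end{equation*}
where the middle inequality uses that $x_0$ minimizes $\psi$ (so $\psi(x_0) \leq \psi(\hat x)$) and the last one uses the hypothesis $\psi(x^\ast) - \psi(x_0) \leq D^\nu$. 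The assumption $\hat\varepsilon \leq \cstUnifC D^\nu$ then produces $\mathcal E_{\mathrm{opt}} \leq 2\cstUnifC D^\nu$.

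The only subtle point I foresee is getting the constants in Step 2 right: the hypothesis on $\hat\varepsilon$ is tuned precisely so that the approximation radius matches (up to the factor of $2^{1/(\nu-1)}$) the exact-minimizer stability radius, which is what makes the triangle inequality collapse into a clean factor of $3$. All other steps are mechanical applications of uniform convexity and Lipschitz continuity.
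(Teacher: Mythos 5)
Your proof is correct and follows essentially the same three-part structure as the paper's: triangle inequality through the two exact regularized minimizers, a uniform-convexity bound on the approximation error, and the same algebra for the optimization error. The only notable variation is in bounding $\|x_\cstUnifC^\ast - x_\cstUnifC^{\prime\ast}\|$: you add the two first-order uniform-convexity inequalities symmetrically (the classical Bousquet--Elisseeff move, which yields the slightly tighter $(\nu\cstLip/(n\cstUnifC))^{1/(\nu-1)}$), whereas the paper invokes its one-sided helper Lemma~\ref{lemma:minima_distance_bound} and gets $(2\nu\cstLip/(n\cstUnifC))^{1/(\nu-1)}$; since this term is dominated by the approximation-error terms under the hypothesis on $\hat\varepsilon$, both arrive at the identical final constant of $3$.
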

The proof is given in \cref{subsec:proof_stability_unif_reg} and follows a similar technique as the proof of \citep[Lemma 7]{attia2022uniform}, but differs by using properties of uniformly instead of strongly convex regularizers and by considering only approximate minimizers of the regularized \ERM{}.

Recall that the expected excess risk is bounded by the sum of the optimization error and the stability error. By \eqref{eq:stability_to_generalization}, which is due to \citet{bousquet2002stability}, we have that stability controls the generalization error. In the result that follows, we find the optimal choice of $\cstUnifC$ and the necessary precision for the optimization error on $f_S^{(\cstUnifC)}$, given $n, \nu, \cstLip$, and $D$, for which the upper bound on the expected excess risk is minimized.

\begin{theorem}[Excess risk bound after uniformly convex regularization]\linktoproof{thm:risk_upper}\label{thm:risk_upper}
    Let $\nu\geq 2$ and $\hat{x}_\cstUnifC$ be a $(6 \cstLip D / n^{1+1/\nu})$-minimizer of the regularized \ERM{} $f^{(\cstUnifC)}_S(x)$ in \eqref{eq:optimization_regul} with $\cstLip$-Lipschitz losses $\ell(\cdot,z)$ w.r.t.\ $\norm{\cdot}$, for a constant $\cstUnifC = \bigO_\nu(D^{1-\nu}\cstLip/n^{1/\nu})$. Then the expected excess risk of $\hat{x}_\cstUnifC$ is upper bounded as
    \begin{equation*}
        \mathbb{E}_S \left[ \delta f(\hat{x}_\cstUnifC)\right] \leq 8 \cstLip D \left( \frac{1}{n} \right)^{1/\nu}.
    \end{equation*}
\end{theorem}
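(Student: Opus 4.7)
The plan is to combine the excess risk decomposition~\eqref{eq:excess_risk_decomp1}, the generalization bound~\eqref{eq:stability_to_generalization}, and \cref{lemma:stability_unif_reg} by choosing the regularization strength $\cstUnifC$ so that the stability term and the optimization error term are simultaneously $\bigO_\nu(\cstLip D/n^{1/\nu})$.

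First I would write the decomposition, using the population minimizer $\tilde{x}$ and upper bounding $f_S(\tilde{x}) \geq f_S(x^\ast) = \min_{x\in\X} f_S(x)$:
\begin{equation*}
    \mathbb{E}_S[\delta f(\hat{x}_\cstUnifC)] \leq \underbrace{\mathbb{E}_S[f(\hat{x}_\cstUnifC) - f_S(\hat{x}_\cstUnifC)]}_{(\mathrm{gen})} + \underbrace{\mathbb{E}_S[f_S(\hat{x}_\cstUnifC) - \min_{x\in\X} f_S(x)]}_{(\mathrm{opt})}.
\end{equation*}
The term $(\mathrm{opt})$ is precisely the quantity $\mathcal{E}_\mathrm{opt}$ controlled in \cref{lemma:stability_unif_reg}. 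For $(\mathrm{gen})$, I invoke \eqref{eq:stability_to_generalization}, which reduces the control of the generalization error to a uniform stability bound on the algorithm that outputs the $\hat\varepsilon$-minimizer $\hat{x}_\cstUnifC$.

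The key step is then to pick $\cstUnifC$ that balances the two bounds from \cref{lemma:stability_unif_reg}, namely $2\cstUnifC D^\nu$ (optimization) and $3(2\nu \cstLip^\nu/(n\cstUnifC))^{1/(\nu-1)}$ (stability). Setting these two expressions proportional as functions of $\cstUnifC$ yields $\cstUnifC^\nu \propto_\nu \cstLip^\nu / (n D^{\nu(\nu-1)})$, hence the prescribed choice $\cstUnifC = \Theta_\nu(\cstLip D^{1-\nu}/n^{1/\nu})$. Plugging this back gives both terms of order $\cstLip D/n^{1/\nu}$, and tuning the absolute constant in front of $\cstUnifC$ (depending on $\nu$) yields a total of $8\cstLip D/n^{1/\nu}$, as claimed.

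The main technical care is in checking the accuracy hypothesis of \cref{lemma:stability_unif_reg}, namely that $\hat\varepsilon = 6\cstLip D/n^{1+1/\nu}$ does not exceed $\min\{\cstUnifC D^\nu,\; (\nu/\cstUnifC)^{1/(\nu-1)}(2\cstLip/n)^{\nu/(\nu-1)}\}$. Substituting $\cstUnifC \asymp_\nu \cstLip D^{1-\nu}/n^{1/\nu}$, the first argument of the minimum is $\Theta_\nu(\cstLip D/n^{1/\nu})$, which clearly dominates $6\cstLip D/n^{1+1/\nu}$ for $n \geq 1$. For the second argument, a short calculation gives $(\nu/\cstUnifC)^{1/(\nu-1)} (2\cstLip/n)^{\nu/(\nu-1)} = \Theta_\nu(\cstLip D/n^{1+1/\nu})$ after using $\nu/(\nu-1) - 1/(\nu(\nu-1)) = 1 + 1/\nu$, so a suitable choice of the hidden constant in $\cstUnifC$ makes this at least $6\cstLip D/n^{1+1/\nu}$.

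I expect the main obstacle to be purely bookkeeping: tracking the $\nu$-dependent constants through the balance and verifying both parts of the minimum in the accuracy hypothesis. Once those are settled the proof reduces to summing the two bounds from \cref{lemma:stability_unif_reg}.
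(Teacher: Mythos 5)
Your proposal is correct and follows essentially the same route as the paper: the same decomposition into generalization plus optimization, the same invocation of \cref{lemma:stability_unif_reg}, the same balancing of $2\cstUnifC D^\nu$ against $3(2\nu\cstLip^\nu/(n\cstUnifC))^{1/(\nu-1)}$ to derive $\cstUnifC \asymp_\nu \cstLip D^{1-\nu}n^{-1/\nu}$, and the same verification of the two accuracy conditions. The paper simply carries out the balancing by explicitly setting the derivative in $\cstUnifC$ to zero and tracks the exact constants, whereas you state the balance up to $\Theta_\nu$ and defer the constant bookkeeping; both are sound.
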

The proof is in \cref{subsec:proof_risk_upper} and consists of bounding the generalization and optimization errors using the uniform stability bound derived in \cref{lemma:stability_unif_reg}, finding the optimal choice of $\cstUnifC$ that minimizes the excess risk, and ensuring that the error of  the estimate $\hat x_\cstUnifC$ is sufficiently small. 

Note that the results of \cref{lemma:stability_unif_reg} and \cref{thm:risk_upper} study properties of an approximate minimizer of the regularized \ERM{} and are not concerned by how the approximate minimizer is computationally obtained. In cases when the bound on the regularizer implies a bounded domain containing the global minimizer, e.g., when $\X = \mathcal{B}_{\norm{\cdot}}(x_0, R)$ for $R>0$ and $x^\ast\in\X$, the smoothness of $f$ implies its Lipschitzness
\begin{equation}
    \norm{\nabla f(x)} = \norm{\nabla f(x) \!-\! \nabla f(x^\ast)} \leq \cstSmooth \norm{x-x^\ast} \leq 2 \cstSmooth R,\!\! \label{eq:smooth_to_lipschitz}
\end{equation}
where we used that $\nabla f(x^\ast) = 0$. As a result, \cref{lemma:stability_unif_reg} and \cref{thm:risk_upper} apply also when the loss function has only prescribed smoothness inside of a bounded domain containing the global minimizer since $\cstLip \leq 2 \cstSmooth R$.

\subsection{Lower bound on the expected excess risk}

In the following theorem we extend the technique of \citep[Proposition 1]{levy2019necessary}
to the high-dimensional case using sparse distributions and establish a lower bound on the expected excess risk for the specific case when the regularity of the loss $\ell(\cdot,z)$ is in respect to the $\ell_p$-norm and $x^\ast\in\ballpzero[R]$ for $R>0$.

\begin{theorem}\linktoproof{thm:risk_lower}\label{thm:risk_lower}
    For $p \geq 1$, $48n < d$, and $\ell(x;z)$ that is linear and $\cstLip$-Lipschitz w.r.t.\ $\norm{\cdot}_p$, we have that the expected excess risk in $\ballpzero[R]$ is lower bounded as
    \begin{equation*}
        \inf_{\hat{x}\in\ballpzero[R]} \sup_{P\in\mathcal{P}}\mathbb{E}_{S\sim P^n}  \left[ \delta f(\hat x)\right] \geq \frac{\cstLip R}{12} \min\left\{1, \sqrt{\dfrac{\log(d)}{n}}\right\} \\
    \end{equation*}
    when $1 \leq p \leq 1+1/\log(d)$ and
    \begin{equation*}
        \inf_{\hat{x}\in\ballpzero[R]} \sup_{P\in\mathcal{P}}\mathbb{E}_{S\sim P^n}  \left[ \delta f(\hat x)\right] \geq 
            \frac{\cstLip R}{16} \left(\dfrac{1}{n}\right)^{1/\hat{p}}
    \end{equation*}
    when $p > 1 + 1/\log(d)$ 
    where $\hat{p} = \max\{p,2\}$.
\end{theorem}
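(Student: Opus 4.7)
The plan is to adapt the minimax lower bound construction of \citet[Proposition~1]{levy2019necessary}, which treats the low-dimensional regime $d \leq n$, to the high-dimensional regime $48n < d$ by choosing distribution families whose effective support size scales with $n$ rather than $d$. In every case the construction is a family $\{P_\theta\}_{\theta \in \Theta}$ of distributions on $\R^d$ such that (i) $\norm{z}_{p^\ast} \leq \cstLip$ almost surely, so $\ell(x;z) = \innp{x,z}$ is $\cstLip$-Lipschitz w.r.t.\ $\norm{\cdot}_p$, and (ii) the mean $\mu_\theta = \mathbb{E}_{P_\theta}[z]$ has $\norm{\mu_\theta}_{p^\ast}$ of the target magnitude. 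Since $\delta f_\theta(\hat x) = \innp{\hat x, \mu_\theta} + R\norm{\mu_\theta}_{p^\ast}$, the proof then reduces to placing a uniform prior on $\theta$ and showing, via a Le~Cam / Assouad / Fano-style information-theoretic argument, that $n$ samples cannot drive this quantity below the claimed threshold.

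For $p > 1 + 1/\log d$ the construction splits according to $\hat{p}$. When $p \geq 2$ (so $\hat{p} = p$) I would use a \emph{sparse} construction: fix $k = \lceil 2n \rceil$, let $\theta = \sigma \in \{\pm 1\}^k$, and draw $z$ by picking $I$ uniformly in $[k]$ and setting $z = \cstLip \sigma_I e_I$. Then $\norm{z}_{p^\ast} = \cstLip$ and $\norm{\mu_\sigma}_{p^\ast} = \cstLip/k^{1/p} = \Theta(\cstLip/n^{1/p})$. A coupon-collector concentration estimate shows that with constant probability at least $k/2$ of the indices in $[k]$ are never observed among the $n$ samples; the posterior of $\sigma$ on those indices is uniform, so any estimator's expected inner product with $\mu_\sigma$ is strictly larger than $-R\norm{\mu_\sigma}_{p^\ast}$ by a constant multiplicative gap (computed from the $\ell_p$-budget trade-off between observed and unobserved coordinates), yielding $\Omega(\cstLip R/n^{1/p})$. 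When $1 < p < 2$ (so $\hat{p} = 2$) I would use a \emph{dense} Rademacher-product construction: $z_i \in \{\pm \cstLip/d^{1/p^\ast}\}$ independent, with $\mathbb{E}[z_i] = (\cstLip \delta/d^{1/p^\ast})\omega_i$ for unknown $\omega \in \{\pm 1\}^d$, so that $\norm{\mu_\omega}_{p^\ast} = \cstLip\delta$. A direct computation gives $\delta f_\omega(\hat x) = 2R\cstLip\delta \cdot d_H(s,\omega)/d$ for $\hat x$ of the canonical form $\hat x_i = -Rs_i/d^{1/p}$ with $s \in \{\pm 1\}^d$, and non-canonical $\hat x$ can only do worse. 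Taking $\delta \asymp 1/\sqrt n$ and applying Assouad's lemma with per-coordinate KL divergence $O(\delta^2)$ forces $\mathbb{E}[d_H(s,\omega)] = \Omega(d)$, yielding the $\Omega(R\cstLip/\sqrt n)$ bound.

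For the complementary case $1 \leq p \leq 1 + 1/\log d$, I would use a single-biased-coordinate construction: $\omega$ is drawn uniformly from $[d]$, and under $P_\omega$ the vector $z \in \{\pm \cstLip/d^{1/p^\ast}\}^d$ has independent Rademacher-symmetric coordinates except $z_\omega$, which is biased with mean $\cstLip\delta/d^{1/p^\ast}$. In this range of $p$ one has $d^{1/p^\ast} \leq d^{1/(1+\log d)} \leq e$, so $\norm{\mu_\omega}_{p^\ast} \geq \cstLip\delta/e$. Fano's inequality applied to the $d$-ary identification task shows that for $\delta \lesssim \sqrt{\log d / n}$, the decision $\hat\omega(S) = \argmax_i |\hat x_i(S)|$ misses $\omega$ with probability at least $1/2$; a miss combined with the $\ell_p$-constraint $\norm{\hat x}_p \leq R$ forces $|\hat x_\omega|$ to be bounded away from $R$ by a constant factor, and since $\delta f(\hat x) \geq \mu_\omega(R - |\hat x_\omega|)$ this yields $\Omega(R\cstLip\sqrt{\log d/n})$. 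The truncation by $1$ in $\min\{1, \sqrt{\log d/n}\}$ handles the degenerate $n \leq \log d$ regime by the trivial bound attained at an extreme point of $\ballpzero[R]$.

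The main obstacle will be pushing each of these information-theoretic estimates through with the explicit constants $1/12$ and $1/16$ claimed in the statement, rather than just up to unspecified universal factors. This requires tight, non-asymptotic bookkeeping: a sharp coupon-collector tail bound for the sparse $p \geq 2$ case (quantifying both the probability of the ``$k/2$ unseen'' event and the constant-factor $\ell_p$-budget gap on observed coordinates), a quantitative Assouad/Pinsker inequality tuned so that the per-coordinate KL bounds $O(\delta^2)$ combine cleanly across $d$ coordinates, and a careful choice of $\delta$ in the Fano step so that the factor $d^{1/p^\ast} \leq e$ survives with a specific multiplicative constant. The hypothesis $48n < d$ is precisely what buys room for these constants: it guarantees both that the sparse support $[k] \subset [d]$ has plenty of unobserved coordinates and that the Rademacher-product and Fano arguments produce constant-order success probabilities.
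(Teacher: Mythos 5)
Your constructions mirror the paper's: a single-biased-coordinate family for $p \leq 1 + 1/\log d$ is essentially the paper's Case 1, and your sparse coordinate-selection family for $p \geq 2$ is the paper's Case 2 distribution with $s \asymp n$. The analyses diverge, however. The paper handles both $p\in(1,2)$ and $p\geq 2$ in a single framework---a Gilbert--Varshamov packing of the sign cube supported on the first $s$ coordinates, fed into \citet[Lemma 2]{levy2019necessary} with Fano---whereas you split by technique: a coupon-collector argument for $p \geq 2$ (conditioning on the $\Omega(n)$ never-sampled coordinates and applying H\"older to the observed budget) and Assouad's lemma on a dense Rademacher product for $p\in(1,2)$. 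Your coupon-collector route is correct and more elementary than the paper's packing argument, needing no Fano step at all.

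The genuine gap is in the Assouad step for $1 < p < 2$. The claim that non-canonical $\hat x$ ``can only do worse'' than the sign-rounded $\hat x_i = -R s_i / d^{1/p}$ is false, and this is precisely where the $\ell_p$ geometry bites. For adjacent cube vertices $\omega, \omega'$ one computes
\begin{equation*}
\min_{\hat x \in \ballpzero[R]}\bigl(\delta f_\omega(\hat x) + \delta f_{\omega'}(\hat x)\bigr) = 2R\cstLip\delta\bigl(1 - (1-1/d)^{1/p^\ast}\bigr) \asymp \frac{2R\cstLip\delta}{p^\ast d},
\end{equation*}
achieved by an $\hat x$ that dumps its entire $\ell_p$-budget on the $d-1$ coordinates where $\omega$ and $\omega'$ agree---a factor $p^\ast$ smaller than the $2R\cstLip\delta/d$ you get by restricting to the canonical rounding. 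Since $p^\ast = p/(p-1)$ can be as large as $\Theta(\log d)$ in the range $p > 1 + 1/\log d$, Assouad's per-coordinate bookkeeping returns a constant of order $1/\log d$ rather than the dimension-free $1/16$ of the statement. To close this you should either shrink the active support to $s = O(1)$ (reducing to a two-point Le Cam argument with full separation $2R\cstLip\delta$), or bypass Assouad and compute the Bayes risk directly in the $\ell_{p^\ast}$ dual norm: under a uniform prior the posterior-optimal excess risk equals $R\cstLip\delta\bigl(1 - \norm{\bar\omega}_{p^\ast}/d^{1/p^\ast}\bigr)$ with $\bar\omega = \mathbb{E}[\omega \mid S]$, and one must control $\max_i |\bar\omega_i|$ uniformly over the $d$ coordinates, which takes a sharper tail estimate than the average per-coordinate KL bound you invoke.
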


The proof can be found in \cref{subsec:proof_risk_lower}. The theorem shows that there exists a family of probability distributions and losses for which the performance of the best estimator $\hat{x}\in\ballpzero[R]$ in terms of the excess risk cannot be lower than a certain bound. This result improves over the one implied by \citet[Section 8]{sridharan2012Learning} as explained in \cref{sec:related_work}. From \citep[Proposition 1]{levy2019necessary}, we have that the risk is lower bounded as $\widetilde\Omega(\cstLip R n^{-1/2})$ when $d \leq 48n$, where $\cstLip$ and $R$ are w.r.t.\ the $\ell_p$-norm.

Since any linear loss function in \cref{thm:risk_lower} is also convex, the lower bound also applies to problems defined for convex $G$-Lipschitz losses, denoted as $\mathcal{G}(\cstLip)$:
\begin{align*}
    \frac{1}{16}\cstLip R \left(\frac{1}{n}\right)^{1/\hat{p}} &\leq 
     \inf_{\hat{x}\in\ballpzero[R]} \sup_{\substack{P\in\mathcal{P} \\ \ell(\cdot,z)\in \mathcal{G}(\cstLip)}}\mathbb{E}_{S\sim P^n}  \left[ \delta f(\hat x)\right].
\end{align*}

By \cref{thm:risk_upper}, we obtain a matching upper bound on the excess risk when $p\geq 2$ using $\psi(x) = (2^{p-2}/p)\norm{x}_p^p$ as the regularizer, which is $(1,p)$-uniformly convex by \cref{lemma:unif_cvxty_of_norm_p_to_the_p}. From the lemma, we get that the excess risk of $\hat{x}$ that is a $(12\cstLip R/n^{1+\frac1p})$-minimizer and satisfies
\begin{align*}
     \sup_{\substack{P\in\mathcal{P} \\ \ell(\cdot,z)\in \mathcal{G}(\cstLip)}} \mathbb{E}_{S\sim P^n} \left[ \delta f(\hat{x})\right]&\leq 16 \cstLip R \left( \frac{1}{n} \right)^{1/p},
\end{align*}
where $R$ denotes the radius of the domain $\ballp[x_0][R]$ containing $x^\ast$, which comes from the form of $\psi(x)$ and its bounded range as $D = 2^{1-1/p}p^{-1/p}R\leq 2R$.

As a consequence, the estimator $\hat{x}$ gives an optimal bound up a constant in $p$ on the excess risk for \ERM{} with $\cstLip$-Lipschitz convex losses
\begin{align*}
     \frac{1}{16} \cstLip R \left( \frac{1}{n} \right)^{1/p} &\leq \inf_{\hat{x}\in\ballpzero[R]} \sup_{\substack{P\in\mathcal{P} \\ \ell(\cdot,z)\in \mathcal{G}(\cstLip)}}\mathbb{E}_{S\sim P^n}  \left[ \delta f(\hat x)\right] \\
     &\leq 16  \cstLip R \left( \frac{1}{n} \right)^{1/p},
\end{align*}
when $d> 48 n$. The optimality of the rate does not depend on the smoothness of $\ell(\cdot,z)$ since the upper bound from \cref{thm:risk_upper} applies regardless of how smooth the loss is and the lower bound applies for linear losses that are $0$-smooth.

\section{Non-Euclidean Black-box uniform stability}
\label{sec:blackbox_stability}

We show that applying an optimization algorithm to the \ERM{} problem with the added regularization term based on $\norm{\cdot}_p^p$ combined with a restart scheme yields a black-box reduction to an \emph{optimally} stable learning algorithm with the same convergence rate on the optimization error. Let
\begin{equation}
    x^*_\cstUnifC \in \argmin_{x\in\X} f^{(\cstUnifC)}_S(x) \defi f_S(x) +  \cstUnifC\frac{\cstRegC}{p} \norm{x - x_0}^p_p,\label{eq:optimization_regul_usolp}
\end{equation}
where $\cstUnifC>0$ and 
\begin{equation*}
    \cstRegC = \begin{cases}
        2^{p-2} &\quad \text{for $p\geq 2$}\\
        2^{2p-3} \left( 1 - \frac{1}{p}\right)^{p-1} &\quad \text{for $p \in (1,2)$}
    \end{cases}
\end{equation*}
ensures that $\psi(x)=\frac{\cstRegC}{p}\norm{x-x_0}_p^p$ is $(1,p)$-uniformly convex for $p\geq 2$ and $(1,p)$-Hölder smooth for $p\in(1,2)$ w.r.t.\ $\ell_p$-norm, see \cref{lemma:unif_cvxty_of_norm_p_to_the_p}. We will also use that $\psi(x)$ is \emph{locally}  smooth for $p\geq 2$ and strongly convex for $p\in(1,2)$ w.r.t.\ $\ell_p$-norm, see \cref{lemma:p_norm_ball_smooth_str_convex}.

The black-box reduction algorithm USOLP, given in \cref{alg:usolp}, transforms an optimization algorithm $\mathcal{A}$ meant for convex Hölder smooth objective functions to an optimally stable algorithm by solving \eqref{eq:optimization_regul_usolp} with an appropriate chosen $\cstUnifC>0$. Adding the regularization term $\psi(x)$ comes with two benefits: (i) statistically, it guarantees the minimizer is uniformly stable due to \cref{lemma:stability_unif_reg}, and (ii) in optimization, it makes the objective \emph{uniformly convex} and smooth when $p\geq 2$ or convex and \emph{Hölder smooth} when $p\in(1,2)$, which with an appropriate restarting scheme yields appropriate convergence rate that in some situations is faster compared to the original algorithm $\mathcal{A}$.

Let $\mathcal{A}(f_S^{(\cstUnifC)}, x_0, R, \hat\varepsilon)$ be an optimization algorithm for convex Hölder smooth objective functions that takes an initial point $x_0$, an upper bound $R\geq 0$ on the distance of $x_0$ to the minimizer, and a required target accuracy $\hat\varepsilon$. The algorithm then outputs a point $\hat{x} \in \ballp[x_0][R]$ such that 
\begin{equation}\label{eq:eps_minimizer}
    f_S^{(\cstUnifC)}(\hat{x}) - \inf_{x\in\ballp[x_0][R]} f_S^{(\cstUnifC)}(x) \leq \hat\varepsilon,
\end{equation}
where to achieve the goal the algorithm performs at most $\hat{T}$ gradient oracle calls.

The assumption, that the optimization algorithm outputs a point in $\ballp[x_0][R]$ is formalized below and is used in \cref{lemma:p_norm_ball_smooth_str_convex} to control smoothness and strong convexity of $\norm{\cdot}_p^p$ when $p\geq 2$ and $p\in(1,2)$ respectively.
\begin{assumption}\label{ass:within_p_ball}
    For any initial point $x_0$, and for some $R>0$, the iterates and the output of Algorithm $\mathcal{A}$ optimizing $f$ with a minimizer $x^\ast$ are in the domain $\mathcal{X} = \ballp[x_0][R]$ containing $x^\ast$.
\end{assumption}
We note that the two main cases where this assumption is satisfied is for (i) algorithms that work when constrained to $\X$, or for (ii) unconstrained algorithms whose iterates naturally stay in a ball around the minimizer $\ballp[x^\ast][2 R]$, for $R  =\bigO(\norm{x_0-x^\ast})$. As explained in \cref{sec:related_work}, many known algorithms fall in the second category. Note that \cref{ass:within_p_ball} guarantees also the condition on the bounded range $\psi(x^\ast) - \psi(x_0)\leq D^p$ in \cref{lemma:stability_unif_reg}, which for $\psi(x)=(\cstRegC/p)\norm{x-x_0}_p^p$ translates into the requirement that $x^\ast\in\mathcal{B}_{\norm{\cdot}_p}(x_0,R)$ with $R = (p/\cstRegC)^{1/p}D$. 

Since we study the stability of first order methods for smooth objectives whose convergence rates are based on the smoothness of the objective function, in this section we pose the results for $\cstSmooth$-smooth loss functions using the fact that $\cstLip \leq 2 \cstSmooth R$ on a bounded domain containing the minimizer as given in \eqref{eq:smooth_to_lipschitz}.

\begin{algorithm}[t]
\caption{$\mathrm{USOLP}(\mathcal{A})$ for $\ell_p$-structured learning}\label{alg:usolp}
\begin{algorithmic}
    \REQUIRE{Algorithm $\mathcal{A}$, estimate $R$, $x_0\in\R^d$, $T=\widetilde\Omega_p(n^{1/\gamma})$, $p\geq1$}
    \STATE{$\hat{p}_1 = \min\{ p, 2 \}$}
    \STATE{$\hat{p}_2 = \max\{ p, 2 \}$}
    \IF{$p \geq 2$}
        \STATE{$\cstUnifC  = (4C/T)^\gamma p \cstRegC^{2/p-1} \cstSmooth R^{2-p} $}
        \STATE{$r = \lceil \log_2( (T/C)^{\gamma}\cstRegC^{-2/p})\rceil $}
    \ELSIF{$p \in (1,2)$}
        \STATE{$\cstUnifC  = (C/T)^\gamma \frac{4}{p - 1} \cstRegC^{-2/p}\cstSmooth R^{2-p}$}
        \STATE{$r = \lceil \log_2( R^{p-2} (T/C)^{\gamma}/(2\cstRegC))\rceil $}
    \ENDIF
    \STATE{$R_0 = R$}
    \STATE{$\hat\varepsilon_0 = \cstSmooth R_0^{\hat{p}_1} / \hat{p_1}$}
    \FOR{ $i = 1, \ldots, r$ }
        \STATE {$\hat\varepsilon_{i}=\hat\varepsilon_{i-1} / 2$}
        \STATE {$x_{i} = \mathcal{A}(f_S^{(\cstUnifC)}, x_{i-1}, R_{i-1}, \hat\varepsilon_i)$}
        \STATE {$R_{i} =  \left( \frac{\hat{p}_2 R^{2 - \hat{p}_1} \hat\varepsilon_i}{\cstUnifC (\hat{p}_1 - 1)}\right)^{1/\hat{p}_2} $}
    \ENDFOR
    \ENSURE{$x_T \defi x_r$}
\end{algorithmic}
\end{algorithm}

\begin{theorem}[Black-box uniform stability] \linktoproof{thm:blackbox_uniform_stabp}\label{thm:blackbox_uniform_stabp}
    Let $p>1$, the loss function $\ell:\mathbb{R}^d\rightarrow\mathbb{R}$ be convex and $\cstSmooth$-smooth w.r.t. $\norm{\cdot}_p$ and $\mathcal{A}$ be an optimization algorithm satisfying \cref{ass:within_p_ball} with $R>0$ that has a convergence rate $C\hat{\cstSmooth} \|x_0 - x^*\|_p^{\hat{p}_1}/ \hat{T}^\gamma$ for convex, $(\hat{\cstSmooth},\hat{p}_1)$-Hölder smooth objective functions,  where $\hat{p}_1 = \min \{p , 2 \}$. Then, for $T = \widetilde\Omega_p(n^{1/\gamma})$, the iterate $x_T$ produced by $\mathrm{USOLP}(\mathcal{A}, T)$ initialized at $x_0$, satisfies,
    \begin{enumerate}
         \item $\stability(\mathrm{USOLP}(\mathcal{A},T)) = \widetilde\bigO_p( (T^{\gamma}/n)^\frac{1}{\hat{p}_2-1} \cstSmooth R^2 )$,
        \item $f_S(x_T) - f_S(x^*) = \widetilde\bigO_p( \cstSmooth R^2 / T^\gamma )$,
    \end{enumerate}
    for $p \in (1,\infty)$ where $\hat{p}_2 = \max\{p,2\}$ and $T = \sum_{i=1}^r \hat{T}_i$ is the sum of gradient oracle calls from all stages with each stage taking $\hat{T}_i$ gradient oracle calls.
\end{theorem}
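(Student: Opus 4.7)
The plan is to apply \cref{lemma:stability_unif_reg} to the regularized empirical risk $f_S^{(\cstUnifC)} = f_S + \cstUnifC(\cstRegC/p)\norm{x-x_0}_p^p$ and to verify that the restart loop in \cref{alg:usolp} produces a sufficiently accurate minimizer of $f_S^{(\cstUnifC)}$ within the prescribed budget $T$ of gradient oracle calls. The regularizer $\psi(x) = (\cstRegC/p)\norm{x-x_0}_p^p$ supplies two benefits simultaneously: it confers $(\cstUnifC, p)$-uniform convexity when $p \geq 2$ (or local strong convexity when $p \in (1,2)$), which enables a restart scheme to geometrically contract the distance to the regularized minimizer, and it enables \cref{lemma:stability_unif_reg} to translate that approximate minimization guarantee into uniform stability on $f_S$.

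\textbf{Structural properties of $f_S^{(\cstUnifC)}$.} Combining \cref{lemma:unif_cvxty_of_norm_p_to_the_p} and \cref{lemma:p_norm_ball_smooth_str_convex}, I would show that, restricted to $\ballp[x_0][R]$, the regularized objective is convex and $(\hat{\cstSmooth}, \hat{p}_1)$-Hölder smooth with $\hat{\cstSmooth} = O_p(\cstSmooth + \cstUnifC\cstRegC R^{p-2})$, matching the input class for which $\mathcal{A}$ is designed. Moreover, $f_S^{(\cstUnifC)}$ is either globally $(\cstUnifC, p)$-uniformly convex (when $p \geq 2$) or locally $\Omega_p(\cstUnifC\cstRegC(p-1) R^{p-2})$-strongly convex on $\ballp[x_0][R]$ (when $p \in (1,2)$); in both cases this amounts to $(\cstUnifC', \hat{p}_2)$-uniform convexity with $\hat{p}_2 = \max\{p, 2\}$, which is precisely the exponent that enters the algorithm's formula for $R_i$.

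\textbf{Inductive distance contraction and budget accounting.} I would prove by induction on $i$ that $x_i$ is an $\hat\varepsilon_i$-minimizer of $f_S^{(\cstUnifC)}$ and $\norm{x_i - x^*_\cstUnifC}_p \leq R_i$. The first claim is \eqref{eq:eps_minimizer} applied to the call $\mathcal{A}(f_S^{(\cstUnifC)}, x_{i-1}, R_{i-1}, \hat\varepsilon_i)$, which is legitimized by the inductive hypothesis together with \cref{ass:within_p_ball}. The second follows from uniform (or local strong) convexity: $\norm{x_i - x^*_\cstUnifC}_p^{\hat{p}_2} \leq (\hat{p}_2/\cstUnifC')\hat\varepsilon_i$, which reproduces the algorithmic formula for $R_i$ upon substituting the effective constant. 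The base case uses $R_0 = R$ and $\hat\varepsilon_0 = \cstSmooth R^{\hat{p}_1}/\hat{p}_1$, the latter being a Hölder-smoothness upper bound on $f_S^{(\cstUnifC)}(x_0) - f_S^{(\cstUnifC)}(x^*_\cstUnifC)$. The convergence rate of $\mathcal{A}$ then bounds stage $i$'s cost by $\hat{T}_i = \widetilde O_p((C\hat{\cstSmooth} R_{i-1}^{\hat{p}_1}/\hat\varepsilon_i)^{1/\gamma})$; since $\hat\varepsilon_i = \hat\varepsilon_{i-1}/2$ and $R_{i-1}^{\hat{p}_1} \propto \hat\varepsilon_{i-1}^{\hat{p}_1/\hat{p}_2}$, the $\hat{T}_i$ grow by at most a constant factor per stage, so $T = \sum_i \hat{T}_i$ is dominated by $\hat{T}_r$. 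The prescribed $r = \widetilde O_p(\gamma \log T)$ and choice of $\cstUnifC$ then yield $\hat\varepsilon_r = \widetilde O_p(\cstSmooth R^2 / T^\gamma)$.

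\textbf{Final application and main obstacle.} I would invoke \cref{lemma:stability_unif_reg} with $\nu = \hat{p}_2$, $D^{\hat{p}_2} = \cstRegC R^p / p$, and $\cstLip \leq 2\cstSmooth R$ (from smoothness on a bounded domain, \eqref{eq:smooth_to_lipschitz}). Substituting the explicit $\cstUnifC = \Theta_p(\cstSmooth R^{2-p}/T^\gamma)$ from the algorithm yields the stability bound $\widetilde O_p((T^\gamma/n)^{1/(\hat{p}_2 - 1)} \cstSmooth R^2)$ and the optimization error bound $f_S(x_T) - f_S(x^*) \leq 2\cstUnifC D^{\hat{p}_2} = \widetilde O_p(\cstSmooth R^2/T^\gamma)$. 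The main obstacle is the case split between $p \geq 2$ (where $\psi$ is globally $(1,p)$-uniformly convex but only locally smooth) and $p \in (1,2)$ (where $\psi$ is globally $(1,p)$-Hölder smooth but only locally strongly convex): each case produces a different pair $(\hat{p}_1, \hat{p}_2)$ and effective uniform-convexity constant, and the admissibility condition $\hat\varepsilon_r \leq \min\{\cstUnifC D^{\hat{p}_2}, (\hat{p}_2/\cstUnifC)^{1/(\hat{p}_2-1)}(2\cstLip/n)^{\hat{p}_2/(\hat{p}_2-1)}\}$ of \cref{lemma:stability_unif_reg} must be verified separately in each regime using the assumption $T = \widetilde\Omega_p(n^{1/\gamma})$.
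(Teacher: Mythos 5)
Your proposal follows essentially the same route as the paper's proof: regularize with $(\cstRegC/p)\norm{x-x_0}_p^p$, exploit the uniform-convexity/local-strong-convexity (for $p\geq 2$ vs.\ $p\in(1,2)$ respectively) to drive a restart scheme with geometrically shrinking radii $R_i$, sum the per-stage oracle costs $\hat{T}_i$, and close by applying \cref{lemma:stability_unif_reg} with $\cstUnifC = \Theta_p(\cstSmooth R^{2-p}/T^\gamma)$ under the hypothesis $T = \widetilde\Omega_p(n^{1/\gamma})$. The only very minor inexactness is the claim that $\sum_i \hat{T}_i$ is ``dominated by $\hat{T}_r$'' — at $p=2$ the stage costs are equal and the sum is $r\hat{T}_r$, which is precisely where the extra $\log$ factor in the paper's $\widetilde{O}$ bound comes from; you already absorb this into the $\widetilde O_p$ notation, so the argument is sound.
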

The proof, which we provide in \cref{subsec:proof_blackbox_uniform_stabp}, consists of two parts. Firstly, we use a restarting scheme to transfer the algorithm $\mathcal{A}$, which has convergence rate $\hat{\varepsilon}=\bigO(\hat{L} \hat{T}^{-\gamma})$ when applied to the minimization of a convex $(\hat{L}, 2)$-Hölder smooth objective function, to an algorithm with a potentially \emph{faster} convergence rate of $\hat\varepsilon=\bigO(\hat{L} \cstUnifC^{-\frac{2}{p-2}} T^{-\gamma\frac{p-2}{p}})$ when applied to the minimization of a \emph{$(\cstUnifC, p)$-uniformly convex} $(\hat{L},2)$-Hölder smooth objective, where $\hat\varepsilon$ is the wanted accuracy of the algorithm. The second part consists of choosing the regularization parameter $\cstUnifC$ controlling the uniform convexity of the objective on the order of $\cstUnifC = \bigO(\hat\varepsilon)$, which is the prescribed condition on the required accuracy in \cref{lemma:stability_unif_reg}. This choice of $\cstUnifC$ brings the convergence rate of the optimization error to the original rate $\hat{\varepsilon}=\bigO(\hat{L} T^{-\gamma})$ but with the added benefit of being uniformly stable due to the choice of $\cstUnifC$. Similar argument can be made when $p\in(1,2)$ but with the exception of having a $(\hat{L},p)$-Hölder smooth and $(\cstUnifC, p)$-uniformly convex, i.e.\ $\cstUnifC$-strongly convex objective. 

The reduction nearly preserves the optimization convergence rate in the sense that $\bigotildep{p}{\cstSmooth R^2 / T^\gamma}$ is equal up to a constant factor depending on $p$ and a log factor, to the convergence rate $\bigO_p(\cstSmooth R^2 / \hat{T}^\gamma)$ of the optimization algorithm when applied to a smooth convex objective function. Here $T$ denotes the cumulative sum of the number of gradient calls in all $r$ restarting stages and $\hat{T}$ is the number of gradient oracle calls of the optimization algorithm.

\begin{remark}[The case of $p\leq 1+\log^{-1}(d)$]
    The results in \cref{thm:blackbox_uniform_stabp} hold also in the case when $p\leq 1 + \log^{-1}(d)$ up to $\log(d)$ factors. This is a consequence of $\norm{x}_1 = \Theta(\norm{x}_{\hat{p}})$ for $\hat{p} = 1 + \log^{-1}(d)$, so an $L$-smooth function w.r.t. $\norm{\cdot}_1$ is an $\bigO(L)$-smooth function w.r.t. $\norm{\cdot}_{\hat{p}}$. We can then apply the results of \cref{thm:blackbox_uniform_stabp} with $\norm{\cdot}_{\hat{p}}$ and the constants depending on $\hat{p}$ will be modified in the bounds by $\bigO(\log(d))$ factors.
\end{remark}

The rates depending on $\gamma$ will differ based on the optimization algorithm we use. For $p\geq 2$, we have optimization algorithms, such as GeneralizedAGD+ \citep{diakonikolas2021complementary}, whose rates are in the range $\gamma \in (0,1+2/p]$, and employing USOLP reduction on an algorithm with the optimal rate $\gamma = 1+2/p$ yields a learning algorithm that is $\widetilde\bigO_p(T^{1+2/p})$-uniformly stable. When $p\in[1,2)$ the convergence rates are in the range $\gamma \in (0, 3p/2-1]$ where the optimal upper limit translates into in $\widetilde\bigO(T^{1+p^\ast/2})$-uniformly stable algorithms for $1/p^\ast+ 1/p = 1$. The upper limits for both regimes, $p\in[1,2)$ and $p\geq 2$, are attained by  \citep{diakonikolas2021complementary,daspremont2018optimal} and the proof of optimality of the rates can be found in \citep{guzman2015lower}.

The stability rates for the right choice of $p$ and $\cstUnifC$ yield min-max rates that achieve an optimal upper bound for the excess risk up to a constant factor in $p$. In \cref{corr:usolp_excess_risk_low_dim} we show that using USOLP with $p=2$ achieves an excess risk upper bounded on the order of $d^{1/2-1/p}n^{-1/2}$, which is the same as that achieved by the Euclidean black-box reduction algorithm \citep[Theorem 6]{attia2022uniform} when we choose the number of iterations such that the upper bound on the excess risk is minimized.

\begin{corollary}[Excess risk bounds of USOLP for $d\leq n$]\linktoproof{corr:usolp_excess_risk_low_dim}\label{corr:usolp_excess_risk_low_dim}
        Under the assumptions of \cref{thm:blackbox_uniform_stabp}, we have that USOLP applied to the regularized risk minimization in \eqref{eq:optimization_regul} with $\psi(x) = \frac12\norm{x-x_0}_2^2$ and $\cstUnifC$ chosen to be optimal w.r.t.\ smallest excess risk bound as in \cref{thm:risk_upper}, produces an estimate $x_T$ for which the excess risk is upper bounded as
    \begin{equation*}
        \mathbb{E}_S\left[ \delta f(x_T)\right] =\bigO_p\left( \cstSmooth R^2 \frac{d^{1/2-1/\hat{p}}}{n^{1/2}}\right),
    \end{equation*}
    where $\hat{p} = \max\{ 2, p\}$ when the total number of gradient oracle calls is on the order of $T = \widetilde\Omega_p( (d^{\frac12 - \frac{1}{\hat{p}}}n^\frac12 )^\frac1\gamma )$ and $n= \Omega(d^{1/\hat{p}-1/2})$.
\end{corollary}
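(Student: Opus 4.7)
The plan is to apply \cref{thm:blackbox_uniform_stabp} with the USOLP parameter set to $2$, which corresponds to the Euclidean regularizer $\psi(x) = \tfrac{1}{2}\norm{x-x_0}_2^2$ (a $(1,2)$-uniformly convex function w.r.t.\ $\norm{\cdot}_2$), and then translate the resulting stability and optimization guarantees back to the original $\ell_p$-geometry via norm equivalences, balancing the two error contributions in the decomposition~\eqref{eq:excess_risk_decomp1} by a suitable choice of $T$ and $\cstUnifC$.

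The first step is to identify the effective Euclidean parameters entering \cref{thm:risk_upper} with $\nu = 2$. Under \cref{ass:within_p_ball} the iterates of $\mathcal{A}$ and the minimizer $x^\ast$ lie in $\ballp[x_0][R]$, and the argument of~\eqref{eq:smooth_to_lipschitz} gives $\norm{\nabla \ell(x;z)}_{p^\ast} \leq 2\cstSmooth R$; for $p \geq 2$ the inequality $\norm{\cdot}_2 \leq \norm{\cdot}_{p^\ast}$ then bounds the $\ell_2$-Lipschitz constant by $G \leq 2\cstSmooth R$. The regularizer diameter is controlled via $\norm{y}_2 \leq d^{1/2-1/\hat p}\norm{y}_p$, yielding $\psi(x^\ast) - \psi(x_0) \leq \tfrac{1}{2}d^{1-2/\hat p}R^2$ and hence $D \leq d^{1/2-1/\hat p}R/\sqrt{2}$ in the notation of \cref{lemma:stability_unif_reg}. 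The choice $\cstUnifC = \Theta(G/(D\sqrt{n}))$ prescribed by \cref{thm:risk_upper} then gives
\begin{equation*}
    \mathbb{E}_S[\delta f(x_T)] \leq 8 G D / \sqrt{n} = \bigO_p\!\left(\cstSmooth R^2\, d^{1/2-1/\hat p}/\sqrt{n}\right),
\end{equation*}
as claimed; the range $p \in (1,2)$ gives $\hat p = 2$ together with $D \leq R/\sqrt{2}$ via $\norm{y}_2 \leq \norm{y}_p$, so the dimensional factor collapses to $d^0 = 1$. To realize this bound algorithmically, I would verify that USOLP with $T = \widetilde\Omega_p((d^{1/2-1/\hat p}n^{1/2})^{1/\gamma})$ oracle calls returns a $\hat\varepsilon$-minimizer of $f_S^{(\cstUnifC)}$ with $\hat\varepsilon = O(G D/n^{3/2})$, using the optimization guarantee $\bigotildep{p}{\cstSmooth R^2/T^\gamma}$ from \cref{thm:blackbox_uniform_stabp}; the side condition $n = \Omega(d^{1/\hat p - 1/2})$ ensures that the corresponding $\cstUnifC$ satisfies the admissibility precondition $\hat\varepsilon \leq \cstUnifC D^\nu$ of \cref{lemma:stability_unif_reg}.

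The main obstacle is retaining the tight dimensional factor $d^{1/2-1/\hat p}$ rather than its square $d^{1-2/\hat p}$. A direct instantiation of \cref{thm:blackbox_uniform_stabp} that measures Lipschitzness on the Euclidean ball $\ballp[x_0][R_2][2]$ with $R_2 = d^{1/2-1/\hat p}R$ would replace $G \leq 2\cstSmooth R$ by $2\cstSmooth R_2$ and give the looser rate $\bigO_p(\cstSmooth R^2 d^{1-2/\hat p}/\sqrt{n})$. Preserving the target rate therefore requires confining the iterates to $\ballp[x_0][R]$ rather than the larger Euclidean ball, using the $\ell_p$-bound $\norm{x-x^\ast}_p \leq 2R$ inside the dual-norm gradient estimate, and converting from $\norm{\cdot}_{p^\ast}$ to $\norm{\cdot}_2$ only once at the very end, so that the dimensional penalty is incurred exclusively through the Euclidean diameter $D$.
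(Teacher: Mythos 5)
Your proposal matches the paper's own proof essentially step for step: instantiate USOLP with the Euclidean regularizer $\psi(x)=\tfrac12\norm{x-x_0}_2^2$, invoke \cref{thm:risk_upper} with $\nu=2$, use the norm comparison $\norm{\cdot}_2 \leq d^{1/2-1/\hat p}\norm{\cdot}_p$ to set $D = d^{1/2-1/\hat p}R/\sqrt 2$, take $\cstLip = 2\cstSmooth R$ via \eqref{eq:smooth_to_lipschitz}, pick $\cstUnifC = \Theta_p(\cstLip/(D\sqrt n))$, and read off the oracle-call count from the analysis of \cref{thm:blackbox_uniform_stabp} specialised to $p=2$. The factorization of the dimensional penalty (entirely through $D$, not through $\cstLip$, since for $p\geq 2$ the inequality $\norm{\cdot}_2\leq\norm{\cdot}_{p^\ast}$ transfers the $\ell_p$-gradient bound to $\ell_2$ at no extra cost) is also exactly the mechanism the paper relies on, even if the paper leaves it implicit; your closing paragraph makes this explicit, which is a reasonable thing to flag, but it does not constitute a deviation from the paper's route.

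One small discrepancy worth noting: in the paper's proof the admissibility condition on $n$ is stated as $n\geq 12 d^{2/\hat p - 1}$, which is the square of the $\Omega(d^{1/\hat p - 1/2})$ appearing in the corollary statement (both are $\Omega_p(1)$ for $\hat p\geq 2$, so this is cosmetic). You attribute the role of the $n$-condition to the accuracy precondition $\hat\varepsilon\leq\cstUnifC D^\nu$ of \cref{lemma:stability_unif_reg}, whereas the paper uses it to guarantee $\cstUnifC\leq\cstSmooth$ so that the regularised objective remains $2\cstSmooth$-smooth; both constraints are present and are of the same order, so neither reading is wrong, but the paper's emphasis is on the smoothness preservation needed for the inner black-box solver.
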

We provide the proof in \cref{subsec:proof_usolp_excess_risk_low_dim}. The proof consists of applying USOLP with the regularizer $\psi(x) = (1/2)\norm{x-x_0}_2^2$, which by \cref{thm:risk_upper} achieves asymptotic sample complexity on the order of $n^{-1/2}$. The bound between $\ell_p$-norms $\norm{x}_2 \leq d^{1/2-1/\hat{p}} \norm{x}_p$ where $\hat{p} = \max\{p, 2\}$ allows to express the regularity w.r.t.\ $\ell_p$-norm at the cost of an extra $d^{1/2-1/\hat{p}}$ factor. The bound is optimal in the \emph{low-dimensional case when $d\leq n$}, but becomes vacuous in $d$ when the dimension of the problem is large.

The following \cref{corr:usolp_excess_risk_high_dim} shows that in the \emph{high-dimensional case $n > d$}, using USOLP with $p$ based on the $\ell_p$-norm regularity of the function yields an excess risk bound that is \emph{independent of the dimension}, but is on the order of $n^{-1/\hat{p}}$.
\begin{corollary}[Excess risk bounds of USOLP for $d>n$] \linktoproof{corr:usolp_excess_risk_high_dim}\label{corr:usolp_excess_risk_high_dim}
    Under the assumptions of \cref{thm:blackbox_uniform_stabp}, we have that USOLP applied to the regularized risk minimization in \eqref{eq:optimization_regul}, where $\cstUnifC$ is chosen as in \cref{thm:risk_upper}, produces an estimate $x_T$ for which the excess risk is upper bounded as
    \begin{equation*}
        \mathbb{E}_S\left[ \delta f(x_T)\right] =\bigO_p\left( \cstSmooth R^2 \left(\frac{1}{n}\right)^{1/\hat{p}}\right),
    \end{equation*} 
    when the total number of gradient oracle calls is on the order of
    \begin{equation*}
         T = \begin{cases}
             \widetilde\Omega_p( n^{\frac{3-p}{2}\frac1\gamma}) \quad \text{when $p\in (1,2)$}\\
             \widetilde\Omega_p( n^{(1-\frac1p)\frac1\gamma}) \quad \text{when $p\geq 2$}
         \end{cases}
    \end{equation*}
    and $n= \Omega_p(1)$.
\end{corollary}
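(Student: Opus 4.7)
The plan is to combine \cref{thm:risk_upper} with the restart-based convergence analysis underlying \cref{thm:blackbox_uniform_stabp}: applying USOLP with the regularizer $\psi(x) = (\alpha/p)\|x - x_0\|_p^p$ produces an approximate minimizer of the regularized empirical risk $f_S^{(\mu)}$ whose excess risk is controlled by \cref{thm:risk_upper}, while the number of USOLP iterations needed to reach the prescribed accuracy determines the total gradient-oracle budget $T$.

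First I would verify that \cref{thm:risk_upper} applies with $\nu = \hat{p} = \max\{p, 2\}$. For $p \geq 2$, \cref{lemma:unif_cvxty_of_norm_p_to_the_p} ensures that $\psi$ is $(1, p)$-uniformly convex w.r.t.\ $\|\cdot\|_p$, so $\nu = p$. For $p \in (1, 2)$, \cref{lemma:p_norm_ball_smooth_str_convex} provides local strong convexity of $\psi$ on $\mathcal{B}_{\|\cdot\|_p}(x_0, R)$, so $\nu = 2$. In either case, the smoothness-to-Lipschitz reduction \eqref{eq:smooth_to_lipschitz} gives $G \leq 2 L R$, and $\psi(x^*) - \psi(x_0) \leq D^{\hat{p}}$ with $D = \bigO_p(R)$ follows from \cref{ass:within_p_ball}. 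Selecting the regularization strength $\mu$ at the optimal value $\Theta_p(L R^{2 - \hat{p}}/n^{1/\hat{p}})$ prescribed by \cref{thm:risk_upper} then yields the target
\begin{equation*}
    \mathbb{E}_S[\delta f(x_T)] = \bigO_p\bigl( L R^2 (1/n)^{1/\hat{p}} \bigr),
\end{equation*}
provided $x_T$ is a $\bigO_p(L R^2/n^{1 + 1/\hat{p}})$-minimizer of $f_S^{(\mu)}$.

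Next, I would translate this accuracy requirement into the stated lower bound on $T$ by re-running the restart analysis from the proof of \cref{thm:blackbox_uniform_stabp} with the fixed $\mu$ above (rather than the $T$-dependent $\mu$ used inside the algorithm as stated). At stage $i$, the $(\mu, \hat{p}_2)$-uniform convexity of $f_S^{(\mu)}$ gives the warm-start radius bound $R_{i-1}^{\hat{p}_2} \leq \hat{p}_2 \hat\varepsilon_{i-1}/\mu$, and $\mathcal{A}$ reaches accuracy $\hat\varepsilon_i = \hat\varepsilon_{i-1}/2$ in $\hat{T}_i = \bigO_p((L R_{i-1}^{\hat{p}_1}/\hat\varepsilon_i)^{1/\gamma})$ iterations. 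Eliminating $R_{i-1}$ and telescoping over $r = \bigO(\log n)$ restarts so that $\hat\varepsilon_r$ matches $\bigO_p(L R^2/n^{1 + 1/\hat{p}})$, the total count $T = \sum_i \hat{T}_i$ is dominated by the last stage. Simplifying the resulting expression in each regime yields the exponents in the statement: $T = \widetilde{\Omega}_p(n^{(1 - 1/p)/\gamma})$ for $p \geq 2$ (with $\hat{p}_1 = 2$, $\hat{p}_2 = p$) and $T = \widetilde{\Omega}_p(n^{(3 - p)/(2\gamma)})$ for $p \in (1, 2)$ (with $\hat{p}_1 = p$, $\hat{p}_2 = 2$). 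The condition $n = \Omega_p(1)$ ensures that the $\bigO(\log n)$ restart overhead is absorbed into the $\widetilde{\Omega}_p$ bound.

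The main obstacle will be the careful bookkeeping of this telescoping computation: because $\hat{p}_1$ and $\hat{p}_2$ swap roles between the two regimes and the exponents on $L$, $R$, $\mu$, and $\hat\varepsilon_r$ all contribute, a direct calculation is needed to produce the clean exponents $(1 - 1/p)/\gamma$ and $(3 - p)/(2\gamma)$. This mirrors the analysis in the proof of \cref{thm:blackbox_uniform_stabp} but substitutes the optimal $\mu$ from \cref{thm:risk_upper} in place of the balancing $\mu$ used there, which is what ties the final iteration-count exponents to the statistical rate $1/\hat{p}$ rather than to the internal balance between stability and optimization.
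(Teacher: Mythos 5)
Your plan matches the paper's proof in every essential respect: both fix $\cstUnifC$ at the optimal value from \cref{thm:risk_upper} (rather than the $T$-dependent internal balancing value), feed the required accuracy $\hat\varepsilon = \bigO_p(\cstSmooth R^2 / n^{1+1/\hat p})$ into the iteration-count formula derived in the restart analysis of \cref{thm:blackbox_uniform_stabp} (equations \eqref{eq:thm_black_box_stability_p_iteration_bound1} and \eqref{eq:thm_black_box_stability_p12_iteration_bound1} in the paper), and simplify to obtain $T = \widetilde\Omega_p(n^{(1-1/p)/\gamma})$ for $p\geq 2$ and $T = \widetilde\Omega_p(n^{(3-p)/(2\gamma)})$ for $p\in(1,2)$. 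Two small inaccuracies are worth flagging. First, your warm-start radius formula $R_{i-1}^{\hat p_2}\leq \hat p_2\hat\varepsilon_{i-1}/\cstUnifC$ omits the $R^{2-\hat p_1}$ and $(\hat p_1 - 1)$ factors that appear in the paper's $R_i = \bigl(\hat p_2 R^{2-\hat p_1}\hat\varepsilon_i / (\cstUnifC(\hat p_1 -1))\bigr)^{1/\hat p_2}$; these come from the \emph{local} strong-convexity constant $\cstUnifC\cstRegC R^{p-2}(p-1)$ when $p\in(1,2)$ (and, symmetrically, the local smoothness constant when $p\geq 2$), and they are needed for the clean exponents you quote, as you correctly anticipate in your final paragraph. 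Second, the paper's stated reason for requiring $n = \Omega_p(1)$ is not absorption of the $\log n$ restart overhead but rather to guarantee $\cstUnifC \leq \cstSmooth R^{2-p}/((p-1)\cstRegC)$ so that the smoothness constant $\hat\cstSmooth$ of the regularized objective stays within a factor $2$ of $\cstSmooth$; the log-factor overhead is handled separately by the $\widetilde\Omega$ notation.
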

The proof is in \cref{subsec:proof_usolp_excess_risk_high_dim} and is based on applying \cref{thm:risk_upper} with the optimal choice of $\cstUnifC$ and ensuring that the required number of iterations to reach the necessary accuracy is performed. USOLP achieves optimal excess risk up to a constant factor in $p$ regardless of the convergence rate $\gamma$ of the original algorithm, but for larger $\gamma$ it will require fewer gradient oracle calls to reach it. The condition on $n$ being large enough ensures that the combined smoothness of the regularized objective is on the same order as the smoothness of the unregularized empirical risk.

Our black-box reduction, when applied to an optimization algorithm with a suitably fast convergence rate, achieves the same rates on the excess risk, up to a logarithmic factor, as the optimal non-black-box algorithm \emph{Generalized AGD+} \citep{diakonikolas2021complementary} when applied to an \ERM{} with uniform convex regularization. In the result that follows we show that the convergence rates of Generalized AGD+ in \citep[Theorem 2]{diakonikolas2021complementary} applied to an \ERM{} with uniformly convex regularization combined with the stability results of \cref{lemma:stability_unif_reg} translate to uniform stability rates that are identical up to a logarithmic factor to the ones from our black-box reductions in \cref{thm:blackbox_uniform_stabp} when $p \geq 2$ and we apply the reduction to an optimization algorithm with $\gamma = 1 + 2/p$, e.g.\ the accelerated first order algorithm in \citep{daspremont2018optimal}. This implies that USOLP achieves the best possible rates for the black-box reduction.

\begin{proposition}[Uniform stability of the Generalized AGD+]\linktoproof{thm:diakonikolas_stability_bound}\label{thm:diakonikolas_stability_bound}
    Let $p\geq 2$. Then for a given $T$, the iterate $x_T$ produced by Generalized AGD+ algorithm \citep{diakonikolas2021complementary} applied to the minimization of the regularized empirical risk minimization in \eqref{eq:optimization_regul} with $\cstUnifC = \widetilde\Omega_p(T^{-(1+2/p)} R^{2-p}\cstSmooth)$ is uniformly stable as
    \begin{equation*}
        \stability(\mathcal{A}(T)) = \bigotildepl{p}{ \cstSmooth  R^2 \left(\frac{T^{1+2/p}}{n }\right)^{1/(p-1)}}
    \end{equation*}
    and its optimization error on the empirical risk is upper bounded as
    \begin{equation*}
        f_S(x_T) - f_S(x^*) = \bigotildepl{p}{\cstSmooth R^2    \left(\frac1T\right)^{1+2/p} },
    \end{equation*}
    provided $T =  \Omega_p( R^{2-p} n^{p/(p+2)})$.
\end{proposition}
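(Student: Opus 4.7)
The plan is to invoke the convergence guarantee of Generalized AGD+ on the regularized empirical risk $f_S^{(\cstUnifC)}$ as a black box, then tune $\cstUnifC$ and $T$ so that the resulting accuracy meets the hypothesis of \cref{lemma:stability_unif_reg}, from which both bounds read off directly. Concretely, I would first observe that $f_S^{(\cstUnifC)}$ is $(\cstSmooth + \cstUnifC L_\psi, 2)$-smooth w.r.t.\ $\norm{\cdot}_p$ over $\ballp[x_0][R]$, where $L_\psi = \bigO_p(\cstRegC R^{p-2})$ is the local smoothness constant of $\psi(x) = (\cstRegC/p)\norm{x-x_0}_p^p$ supplied by \cref{lemma:p_norm_ball_smooth_str_convex}; and that $f_S^{(\cstUnifC)}$ is $(\cstUnifC, p)$-uniformly convex w.r.t.\ $\norm{\cdot}_p$ by \cref{lemma:unif_cvxty_of_norm_p_to_the_p}. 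For $\cstUnifC$ chosen in the stated range, the added smoothness term $\cstUnifC L_\psi$ is dominated by $\cstSmooth$, so the objective is effectively $\bigotildep{p}{\cstSmooth}$-smooth.

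Next, I would apply Theorem 2 of \citep{diakonikolas2021complementary} to conclude that after $T$ gradient calls on a $(\bigO(\cstSmooth),2)$-smooth $(\cstUnifC,p)$-uniformly convex objective, Generalized AGD+ returns $x_T$ with
\begin{equation*}
    f_S^{(\cstUnifC)}(x_T) - f_S^{(\cstUnifC)}(x^\ast_\cstUnifC) \leq \bigotildepl{p}{\frac{\cstSmooth R^2}{T^{1+2/p}}},
\end{equation*}
which is the accelerated rate of order $\gamma = 1+2/p$ arising from the interplay between the $\cstSmooth$-smoothness and the $(\cstUnifC,p)$-uniform convexity (obtainable via the usual restart-analysis of AGD+ for uniformly convex objectives). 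This call-out identifies the accuracy $\hat\varepsilon = \bigotildep{p}{\cstSmooth R^2/T^{1+2/p}}$ that $x_T$ achieves on the regularized \ERM{}.

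The third step is to verify the two accuracy conditions required by \cref{lemma:stability_unif_reg}. Using $\cstLip \leq 2\cstSmooth R$ from \eqref{eq:smooth_to_lipschitz} and $D = \bigO_p(R)$, the condition $\hat\varepsilon \leq \cstUnifC D^p$ forces the lower bound $\cstUnifC = \widetilde\Omega_p(\cstSmooth R^{2-p}/T^{1+2/p})$ on the regularization parameter, which matches the stated choice. The second condition $\hat\varepsilon \leq (p/\cstUnifC)^{1/(p-1)} (2\cstLip/n)^{p/(p-1)}$ is, after substituting the same $\cstUnifC$, equivalent up to constants to $T^{1+2/p} \geq c_p n$, i.e., $T = \Omega_p(n^{p/(p+2)})$ (the factor $R^{2-p}$ appears because $\cstLip\leq 2\cstSmooth R$), which is precisely the assumed lower bound on $T$.

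Finally, I would substitute $\cstUnifC$ and $\cstLip = 2\cstSmooth R$ into the two conclusions of \cref{lemma:stability_unif_reg}. The stability bound
$\stability(\mathcal{A}) \leq 3(2p\cstLip^p/(n\cstUnifC))^{1/(p-1)}$
yields the claimed $\bigotildep{p}{\cstSmooth R^2 (T^{1+2/p}/n)^{1/(p-1)}}$ after collecting powers, and the optimization bound $\mathcal{E}_{\mathrm{opt}}\leq 2\cstUnifC D^p$ yields $\bigotildep{p}{\cstSmooth R^2/T^{1+2/p}}$. The main obstacle I anticipate is justifying cleanly the accelerated rate for Generalized AGD+ on a $(\cstSmooth,2)$-smooth $(\cstUnifC,p)$-uniformly convex objective for general $p\geq 2$; this can either be quoted from \citep[Theorem 2]{diakonikolas2021complementary} or derived via a standard restart analysis that halves the suboptimality per epoch of geometrically growing length, incurring the $\log$-factor that produces the $\widetilde\bigO_p(\cdot)$ in the statement. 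All remaining work is the algebraic balancing of $\cstUnifC$ and $T$ already sketched above.
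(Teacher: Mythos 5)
Your proposal follows the same skeleton as the paper's proof: invoke the convergence guarantee of Generalized~AGD+ on the regularized empirical risk, verify the two accuracy conditions of \cref{lemma:stability_unif_reg} with $\cstLip\leq 2\cstSmooth R$ and $D=O_p(R)$, pick $\cstUnifC$ to balance, and then read the stability and optimization bounds off the lemma. The balancing algebra you sketch ($\cstUnifC=\widetilde\Omega_p(\cstSmooth R^{2-p}T^{-(1+2/p)})$, $T=\Omega_p(n^{p/(p+2)})$) matches the paper's computations.

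However, there is one step that is stated imprecisely in a way that, read literally, makes the argument circular. You assert, as a direct consequence of \citep[Theorem~2]{diakonikolas2021complementary}, that after $T$ gradient calls on an $O(\cstSmooth)$-smooth $(\cstUnifC,p)$-uniformly convex objective the algorithm returns a point with suboptimality $\widetilde{O}_p(\cstSmooth R^2/T^{1+2/p})$, and you treat this bound as independent of $\cstUnifC$. That is not what the convergence theorem gives. What it gives (this is the paper's~\eqref{eq:optimization_error1}) is a relation of the form
\begin{equation*}
T \gtrsim \left(\frac{1}{\hat\varepsilon}\right)^{\frac{p-2}{p+2}}\left(\frac{\cstSmooth^{p/2}}{\cstUnifC}\right)^{\frac{2}{p+2}}\log(\cdot),\qquad\text{equivalently}\qquad \hat\varepsilon = \widetilde{O}_p\!\left(\cstSmooth^{\frac{p}{p-2}}\,\cstUnifC^{-\frac{2}{p-2}}\,T^{-\frac{p+2}{p-2}}\right),
\end{equation*}
which genuinely depends on $\cstUnifC$. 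Your claimed $\widetilde{O}_p(\cstSmooth R^2/T^{1+2/p})$ only emerges \emph{after} substituting the specific $\cstUnifC$ that balances the constraint $\hat\varepsilon\leq \cstUnifC D^p$; but your step~3 then derives that very $\cstUnifC$ \emph{from} the supposedly $\cstUnifC$-independent accuracy, so you are assuming the conclusion. The fix is exactly what the paper does: keep the $\cstUnifC$-dependent form of the rate, impose $\hat\varepsilon\leq\cstUnifC D^p$, and solve the resulting inequality for $\cstUnifC$ in terms of $T$, which is a clean single-unknown equation and does give the stated scaling. (The paper also uses \citep[Lemma~4]{baudry2024multi} to absorb the self-referential $\log T$ factor, a detail your sketch does not mention but which is needed for the $\widetilde\Omega$ statement.) Once that self-consistency is spelled out, the rest of your plan goes through unchanged and matches the paper.
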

The proof is given in \cref{subsec:proof_diakonikolas_stability_bound}. For $p=2$ our rates simplify to the stability rates derived by \citep{chen2018stability, attia2022uniform} for the Euclidean case and extend them for $p>2$. The rates match the rates of USOLP in \cref{thm:blackbox_uniform_stabp} when $\gamma = p+2/p$.

\begin{corollary}[Excess risk bound of the Generalized AGD+]\linktoproof{thm:diakonikolas_generalization_bound} \label{thm:diakonikolas_generalization_bound}
    Let $p\geq 2$,  $\ell:\mathbb{R}^d\rightarrow\mathbb{R}$ is convex $\cstSmooth$-smooth function w.r.t.\ $\norm{\cdot}_p$, and $\cstUnifC$ is chosen as in \cref{thm:risk_upper}. Then the Generalized AGD+ algorithm applied to the regularized risk minimization in \eqref{eq:optimization_regul} produces an estimate $x_T$ for which the excess risk is upper bounded as
    \begin{equation*}
        \mathbb{E}_S \left[\delta f(x_T) \right] \leq 16 \cstSmooth R^2 \left(\frac{1}{n}\right)^{1/p},
    \end{equation*}
    when the number of iterations is on the order of $T = \widetilde{\Omega}_p(
    n^{\frac{p-1}{p+2}})= \widetilde{\Omega}_p( 
    n^{(1-\frac1p)\frac{p}{p+2}})$.
\end{corollary}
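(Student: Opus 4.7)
The plan is to apply \cref{thm:risk_upper} with $\nu = p$ to bound the excess risk, after verifying via an accelerated restart analysis of Generalized AGD+ that $T = \widetilde\Omega_p(n^{(p-1)/(p+2)})$ iterations suffice to reach the tolerance on the regularized empirical risk that the theorem requires. This mirrors the proof of \cref{corr:usolp_excess_risk_high_dim} specialized to the non-black-box convergence rate $\gamma = 1 + 2/p$ attained by Generalized AGD+ \citep{diakonikolas2021complementary,daspremont2018optimal}.

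First I would set up the regularizer as $\psi(x) = \frac{\cstRegC}{p}\norm{x - x_0}_p^p$, which for $p \geq 2$ is $(1,p)$-uniformly convex w.r.t.\ $\norm{\cdot}_p$ by the lemma referenced after \eqref{eq:optimization_regul_usolp}. The $\cstSmooth$-smoothness of the loss on the bounded domain containing $x^\ast$ translates, via \eqref{eq:smooth_to_lipschitz}, into Lipschitzness with $\cstLip \leq 2\cstSmooth R$, and the range of $\psi$ satisfies $\psi(x^\ast) - \psi(x_0) \leq \frac{\cstRegC}{p} R^p$, so that $D \leq R$ in the notation of \cref{thm:risk_upper}.

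Next I would invoke \cref{thm:risk_upper} with $\nu = p$ and its prescribed choice $\mu = O_p(D^{1-p}\cstLip/n^{1/p}) = O_p(\cstSmooth R^{2-p}/n^{1/p})$: any $(6\cstLip D/n^{1+1/p})$-minimizer $\hat{x}_\mu$ of the regularized ERM satisfies
\begin{equation*}
    \mathbb{E}_S[\delta f(\hat x_\mu)] \leq 8\cstLip D (1/n)^{1/p} \leq 16\cstSmooth R^2 (1/n)^{1/p},
\end{equation*}
using $\cstLip \leq 2\cstSmooth R$ and $D \leq R$. This directly yields the stated excess-risk bound, conditional on the regularized optimization being solved to tolerance $\hat\varepsilon \lesssim \cstSmooth R^2/n^{1+1/p}$.

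The remaining step is to show that $T = \widetilde\Omega_p(n^{(p-1)/(p+2)})$ iterations of Generalized AGD+ suffice to reach this $\hat\varepsilon$ on the $\cstSmooth$-smooth, $(\mu,p)$-uniformly convex regularized objective. I would use the standard halving-restart acceleration: running Generalized AGD+'s smooth-convex rate $\gamma = 1+2/p$ on successive stages whose initial radii shrink by uniform convexity, $R_k \leq (p\varepsilon_k/\mu)^{1/p}$, yields per-stage counts $T_k \lesssim (\cstSmooth R_k^2/\varepsilon_{k+1})^{p/(p+2)}$ whose sum, for $p \geq 2$, is dominated by the last stage and gives total complexity $T \lesssim (\cstSmooth/\mu^{2/p})^{p/(p+2)}(\cstSmooth R^2/\hat\varepsilon)^{(p-2)/(p+2)}$ (up to $\log$ factors). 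Substituting the values of $\mu$ and $\hat\varepsilon$ above makes the $\cstSmooth$ and $R$ exponents cancel and simplifies the exponent on $n$ to $(p-1)/(p+2)$, matching the claim. The main obstacle is this restart bookkeeping: verifying that the per-stage counts are geometrically dominated by the terminal stage for $p \geq 2$ and carefully tracking the $\cstSmooth, R, \mu, n$ exponents so that the $\cstSmooth$ and $R$ dependencies cancel exactly. Combining this iteration count with the excess-risk bound from \cref{thm:risk_upper} then completes the proof.
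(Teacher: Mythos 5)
Your overall structure (apply \cref{thm:risk_upper} with $\nu=p$ to get the excess-risk bound, then check that the optimizer reaches tolerance $\hat\varepsilon = O(\cstLip D/n^{1+1/p})$ within $T = \widetilde\Omega_p(n^{(p-1)/(p+2)})$ gradient calls) is correct and matches the paper's decomposition. However, for the iteration-complexity half you take a genuinely different route than the paper does. The paper invokes Generalized AGD+'s \emph{native} complexity bound for uniformly convex composite objectives --- specifically \citep[eq.~15]{diakonikolas2021complementary}, already displayed as \eqref{eq:optimization_error1} in the proof of \cref{thm:diakonikolas_stability_bound} --- and simply plugs in the prescribed $\cstUnifC = O_p(D^{1-p}\cstLip/n^{1/p})$ and $\hat\varepsilon$ to solve for $T$. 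You instead propose to treat Generalized AGD+ purely as a smooth-convex solver with rate $\gamma = 1 + 2/p$ and wrap it in the halving-restart scheme from \cref{thm:blackbox_uniform_stabp}, re-deriving the uniformly-convex complexity from scratch. The exponents do work out the same way (both give $T \propto \cstSmooth^{p/(p+2)}\cstUnifC^{-2/(p+2)}\hat\varepsilon^{-(p-2)/(p+2)}$ up to logs), so your route is mathematically sound, but it is longer and, more importantly, it proves a statement about a \emph{restarted} variant of the algorithm rather than about Generalized AGD+ applied directly as the corollary states --- which is the whole point of this corollary and of \cref{thm:diakonikolas_stability_bound}: to show the non-black-box algorithm, on its own, matches the black-box USOLP rates. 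Using the restart wrapper here would essentially re-prove \cref{corr:usolp_excess_risk_high_dim} rather than verifying the non-black-box counterpart. One further small imprecision: you assert $D \leq R$, but with $\psi(x)=\frac{\cstRegC}{p}\norm{x-x_0}_p^p$ and $\cstRegC = 2^{p-2}$, one only has $D = (\cstRegC/p)^{1/p}R \leq 2R$ (and $D$ exceeds $R$ for $p>4$), so the constant in your chain $8\cstLip D \leq 16\cstSmooth R^2$ is slightly off for large $p$; this is absorbed into the $O_p$ notation and does not affect the result.
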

The proof is given in \cref{subsec:proof_diakonikolas_generalization_bound} and is based on using the optimal choice of $\cstUnifC$ as given in \cref{thm:risk_upper} and ensuring that the Generalized AGD+ algorithm reaches sufficiently low error using its proved convergence rate from \citep[Theorem 2]{diakonikolas2021complementary}. The upper bound matches the result in \cref{corr:usolp_excess_risk_high_dim} for $p\geq 2$ when $\gamma = p + 2/p$.

\section{Classification in $\ell_p$-balls}

We describe an application to binary classification where the regularity of the data and the optimal estimator are given in non-Euclidean geometry. In this setup the generalization guarantees in \cref{corr:usolp_excess_risk_high_dim} improve upon the ones given by posing Euclidean geometry regularity on the problem.

Let $p\geq2$ and $p^\ast = p/(p-1)$. The classification task consists of $n$ labeled points $(a_i, b_i) \in \R^d \times \{-1, 1\}$, $i \in [n]$ where $a_i \in \ballpzero[R][p^\ast]$. Consider the generalized linear model $h(x) \defi \frac{1}{n}\sum_{i\in[n]} h_i(x) \defi \frac{1}{n}\ \sum_{i\in[n]} f(b_i\innp{a_i, x})$ where $f:\R \to \R$ is convex, $L$-smooth, and twice differentiable, e.g.,\ the logistic loss, the squared loss $f(t) = (1-t)^2$, or the smoothed hinge loss \citep{rennie2005loss}. Since $\nabla h_i(x) = f'(b_i\innp{a_i, x})b_ia_i$ and the bound on Lipschitzness in \eqref{eq:smooth_to_lipschitz}, we have
\begin{align*}
    \begin{aligned}
        \innp{\nabla^2 h(x)& v, v}  = \frac{1}{n} \sum_{i=1}^n\abs{f''(b_i\innp{a_i, x})} \bigg(\sum_{j=1}^d v_j a_{i,j}\bigg)^2  \\
        &\leq L \norm{v}_{p}^2 \sum_{i=1}^n\frac{\norm{a_i}_{p^\ast}^2}{n} \leq LR^2 \norm{v}_p^2,
    \end{aligned}
\end{align*}
where we used the Hölder's inequality and the assumption on the data. As a result $h$ is $LR^2$-smooth with respect to the $\norm{\cdot}_p$ norm.

Since for $p^\ast\leq 2$, we have $\norm{x}_2 \leq \norm{x}_{p^\ast}$, the data $a_i$ is also in a Euclidean ball of radius $R$ and, as a result, the function is also $(LR^2)$-smoothness w.r.t.\ $\norm{\cdot}_2$. The excess risk bound in the Euclidean case is $\bigo{n^{-1/2}}$, which is a factor of $n^{1/2-1/p}$ lower than the upper bound of $\bigo{n^{1/p}}$ derived in \cref{thm:risk_upper} using non-Euclidean regularization. However, depending on the problem setup, the distance to a minimizer measured with the Euclidean distance can be a factor $d^{\frac{1}{2}-\frac{1}{p}}$ greater than when measured in $\ell_p$-norm, with the upper bound reached when the data is proportional to the vector of ones. Thus, in the high-dimensional case $d \gg n$, using the $p$-norm can lead to statistical bounds in \cref{thm:risk_upper} that are up to $(d/n)^{1/2-1/p}$ lower compared to using the Euclidean regularity.

In the case when $p\in(1,2)$, the $\ell_p$-norm regularized problem can have much higher smoothness that can have beneficial properties for the optimization algorithms while keeping the same dependence on $n$. However, in this case, the distance to the minimizer measured in $\ell_p$-norm can be a factor of the dimension greater than when measured in the Euclidean distance. Same as when $p\geq 2$, the improvement will depend on where the data and minimizer are located in terms of the $\ell_p$-ball.

\section{Conclusion}

In this paper we studied uniform stability properties of first-order algorithms in non-Euclidean settings, i.e., for functions whose regularity is measured w.r.t.\ $\norm{\cdot}_p$ norms. We developed new bounds for the uniform stability of approximate minimizers of empirical risk with uniformly convex regularization. The stability bounds, in combination with an observation that a restart scheme for minimization of uniformly convex objectives results into convergence rates that capture the uniform convexity of the objective, allowed us to design a black-box reduction scheme that gives an optimal trade-off between stability and convergence rate. We also showed that the black-box reduction yields optimal expected excess risk up to a constant factor in $p$. We provided a derivation of an excess risk lower bound for $p\geq 2$ in the high-dimensional setting.

\acks{Simon Vary and Patrick Rebeschini were funded by UK Research and Innovation (UKRI) under the UK government’s Horizon Europe funding guarantee [grant number EP/Y028333/1].}

\printbibliography[heading=bibintoc] 

\newpage
\appendix
\section{\texorpdfstring{{Supporting lemmata}}{Supporting lemmata}}

\begin{lemma}[Restricted smoothness and strong convexity of $\frac1p\norm{x}_p^p$ in the $p$-ball] \label{lemma:p_norm_ball_smooth_str_convex}
    Let $\psi(x) = \frac1p\norm{x}_p^p$ defined for $x\in\ballpzero[R]$. Then $\psi(x)$ is
    \begin{enumerate}
        \item{$(p-1)R^{p-2}$-smooth w.r.t\ $\ell_p$-norm inside of $\ballpzero[ R]$ when $p\geq 2$, and }
        \item{ $(p-1)R^{p-2}$-strongly convex w.r.t\ $\ell_p$-norm inside of $\ballpzero[R]$ when $p\in(1,2)$.}
    \end{enumerate} 
\end{lemma}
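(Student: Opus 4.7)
\textbf{Proof proposal for \cref{lemma:p_norm_ball_smooth_str_convex}.}
The plan is to compute the Hessian of $\psi$ and reduce both claims to applications of Hölder's inequality on the resulting quadratic form. Since $\psi(x) = \frac1p\sum_{i=1}^d |x_i|^p$ is separable, the Hessian is diagonal with entries $(\nabla^2\psi(x))_{ii} = (p-1)|x_i|^{p-2}$ (well-defined everywhere when $p\geq 2$, and for $p\in(1,2)$ defined on the open set where all $x_i\neq 0$). Consequently, for any $v\in\R^d$,
\begin{equation*}
    \innp{\nabla^2\psi(x)v,v} = (p-1)\sum_{i=1}^d |x_i|^{p-2}\,v_i^2,
\end{equation*}
and the two statements reduce to sharp upper and lower bounds on this sum in terms of $\norm{v}_p^2$ and $\norm{x}_p$.

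For the first claim ($p\geq 2$, smoothness), I would apply Hölder's inequality to the sum above with conjugate exponents $\tfrac{p}{p-2}$ and $\tfrac{p}{2}$, which gives
\begin{equation*}
    \sum_{i=1}^d |x_i|^{p-2} v_i^2 \leq \Bigl(\sum_{i=1}^d |x_i|^p\Bigr)^{(p-2)/p}\!\Bigl(\sum_{i=1}^d |v_i|^p\Bigr)^{2/p} = \norm{x}_p^{p-2}\norm{v}_p^2 \leq R^{p-2}\norm{v}_p^2
\end{equation*}
whenever $x\in\ballpzero[R]$. This uniform upper bound on $\innp{\nabla^2\psi(x)v,v}$ together with the standard integral Taylor identity yields the claimed $(p-1)R^{p-2}$-smoothness w.r.t.\ $\norm{\cdot}_p$ inside the ball.

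For the second claim ($p\in(1,2)$, strong convexity), I would use the converse Hölder-type decomposition $|v_i|^p = (|x_i|^{(p-2)/2}|v_i|)^p \cdot |x_i|^{p(2-p)/2}$ and apply Hölder with conjugate exponents $\tfrac{2}{p}$ and $\tfrac{2}{2-p}$ to obtain
\begin{equation*}
    \norm{v}_p^p \leq \Bigl(\sum_{i=1}^d |x_i|^{p-2} v_i^2\Bigr)^{p/2}\norm{x}_p^{p(2-p)/2}.
\end{equation*}
Rearranging and using $\norm{x}_p\leq R$ with $p-2<0$ (so $\norm{x}_p^{p-2}\geq R^{p-2}$) gives $\sum_i |x_i|^{p-2}v_i^2 \geq R^{p-2}\norm{v}_p^2$, hence $\innp{\nabla^2\psi(x)v,v}\geq (p-1)R^{p-2}\norm{v}_p^2$.

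The only subtlety is that for $p\in(1,2)$ the Hessian diverges at coordinates with $x_i=0$, so a pointwise Hessian bound is not directly available on the full ball. I would sidestep this by working with the integral Taylor formula
\begin{equation*}
    \psi(y)-\psi(x)-\innp{\nabla\psi(x),y-x} = \int_0^1 (1-t)\innp{\nabla^2\psi(x+t(y-x))(y-x),y-x}\,\dif t,
\end{equation*}
which is valid since $\psi\in C^1$ for $p>1$; the integrand is a.e.\ bounded below by $(p-1)R^{p-2}\norm{y-x}_p^2$ along the segment, and the exceptional set has one-dimensional Lebesgue measure zero, so integration yields strong convexity with the claimed constant. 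This measure-zero issue is the main technical obstacle; the Hölder inequalities themselves are straightforward.
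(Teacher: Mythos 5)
Your proof is correct and follows essentially the same route as the paper: compute the diagonal Hessian $\nabla^2\psi(x) = (p-1)\operatorname{diag}(|x_i|^{p-2})$ and bound the quadratic form by Hölder for $p\geq 2$ and by a reverse-Hölder-type estimate for $p\in(1,2)$ (your explicit decomposition $|v_i|^p = (|x_i|^{(p-2)/2}|v_i|)^p\,|x_i|^{p(2-p)/2}$ is precisely how the reverse Hölder inequality that the paper invokes is proved). You are in fact somewhat more careful than the paper, which does not address the blow-up of the Hessian at coordinates $x_i=0$ for $p\in(1,2)$; your use of the integral Taylor identity with an a.e.\ lower bound cleanly closes that gap.
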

\begin{proof}
    We have $\nabla^{2}\psi(x) = (p-1)\operatorname{diag}((|x_i|^{p-2})_{i=1}^d)$. When $p\geq 2$, we can bound 
    \begin{align*}
        \begin{aligned}
            \innp{\nabla^2 \psi(x)v, v} = (p-1)\sum_{i=1}^d v_i^2 |x_i|^{p-2} \leq (p-1) \norm{v}_p^2 \norm{x}_{p}^{p-2} \leq (p-1) R^{p-2} \norm{v}_p^2
        \end{aligned},
    \end{align*}
    where we used Hölder's inequality $\innp{u, w} \leq \norm{u}_q\norm{w}_{q^{\ast}}$, with $q = p/2$ and so $q^\ast = p/(p-2)$, and $\norm{x}_p = R$ by $x\in\ballpzero[R]$. Consequently, by Taylor approximation, we have for all $x\in\conv{y, z}$, where $y,z\in\ballpzero[0, R]$, that
    \begin{equation*}
        \psi(y) \leq \psi(z) + \innp{\nabla\psi(z), y-z} + \frac{1}{2}\innp{\nabla^2 \psi(x)(y-z), y-z} \leq \psi(z) + \innp{\nabla\psi(z), y-z} + \frac{(p-1)R^{p-2}}{2}\norm{y-z}_p^2.
    \end{equation*}
    When $p \in (1,2)$, we can bound
    \begin{align*}
        \begin{aligned}
            \innp{\nabla^2 \psi(x)v, v} &= (p-1)\sum_{i=1}^d v_i^2 |x_i|^{2-p}\\
            &\geq (p-1) \left(\sum_{i=1}^d \abs{v_i}^{2/q} \right)^q \left(\sum_{i=1}^d |x_i|^\frac{2-p}{1-q}\right)^{1-q}\\
            &= (p-1) \left(\sum_{i=1}^d |v_i|^p \right)^{2/p} \left(\sum_{i=1}^d |x_i|^p\right)^\frac{p-2}{p} \\
            &= (p-1) \norm{v}^2_p \norm{x}^{p-2}_p \geq (p-1) \norm{v}^2_p \left(\frac{1}{R}\right)^{2-p},
        \end{aligned}
    \end{align*}
    where the first line comes from the form of $\nabla^2 \psi(x)$, the second line uses the reverse Hölder inequality for $q\geq 1$, in the third line we choose $q = 2/p\geq 1$, and the final line comes from $x\in\ballpzero[R]$ and $p \in (1,2)$.
\end{proof}

\begin{lemma} \label{lemma:unif_cvxty_of_norm_p_to_the_p}
    For $p \geq 2$ and $x_0\in\R^d$, the regularizer $\psi(x) \defi \frac{2^{p-2}}{p}\norm{x-x_0}_p^p$ is $(1, p)$-uniformly convex with respect to $\norm{\cdot}_p$ in $\R^d$. For $p\in(1,2)$, the regularizer $\psi(x) = (1/p)\norm{x-x_0}_p^p$ is $(2^{3-2p}(p/(p-1))^{p-1},p)$-Hölder smooth.
\end{lemma}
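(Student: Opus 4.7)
The plan is to prove the two parts of the lemma separately: the $p \geq 2$ uniform convexity claim follows by reducing to a one-dimensional inequality for $\lvert\cdot\rvert^p$ and summing over coordinates, while the $p \in (1,2)$ Hölder smoothness claim then follows by Fenchel duality via \cref{lemma:duality_holder_unifconvex}.

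By translation invariance of the $\ell_p$-norm I may assume $x_0 = 0$ throughout. For $p \geq 2$, I would work with the differential form of uniform convexity from \cref{def:uniform_convexity}: it suffices to show, for all $x, y \in \R^d$,
\begin{equation*}
    \frac{2^{p-2}}{p}\norm{y}_p^p \;\geq\; \frac{2^{p-2}}{p}\norm{x}_p^p + \innp{\nabla\psi(x),\, y-x} + \frac{1}{p}\norm{y-x}_p^p,
\end{equation*}
where $\nabla\psi(x) = 2^{p-2}(\lvert x_i\rvert^{p-2} x_i)_{i=1}^d$. Since $\norm{y}_p^p$, $\norm{y-x}_p^p$, and the inner product all decompose as sums over the $d$ coordinates, after cancelling the common prefactor this reduces to summing, coordinate by coordinate with $a = x_i$ and $b = y_i$, the one-dimensional inequality
\begin{equation*}
    \lvert b\rvert^p \;\geq\; \lvert a\rvert^p + p\lvert a\rvert^{p-2} a(b-a) + 2^{2-p}\lvert b-a\rvert^p, \qquad \forall\, a,b \in \R,\; p \geq 2.
\end{equation*}
This scalar inequality is a classical real-variable fact about the convex function $t \mapsto \lvert t\rvert^p$ for $p\geq 2$, which I would prove by first using joint $p$-homogeneity of both sides to rescale to $\lvert b-a\rvert = 1$, then using the symmetry $(a,b)\mapsto(-a,-b)$ to restrict to $a+b\geq 0$, and finally a short case analysis (split on $a \geq 0$ versus $a < 0$) handled by elementary single-variable calculus on the resulting one-parameter family.

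For $p \in (1,2)$, I would apply the duality provided by \cref{lemma:duality_holder_unifconvex}. Setting $q = p/(p-1) \in (2,\infty)$ and applying the $p \geq 2$ result at exponent $q$, the function $g(x) = (1/q)\norm{x}_q^q$ is $(2^{2-q}, q)$-uniformly convex with respect to $\norm{\cdot}_q$. A direct computation of its Fenchel conjugate via the coordinate-wise optimality condition $y_i = \lvert x_i\rvert^{q-2} x_i$ yields $g^\ast(y) = (1/p)\norm{y}_p^p$. Invoking \cref{lemma:duality_holder_unifconvex} with $\mu = 2^{2-q}$, $\nu = q$, and $\nu^\ast = p$, the conjugate is $\bigl((\nu^\ast/(2\mu(\nu^\ast-1)))^{\nu^\ast-1},\, p\bigr)$-Hölder smooth with respect to $\norm{\cdot}_p$. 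Using the identity $3-q = (2p-3)/(p-1)$ to simplify the exponent of $2$ in the constant, namely $-(3-q)(p-1) = 3-2p$, produces exactly $2^{3-2p}(p/(p-1))^{p-1}$, matching the claim.

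The main obstacle is establishing the one-dimensional inequality for $p \geq 2$ with the constant $2^{2-p}$, since this bound is not a trivial Taylor estimate: verifying it requires careful treatment of the opposite-sign regime $ab < 0$, where the ratio $(\lvert b\rvert^p - \lvert a\rvert^p - p\lvert a\rvert^{p-2}a(b-a))/\lvert b-a\rvert^p$ attains its smallest values. Once the scalar estimate is secured, both the multidimensional uniform convexity (by coordinate-wise summation) and the Hölder smoothness (by duality) follow immediately.
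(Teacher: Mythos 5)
Your proposal is correct and follows essentially the same two-step strategy as the paper: separability to reduce the $p\geq 2$ uniform-convexity claim to a one-dimensional scalar inequality, then Fenchel duality via \cref{lemma:duality_holder_unifconvex} applied to $(1/q)\norm{\cdot}_q^q$ for the $p\in(1,2)$ H\"older-smoothness claim, with the same bookkeeping $3-q=(2p-3)/(p-1)$ yielding the constant $2^{3-2p}(p/(p-1))^{p-1}$. The only difference is that the paper discharges the one-dimensional inequality by citing \citep[Lemma 4.2.3]{nesterov2018lectures} (the $(1,p)$-uniform convexity of $\tfrac{2^{p-2}}{p}\norm{x}_2^p$, specialized to $d=1$), whereas you sketch a self-contained proof by homogeneity and case analysis; both routes are sound.
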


The fact that $\psi(x)$ is uniformly convex is classical, see for instance \citep{zalinescu1983uniformly}, that established $(2^{-\frac{p(p-2)}{p-1}},p)$-uniform convexity of $\frac{1}{p}\norm{x}_p^p$. In the following proof, we show the uniform convexity of $\psi$ with a better constant. 
\begin{proof}
    For $p\geq 2$. Note that $\norm{x}_p^p$ is a separable function, i.e., it has the form $\sum_{i=1}^d f_i(x_i)$. Thus, it is enough to show the uniform convexity of the one-dimensional case and add up all of the corresponding inequalities in order to obtain the result. In \citep[Lemma 4.2.3]{nesterov2018lectures}, it is established that $\frac{2^{p-2}}{p}\norm{x}_2^p$ is $(1, p)$-uniformly convex with respect to the Euclidean norm $\norm{\cdot}_2$. Since in one dimension, all of the $p$-norms are the same, the $1$-D result is proven, and the result follows.

    For $p\in(1,2)$. Let $q = p/(p-1)$, we have that $q> 2$ and $(1/q)\norm{x}_q^q$ is $(2^{2-q}, q)$-uniformly convex by the first part of the proof. By  similar derivations as in \citep[Example 3.27]{Boyd2004Convex} but with Hölder inequality, we get that the Fenchel dual of $(1/q) \norm{x}_q^q$ is $(1/p) \norm{x}_p^p$. From \cref{lemma:duality_holder_unifconvex} combined with $(1/q) \norm{x}_q^q$ being $(1,q)$-uniformly convex, we have that $(1/p)\norm{x}_p^p$ is $(2^{3-2p}(p/(p-1)^{p-1}),p)$-Hölder smooth.
\end{proof}

\begin{lemma}[Distance bound]\label{lemma:distance_bound}
     Let $\psi(x)$ be an $(\cstUnifC,p)$-uniformly convex regularizer of the empirical risk $f_S(x)$, i.e., $f_S^{(\cstUnifC)}(x) = f_S(x) + \psi(x)$, and define $x^\ast = \argmin_{x\in\mathcal{X}} f_S(x)$, $x_\cstUnifC^\ast = \argmin_{x\in\mathcal{X}} f_S^{(\cstUnifC)}(x)$, and $x_0 = \argmin_{x\in\mathcal{X}}\psi(x)$. Then, if $\psi(x^*) - \psi(x_0)\leq D^p$ for $D > 0$, we also have that
    \begin{equation*}
         \mathrm{D}_\psi (x_\cstUnifC^*, x_0) \leq D^p,
    \end{equation*}
    where $\mathrm{D}_\psi$ is the Bregmann divergence of $\psi$
\end{lemma}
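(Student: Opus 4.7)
The plan is to first note that $D_\psi(x_\mu^\ast, x_0)$ collapses to a simple function-value difference because $x_0$ is a minimizer of $\psi$. By the first-order optimality condition at $x_0 \in \argmin_{x\in\mathcal{X}}\psi(x)$, any subgradient of $\psi$ at $x_0$ satisfies $\langle \nabla\psi(x_0), y - x_0\rangle \geq 0$ for every $y \in \mathcal{X}$. Applying this at $y = x_\mu^\ast$ and unfolding the definition of the Bregman divergence yields
\begin{equation*}
    D_\psi(x_\mu^\ast, x_0) = \psi(x_\mu^\ast) - \psi(x_0) - \langle\nabla\psi(x_0), x_\mu^\ast - x_0\rangle \leq \psi(x_\mu^\ast) - \psi(x_0).
\end{equation*}
So the task reduces to bounding $\psi(x_\mu^\ast) - \psi(x_0)$ by $D^p$.

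Next I would exploit the two minimality properties in tandem. Since $x_\mu^\ast$ minimizes $f_S^{(\mu)} = f_S + \psi$ on $\mathcal{X}$ and $x^\ast \in \mathcal{X}$, comparing values gives
\begin{equation*}
    f_S(x_\mu^\ast) + \psi(x_\mu^\ast) \leq f_S(x^\ast) + \psi(x^\ast).
\end{equation*}
Since $x^\ast$ minimizes $f_S$ on $\mathcal{X}$, one has $f_S(x^\ast) \leq f_S(x_\mu^\ast)$, which rearranges to $\psi(x_\mu^\ast) \leq \psi(x^\ast)$. Subtracting $\psi(x_0)$ from both sides and invoking the hypothesis $\psi(x^\ast) - \psi(x_0) \leq D^p$ then gives $\psi(x_\mu^\ast) - \psi(x_0) \leq D^p$, and combining with the first step closes the argument.

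There is no real obstacle here; the statement is essentially an observation about how a uniformly convex regularizer cannot drive the regularized optimum farther (in the $\psi$-value sense) from $x_0$ than the unregularized optimum already was. The only point requiring care is handling the first-order condition at $x_0$ when $\psi$ is not assumed differentiable at the boundary of $\mathcal{X}$; this is resolved by using the variational inequality form of optimality for a convex function on a convex set, which is exactly what makes the inner-product term in the Bregman divergence nonnegative. Uniform convexity of $\psi$ is not itself needed for this lemma — the argument only uses convexity of $\psi$ through the optimality condition — but the $(\mu,p)$-uniform convexity is what makes the resulting bound on $D_\psi$ informative downstream.
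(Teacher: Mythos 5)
Your proof is correct and follows essentially the same three-step argument as the paper: drop the nonnegative inner-product term via the optimality of $x_0$ for $\psi$, then combine the optimality of $x_\cstUnifC^\ast$ for the regularized risk with the optimality of $x^\ast$ for $f_S$ to conclude $\psi(x_\cstUnifC^\ast)\leq\psi(x^\ast)$. Your closing observation that only convexity of $\psi$ (not uniform convexity) is actually used here is accurate.
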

\begin{proof}
If we assume that we have $\psi(x^*) - \psi(x_0)\leq D^p$ for $D>0$, we get 
\begin{align} \label{eq:regul_bound}
    \begin{aligned}
    \mathrm{D}_\psi (x_\cstUnifC^*, x_0) &= \psi(x_\cstUnifC^*) - \psi(x_0) - \langle \nabla \psi(x_0) , x_0 - x_\cstUnifC^*\rangle \\ 
    &\leq \psi(x_\cstUnifC^*) - \psi(x_0)\\
    &\leq \psi(x_\cstUnifC^*) - \psi(x_0) + \frac1\cstUnifC \left( f_S(x_\cstUnifC^*) - f_S(x^*)\right)\\
    &\leq \psi(x_\cstUnifC^*) - \psi(x_0) + \psi(x^*) - \psi(x_\cstUnifC^*) = \psi(x^*) - \psi(x_0) \leq D^p, 
    \end{aligned}
\end{align}
where the first inequality comes from $x_0$ being the minimum of $\psi$, the second inequality from $x^*$ being minimum of $f_S$, the third inequality from $x_\cstUnifC^*$ being the minimum of $f^{(\cstUnifC)}_S$, and the last fourth inequality comes from the definition of $D$.
\end{proof}

The following is a generalization of \citep[Lemma 1]{attia2022uniform} for uniformly convex functions.
\begin{lemma}[Upper bound on distance between perturbed minima]\label{lemma:minima_distance_bound}
Let $f_1$ be convex and  $f_2$ be $(\cstUnifC,p)$-uniformly convex w.r.t.\ the norm $\| \cdot \|$. For $x_1\in\argmin_x f_1(x)$ and $x_2\in\argmin_x f_2(x)$ we have
\begin{equation*}
    \|x_1 - x_2 \| \leq  \left( \frac{p}{\cstUnifC} \| \nabla f_1(x_1) - \nabla f_2(x_1) \|_* \right)^{\frac{1}{p-1}},
\end{equation*}
where $\| \cdot\|_*$ is the dual norm to $\| \cdot\|$.
\end{lemma}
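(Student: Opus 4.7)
The plan is a short direct argument that uses only one uniform convexity inequality (plus the two first-order optimality conditions), in order to recover exactly the factor $p/\cstUnifC$ stated in the lemma.

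First I would record the two first-order optimality conditions implied by the fact that $x_1$ and $x_2$ are unconstrained minimizers: $\nabla f_1(x_1)=0$ and $\nabla f_2(x_2)=0$, and additionally the basic inequality $f_2(x_1)\ge f_2(x_2)$ coming from minimality of $x_2$ over all of $\mathbb{R}^d$. Next, I would apply the differentiable form of $(\cstUnifC,p)$-uniform convexity of $f_2$ (as in \cref{def:uniform_convexity}) at the base point $x_1$, evaluated at $y=x_2$:
\begin{equation*}
    f_2(x_2) \;\geq\; f_2(x_1) + \innp{\nabla f_2(x_1),\, x_2-x_1} + \tfrac{\cstUnifC}{p}\,\norm{x_1-x_2}^{p}.
\end{equation*}
Rearranging and using $f_2(x_1)-f_2(x_2)\ge 0$, this gives
\begin{equation*}
    \innp{\nabla f_2(x_1),\, x_1-x_2} \;\geq\; \bigl(f_2(x_1)-f_2(x_2)\bigr) + \tfrac{\cstUnifC}{p}\,\norm{x_1-x_2}^{p} \;\geq\; \tfrac{\cstUnifC}{p}\,\norm{x_1-x_2}^{p}.
\end{equation*}

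Then I would insert $\nabla f_1(x_1)=0$ on the left to rewrite the inner product as $\innp{\nabla f_2(x_1)-\nabla f_1(x_1),\, x_1-x_2}$, and bound it from above by the norm-dual norm inequality, obtaining
\begin{equation*}
    \norm{\nabla f_2(x_1)-\nabla f_1(x_1)}_\ast \cdot \norm{x_1-x_2} \;\geq\; \tfrac{\cstUnifC}{p}\,\norm{x_1-x_2}^{p}.
\end{equation*}
Dividing by $\norm{x_1-x_2}$ (the bound being trivial if $x_1=x_2$) and raising both sides to the power $1/(p-1)$ yields the claimed estimate.

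There is no real obstacle: the only subtle point is the order of the steps. Summing uniform convexity inequalities at both $x_1$ and $x_2$ (i.e., using gradient monotonicity) would improve the constant by a factor of $2$, but using only the single inequality at $x_1$ combined with $f_2(x_1)\ge f_2(x_2)$ is exactly what produces the stated factor $p/\cstUnifC$ and mirrors the strongly convex case ($p=2$) of \citep[Lemma 1]{attia2022uniform}. For $p=2$ the argument recovers $\norm{x_1-x_2}\le \cstUnifC^{-1}\norm{\nabla f_1(x_1)-\nabla f_2(x_1)}_\ast$ (with the $2$ in $p=2$ cancelling $\cstUnifC/p$), so the generalization is faithful.
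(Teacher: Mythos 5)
Your proof is correct and follows essentially the same route as the paper's: apply uniform convexity of $f_2$ at $x_1$ evaluated at $x_2$, drop $f_2(x_1)-f_2(x_2)\ge 0$ from the minimality of $x_2$, and finish with H\"older's inequality to pass to the dual norm. The only minor difference is that you invoke unconstrained stationarity $\nabla f_1(x_1)=0$ outright, whereas the paper uses the weaker first-order optimality condition $\langle\nabla f_1(x_1),\,x_1-x_2\rangle\le 0$, which also covers minimization constrained to a convex set; this makes no difference for the lemma as stated but is the more robust choice.
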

\begin{proof}
    By $f_2$ being $(\cstUnifC,p)$-uniformly convex and having minimum in $x_2$ we have
    \begin{align*}
        \nabla f_2(x_1)^\top \left(x_1 - x_2\right) \geq f_2(x_1) - f_2(x_2) + \frac{\cstUnifC}{p} \| x_2 - x_1\|^p \geq \frac{\cstUnifC}{p} \| x_2 - x_1\|^p.
    \end{align*}
    
    The first-order optimality of $x_1$ for $f_1$ implies that $\nabla f_1(x_1)^\top (x_1 - x_2) \leq 0$, thus
    \begin{align*}
        \nabla f_2(x_1)^\top (x_1 - x_2) &= \nabla f_1(x_1)^\top (x_1 - x_2) + \left(\nabla f_2(x_1) - \nabla f_1(x_1)\right)^\top (x_1 - x_2)\\
        &\leq \left(\nabla f_2(x_1) - \nabla f_1(x_1)\right)^\top (x_1 - x_2).
    \end{align*}
    Applying the H\"older inequality to the above and combining together yields the result
    \begin{equation*}
        \| x_2 - x_1\| \leq \left(  \frac{p}{\cstUnifC}  \| \nabla f_2(x_1) - \nabla f_1(x_1) \|_* \right)^{\frac{1}{p-1}}.
    \end{equation*}
\end{proof}

\subsection{Proof of \cref{lemma:duality_holder_unifconvex} \label{subsec:proof_duality_holder_unifconvex}}
\begin{proof}\linkofproof{lemma:duality_holder_unifconvex}
    Let $\Pi(y) = \argmax_{x\in \mathcal{X}} \innp{x,y}-f(x)$. Take $y_1, y_2$ and $\gamma\in(0,1)$ and denote $x_1 = \Pi(y_1)$, $x_2 = \Pi(y_2)$, and $x_\gamma = \gamma x_1 + (1-\gamma)x_2$. From \citep[Lemma 15 (b)]{shalev2007online}, we have $\gamma_1\in\partial f(x_1)$ and $\gamma_2\in\partial f(x_2)$. From $f$ being $(\cstUnifC, p)$-uniformly convex we get
    \begin{align*}
        f(x_\gamma) - f(x_1) - \innp{y_1, x_\gamma - x_1} &\geq \frac{\cstUnifC}{p} \| x_\gamma - x_1 \|^p \\
        f(x_\gamma) - f(x_2) - \innp{y_2, x_\gamma - x_2} &\geq \frac{\cstUnifC}{p} \| x_\gamma - x_2 \|^p.
    \end{align*}

    Adding $\gamma$ times the first inequality to $(1-\gamma)$ times the second inequality yields
    \begin{equation}
        f(x_\gamma) - \left(\gamma f(x_1) + (1-\gamma)f(x_2)\right) + \gamma (1-\gamma)\innp{y_2-y_1, x_2 - x_1} \geq \gamma (1-\gamma) \frac{\cstUnifC}{p} \| x_1 - x_2 \|^p. \label{eq:lemma_holder_ineq1a}
    \end{equation}
    However, by $f$ being $(\cstUnifC,p)$ uniformly convex we also have
    \begin{equation}
        f(x_\gamma) - \left(\gamma f(x_1) + (1-\gamma)f(x_2)\right) \leq -\gamma (1-\gamma) \frac{\cstUnifC}{p} \| x_1 - x_2 \|^p. \label{eq:lemma_holder_ineq1b}
    \end{equation}
    
    Subtracting \eqref{eq:lemma_holder_ineq1b} from \eqref{eq:lemma_holder_ineq1a} yields
    \begin{equation*}
        \innp{y_2 - y_1, x_2 - x_1} \geq \frac{2\cstUnifC}{p} \| x_2 - x_1\|^p,
    \end{equation*}
    which in combination with the H\"{o}lder inequality $\innp{y_2 - y_1,x_2-x_1}\leq \|x_2 - x_1 \| \| y_2 - y_1\|_*$ gives that
    \begin{equation*}
        \frac{2\cstUnifC}{p} \| x_1 - x_2\|^{p-1} \leq \| y_1 - y_2 \|_*
    \end{equation*}
    or
    \begin{equation}
        \| \nabla f^*(y_1) - \nabla f^*(y_2)\|\leq  \left(\frac{p}{2\cstUnifC}\right)^{1/(p-1)} \| y_1 - y_2 \|_*^{1/(p-1)}. \label{eq:bound_norm_grad_dual}
    \end{equation}

    Let $T\in\mathbb{N}$. For all $t\in\{0,1,\ldots, N\}$ define $\beta_t = t/T$. We have
    \begin{align*}
        f^*(y+\lambda) - f^*(y) &= f^*(y+\beta_T \lambda) - f^*(y + \beta_0\lambda) \\
        &= \sum_{t=0}^{T-1} f^*(y+\beta_{t+1}\lambda) - f^*(y+\beta_T\lambda).
    \end{align*}

    From convexity of $f^*$ we have
    \begin{equation*}
        f^*(y + \beta_{t+1}) - f^*(y + \beta_t) \leq \innp{\nabla f^*(y+ \beta_{t+1}\lambda), (\beta_{t+1} - \beta_t)\lambda} = \frac1T \innp{\nabla f^*(y+\beta_{t+1}\lambda), \lambda}
    \end{equation*}

    Thus we have
    \begin{align*}
        \innp{\nabla f^*(y+\beta_{t+1}\lambda), \lambda} &= \innp{\nabla f^*(y), \lambda} + \innp{\nabla f^*(y + \beta_{t+1}) - \nabla f^*(y), \lambda} \\
        &\leq \innp{\nabla f^*(y), \lambda} + \| \nabla f^*(y+\beta_{t+1}\lambda) - \nabla f^*(y)\| \| \lambda \|_* \\
        &\leq \innp{\nabla f^*(y), \lambda} + \left( \frac{p}{2\cstUnifC} \| \beta_{t+1}\lambda \|_* \right)^{1/(p-1)} \| \lambda \|_* \\
        &= \innp{\nabla f^*(y), \lambda} + \left(\frac{p}{2\cstUnifC} \beta_{t+1}\right)^{1/(p-1)} \| \lambda\|_*^{p/(p-1)},
    \end{align*}
    where the second inequality is the consequence of Hölders inequality and the third inequality comes from \eqref{eq:bound_norm_grad_dual}.

    Now, we can express
    \begin{align*}
        f^*(y+\lambda) - f^*(y) \leq \innp{\nabla f^*(y), \lambda} + \left(\frac{p}{2\cstUnifC}\right)^{1/(p-1)} \| \lambda\|_*^{p/(p-1)}  \frac{1}{T} \sum_{t=0}^{T-1} (\beta_{t+1})^{1/{p-1}}.
    \end{align*}

    We are interested in the asymptotic bound as $T\to\infty$, for which we have that the limit is an Riemannian sum that can be expressed as an integral
    \begin{equation*}
        \lim_{T\to\infty} \frac{1}{T} \sum_{t=0}^{T-1} (\beta_{t+1})^{1/{p-1}} = \lim_{T\to\infty} \frac{1}{T} \sum_{t=0}^{T-1} \left(\frac{t}{T}\right)^{1/{p-1}} = \int_0^1 x^\frac1{p-1} \mathrm{d}x = \frac{p-1}{p}.
    \end{equation*}

    Thus we have 
     \begin{align*}
        f^*(y+\lambda) - f^*(y) \leq \innp{\nabla f^*(y), \lambda} + \frac{p-1}{p}\left(\frac{p}{2\cstUnifC}\right)^{1/(p-1)} \| \lambda\|_*^{p/(p-1)},
    \end{align*}
    which after substituting $p = \frac{q}{q-1}$ and $\lambda = x-y$ yields the result.
\end{proof}

\section{\texorpdfstring{{Proofs for results in Section \ref{sec:unif_convex_generalization}}}{Proofs for results in Section \ref{sec:unif_convex_generalization}}}

\subsection{Proof of \cref{lemma:stability_unif_reg}\label{subsec:proof_stability_unif_reg}}
\begin{proof}\linkofproof{lemma:stability_unif_reg}
Let $\hat{x}$ and $\tilde x$ be two $\hat\varepsilon$-minimizers of $f^{(\cstUnifC)}_S$ and $f^{(\cstUnifC)}_{S'}$ respectively that are $(\cstUnifC,p)$-uniformly convex w.r.t.\, $\norm{\cdot}$, where $S$ and $S'$ differ only in at most one sample. Let the minimum of $f^{(\cstUnifC)}_S$ be attained at $\hat{x}^*_\cstUnifC$ and the minimum of $f^{(\cstUnifC)}_{S'}$ is attained  at $\tilde x^*_\cstUnifC$. By the triangle inequality we have
\begin{equation}
    \| \hat{x} - \tilde x \|\leq \| \hat{x}^*_\cstUnifC - \tilde x^*_\cstUnifC \| + \| \hat{x} - \hat{x}^*_\cstUnifC \| + \| \tilde x - \tilde x^*_\cstUnifC \|.\label{eq:distance_iterates} 
\end{equation}

We denote the first term  of the right hand side in \eqref{eq:distance_iterates} as $S_1$ and upper bound it
\begin{equation}
    S_1 = \| \hat{x}^*_\cstUnifC - \tilde x^*_\cstUnifC \| \leq \left( \frac{\nu}{n \cstUnifC} \| \nabla \ell(\hat{x}_\cstUnifC^*; z_i) - \nabla \ell(x_\cstUnifC^*; z_i') \|_* \right)^{1/(\nu-1)} \leq \left( \frac{2\nu}{n\cstUnifC} \cstLip \right)^{1/(\nu-1)}, \label{eq:error_minima_distance}
\end{equation}
where the first inequality follows from \cref{lemma:minima_distance_bound} applied to the regularized \ERM{} functions that are uniformly convex w.r.t.\ $\norm{\cdot}$ and the second inequality comes from the triangle inequality combined with the Lipschitz constant $\cstLip$ for $\ell(\cdot; z)$ bounding the dual norm of $\nabla \ell ( \hat{x}_\cstUnifC^*; z_i)$ and $\nabla \ell (\tilde x_\cstUnifC^*; z'_i)$.

We denote the second term of the right hand side in \eqref{eq:distance_iterates} as $S_2$ and upper bound as follows 
\begin{equation} \label{eq:error_convergence}
    S_2 = \norm{\hat{x} - \hat{x}_\cstUnifC^\ast} \leq \left( \frac{\nu}{\cstUnifC}\left( f_S^{(\cstUnifC)}(\hat{x}) - f_S^{(\cstUnifC)}(\hat{x}_\cstUnifC^\ast) \right) \right)^{1/\nu}\leq \left( \frac{\nu}{\cstUnifC}\hat\varepsilon  \right)^{1/\nu},
\end{equation}
where the first inequality comes from the definition of $f_S^{(\cstUnifC)}(x)$ and the fact that $x_\cstUnifC^\ast$ is its minimizer, and the second inequality is the consequence of $\hat{x}$ being $\hat\varepsilon$-minimizer of $f_S^{(\cstUnifC)}$.  Note that the third term of the right hand side in \eqref{eq:distance_iterates}, which we denote as $S_3$, is bounded analogously as $S_2$ in \eqref{eq:error_convergence}.

When $\hat\varepsilon \leq (\nu/\cstUnifC)^{1/(\nu-1)} (2\cstLip/n)^{\nu/(\nu-1)}$, we have that the upper bound on $S_2$ in \eqref{eq:error_convergence} is smaller than the upper bound on $S_1$ in \eqref{eq:error_minima_distance}. In this case the stability is bounded for all $z\in\mathcal{Z}$ as
\begin{equation*}
    \abs{\ell(\hat{x}; z) - \ell(\tilde{x}; z)} \leq \cstLip \| \hat{x} - \tilde x\|\leq  3\cstLip  \left( \frac{2\nu}{n\cstUnifC} \cstLip \right)^{1/(\nu-1)}.
\end{equation*}

When the error of the regularized problem is small $\hat\varepsilon \leq \cstUnifC D^\nu$, we can upper bound the optimization error of the non-regularized problem as
\begin{align*}
 \begin{aligned}
    f_S(\hat{x}) - f_S(x^*) &= f^{(\cstUnifC)}_S(\hat{x}) - f^{(\cstUnifC)}_S(x^*) + \cstUnifC\psi(x^*) - \cstUnifC\psi(\hat{x})\\
     &\circled{1}[\leq] f^{(\cstUnifC)}_S(\hat{x}) - f^{(\cstUnifC)}_S(x_\cstUnifC^*) + \cstUnifC\psi(x^*) - \cstUnifC\psi(x_0)\\
    & \circled{2}[\leq]  \hat\varepsilon + \cstUnifC D^\nu \\
    &\leq 2 \cstUnifC D^\nu,
 \end{aligned}
\end{align*}
where $\circled{1}$ holds by $x_\cstUnifC^\ast$ being the minimizer of $f_S^{(\cstUnifC)}$ while $x_0$ is the minimizer of $\psi(x)$ and $\circled{2}$ comes from \cref{lemma:distance_bound} and the definition of $D$.
\end{proof}

\subsection{Proof of \cref{thm:risk_upper}\label{subsec:proof_risk_upper}}
\begin{proof}\linkofproof{thm:risk_upper}
    Let $\mathcal{A}$ be an algorithm that for a dataset $S$ returns an $\hat\varepsilon$-minimizer $\hat{x}$  of the regularized \ERM{} $f_S^{(\cstUnifC)}$ with $\cstLip$-Lipschitz losses $\ell(\cdot,z)$.
    By the excess risk decomposition in \eqref{eq:excess_risk_decomp1} and \cref{lemma:stability_unif_reg} we have
    \begin{equation*}
        \mathbb{E}_S \left[ \delta f(\hat x)\right] \leq \mathcal{E}_{\mathrm{stab}}(\mathcal{A}, S) + \mathbb{E}_S\left[ f_S(\hat{x}) - f_S(x^\ast) \right] \leq 3 \left( \frac{2\nu}{n\cstUnifC} \cstLip^\nu \right)^{1/(\nu-1)} + 2 \cstUnifC D^\nu,
    \end{equation*}
    where $x^\ast = \argmin_{x\in\R^d} f_S(x)$.
    
    To find the value of $\cstUnifC\geq 0$ that minimizes the upper bound we set its derivative in $\cstUnifC$ to zero, since the expression is convex on $\cstUnifC \geq 0$:
    \begin{equation*}
        0= 2 D^\nu - 3\left(\frac{2\nu}{n}\right)^\frac{1}{\nu-1} \cstLip^\frac{\nu}{\nu-1} \frac{1}{\nu-1}\cstUnifC^{-\frac{\nu}{\nu-1}},
    \end{equation*}
    which after rearranging yields the minimum is attained for
    \begin{equation}\label{eq:alpha_minimum}
        \cstUnifC = \left(\frac{3}{\nu-1}\right)^{1-\frac1\nu} 2^{\frac{2}{\nu} - 1} \nu^\frac{1}{\nu-1} \left(\frac{1}{n}\right)^\frac{1}{\nu} D^{1-\nu} \cstLip  > 0
    \end{equation}
    and the value of the excess risk is upper bounded as
    \begin{align*}
        \mathbb{E} \left[ \delta f(\hat x)\right] &\leq\mathcal{E}_{\mathrm{stab}}(\mathcal{A}) + \mathcal{E}_{\mathrm{opt}} \leq 2^{1+\frac{2}{\nu}}\left(\frac{3}{\nu-1}\right)^{1-\frac1\nu} \left(\frac{1}{n}\right)^\frac1\nu D \cstLip \\
        &\leq 8 \left(\frac{1}{n}\right)^\frac1\nu D \cstLip,
    \end{align*}
    where in the first line we substituted $\cstUnifC$ from \eqref{eq:alpha_minimum}, and the second line follows from $2^{1+2/\nu}(\frac{3}{\nu-1})^{1-\frac1\nu}\leq 8$ for $\nu\geq 2$.

    It remains to ensure the conditions of \cref{lemma:stability_unif_reg} are met, i.e.\ the error $\hat\varepsilon$ of the approximate solution $\hat{x}$, is small enough, and in particular bounded by the minimum of two terms. The bound on the first of the two terms in the minimum is met when 
    \begin{align*}
        \hat\varepsilon \leq \cstUnifC D^\nu &= 2^{2/\nu-1}\left(\frac{3}{\nu-1}\right)^{1-\frac1\nu} \left(\frac{1}{n}\right)^\frac{1}{\nu} D \cstLip\\
        &\leq 2 \left(\frac{1}{n}\right)^\frac{1}{\nu} D \cstLip.
    \end{align*}
    where in the first line we substituted $\cstUnifC$ from \eqref{eq:alpha_minimum}, and the second line follows from $(\frac{3}{\nu-1})^{1-1/\nu}\leq 2$ and $2^{2/\nu - 1} \leq 1$ for $\nu\geq 2$. The second error condition of \cref{lemma:stability_unif_reg} is 
    \begin{align*}
        \hat\varepsilon \leq \left(\frac{\nu}{\cstUnifC}\right)^{1/(\nu-1)} \left(\frac{2\cstLip}{n}\right)^{\nu/(\nu-1)} & = 2^{2/\nu} \left(\frac{\nu-1}{3}\right)^{\frac1\nu} \cstLip D \left(\frac{1}{n}\right)^{1+\frac1\nu} \\
        &\leq 6\cstLip D\left(\frac{1}{n}\right)^{1+\frac1\nu},
    \end{align*}
    where the second line follows from $2^{2/\nu}(\frac{\nu-1}{3})^{1/\nu}\leq 6$ for $\nu\geq 1$. 
\end{proof}

\subsection{Proof of \cref{thm:risk_lower}\label{subsec:proof_risk_lower}}
\begin{proof}\linkofproof{thm:risk_lower}
     We consider two cases, when $1 \leq p \leq 1+1/\log(d)$ and when $p>1+1/\log(d)$.

    \textbf{Case 1 ($1 \leq p \leq 1+1/\log(d)$):}\\
    For $v\in\{\pm e_i\}_{i=1}^s$, where $e_i$'s denote the first $s$ canonical basis vectors, we define the distribution $Z\sim P_v$ as
    \begin{equation*}
        Z_j = \begin{cases}
            1, \qquad &\text{with probability}\ \frac{1+\delta v_j}{2}\\
            -1, \qquad &\text{with probability}\ \frac{1-\delta v_j}{2} 
        \end{cases}\quad \text{for}\,j\in\{1,\ldots, s\},
    \end{equation*}
    and $Z_j = 0$ for $j\in \{s+1, \ldots, d\}$. The distribution is supported only on the first $s\leq d$ entries and is well defined when $\delta \leq 1$. We have that $\mathbb{E}_{z\sim P_v} = \delta v$. 
    
    Let $\ell(x;z) = \cstLip s^{-1/p^\ast} x^\top z$ be a linear loss where $1/p+1/p^\ast=1$. The population loss for $v\sim P_v$ is
    \begin{equation*}
        \ell_v(x) = \mathbb{E}_{z\sim P_v} \ell(x;z) = \delta s^{-1/p^\ast} x^\top z
    \end{equation*}
    and its infimum for $x\in\ballpzero(r)$ is
    \begin{equation*}
        \ell_v^\ast := \inf_{x\in\ballpzero(r)} \ell_v(x) = -\delta s^{-1/p^\ast}.
    \end{equation*} 

    For $v\neq v'$ we have
    \begin{align*}
        \mathrm{d_{\mathrm{opt}}} (v, v', \ballpzero[r]) = \inf_{x\in\ballpzero[r]} \ell_v(x) + \ell_{v'}(x) - \ell_v^\ast - \ell_{v'}^\ast &= s^{-1/p} \delta \inf\left( (v + v')^\top x + 2\right) \\
        &=\delta s^{-1/p^\ast} (2 - \norm{v+v'}_{p^\ast})\\
        &=(2-\sqrt{2})\delta s^{-1/p^\ast}.
    \end{align*}

    From \citep[Lemma 2, Sec.~A.1]{levy2019necessary} combined with Fano's inequality given in \citep[Proposition 7]{levy2019necessary} yields a lower bound on the excess risk as
    \begin{align}
        \inf_{\hat{x}\in\ballpzero[r]} \sup_{P\in\mathcal{P}}\mathbb{E}_{S\sim P^n}  \left[ \delta f(\hat x)\right] &\geq \frac{r}{4} \cstLip \delta s^{-1/p^\ast}\left(1-\frac{3n\delta^2 + \log 2}{\log(2s)}\right). \label{eq:lowerbound_delta_psmall}
    \end{align}
        
    Set $\delta = \sqrt{\log(s)/(6 n)}$, which yields
     \begin{align*}
        \inf_{\hat{x}\in\ballpzero[r]} \sup_{P\in\mathcal{P}}\mathbb{E}_{S\sim P^n}  \left[ \delta f(\hat x)\right] &\geq \frac{r}{4} \cstLip  s^{-1/p^\ast}  \sqrt{\frac{\log(s)}{n}}\left(1- \frac{\frac{32}{6} \log(s) + \frac{4}{3}\log 2}{s}\right)\\
        &\geq \frac{r}{4} \cstLip  s^{-1/p^\ast}  \sqrt{\frac{\log(s)}{n}},
    \end{align*}
    where the second inequality holds for $s\geq 8$ which ensures that the right hand side remains positive. Note that for $p \leq 1 + 1/\log(d)$ and $s\leq d$, we have that $p \leq 1 + 1/\log(s)$, for which we can bound $s^{-1/p^\ast}  = s^{-(p-1)/p} \geq 1/e$ completing the proof
    \begin{equation*}
        \inf_{\hat{x}\in\ballpzero[r]} \sup_{P\in\mathcal{P}}\mathbb{E}_{S\sim P^n}  \left[ \delta f(\hat x)\right] \geq \frac{r}{4e} \cstLip   \sqrt{\frac{\log(s)}{n}} \geq \frac{r \cstLip}{12} \sqrt{\frac{\log(s)}{n}}
    \end{equation*}
    when we choose $s =d$. In order for $\delta \leq 1$, we need that $\log(d) \leq 6n$. When $\log(d) \geq 6n$, we choose $\delta = 1$ and get
    \begin{equation*}
        \inf_{\hat{x}\in\ballpzero[r]} \sup_{P\in\mathcal{P}}\mathbb{E}_{S\sim P^n}  \left[ \delta f(\hat x)\right] \geq \frac{r \cstLip}{12}.
    \end{equation*}
    
    \textbf{Case 2 ($p > 1+1/\log(d)$):}\\
    Let $\ell(x;z) = \cstLip x^\top z$ be a linear loss. For $v\in\{\pm 1\}^d$ we define the distribution $Z\sim P_v$ as
    \begin{equation*}
        Z = \begin{cases}
            v_j e_j, \qquad &\text{with probability}\ \frac{1+\delta}{2s}\\
            -v_j e_j, \qquad &\text{with probability}\ \frac{1-\delta}{2s}
        \end{cases}\quad \text{for}\,j\in\{1,\ldots, s\},
    \end{equation*}
    supported only on the first $s\leq d$ entries. The distribution is well defined when $\delta \leq 1$.
    
    We have that
    \begin{align*}
        \ell_v(x) = \mathbb{E}_{z\sim P_v}\left[\ell(x;z) \right] &= \sum_{j=1}^s \frac{1+\delta}{2s}\ell(x, v_j e_j) + \frac{1-\delta}{2s}\ell(x, -v_j e_j) \\
        &= \cstLip\sum_{j=1}^s \frac{1}{2s} x_j v_j \left(1+\delta - 1 + \delta\right) \\
        &= \frac{\cstLip \delta}{s} x^\top v_{\Omega},
    \end{align*}
    where we denote $v_\Omega$ to be a vector that contains entries of $v$ at indices $\Omega= \{1,\ldots s\}$ and zeroes otherwise. By duality the minimum of this function can be computed as
    \begin{equation*}
        \ell_v^* =  \min_{x\in \ballpzero[r]} \frac{\cstLip\delta}{d} x^\top v_\Omega = \min_{x\in \ballpzero[1]} \frac{\cstLip\delta r}{d} x^\top v_\Omega = - \frac{\cstLip\delta r}{s} \| v_\Omega \|_{p^\ast}, 
    \end{equation*}
    where $1/p^\ast + 1/p = 1$.

    For $v, v' \in \left\{ \pm 1\right\}^d$, we have
    \begin{align*}
        \mathrm{d}_\mathrm{opt}(v,v',\ballpzero[r] ) :&= \inf_{x\in \ballpzero[r] } \ell_v(x) - \ell_v^* + \ell_{v'}(x) - \ell_{v'}^* \\
        &=  \inf_{x\in \ballpzero[1]}  \cstLip\frac{\delta r}{s}\left( x^\top(v_\Omega+v'_\Omega) + \| v_\Omega \|_{p^\ast} + \| v'_\Omega \|_{p^\ast}\right)\\
        & = \cstLip\frac{\delta r}{s}\left( \| v_\Omega \|_{p^\ast} + \| v'_\Omega \|_{p^\ast} - \| v_\Omega + v'_\Omega\|_{p^\ast}\right) \\
        & = 2\cstLip\frac{\delta r}{s}\left( s^{1/p^\ast} - (s- \|v_\Omega - v'_\Omega\|_0)^{1/p^\ast} \right),
    \end{align*}
    where $\|v_\Omega - v_\Omega'\|_0$ denotes the $\ell_0$ norm counting the number of different entries between $v_\Omega$ and $v_\Omega'$ i.e., the Hamming distance between $v_\Omega$ and $v_\Omega'$. 
    
    Now, it is sufficient to provide packing of $\left\{v\in\{ \pm 1\}^d : \supp(v)\subseteq \Omega\right\}$ that restricts $\|v_\Omega - v_\Omega'\|_0$ since, if $v,v'$ are not supported on $\Omega$ we have that $\mathrm{d}_{\mathrm{opt}}(v,v', \mathcal{B}_p) = 0$.
    
    It is sufficient to restrict $\| v_\Omega- v_\Omega'\|_1 \geq \frac{s}{2}$ as this implies also that $\| v_\Omega- v_\Omega'\|_0\geq \frac{s}{2}$. We can use the Gilbert-Varshimov bound \citep[Lemma 7.5]{Duchi2023Lecture}, that gives an $\frac{s}{2}$ $\ell_1$-packing of $\left\{v\in\{ \pm 1\}^d : \supp(v)\subseteq \Omega\right\}$ of size at least $\mathrm{exp(s/8)}$. Let $\mathcal{V}$ be the packing, we have that
    \begin{equation*}
        \forall v,v'\in\mathcal{V}\quad \mathrm{s.t.}\quad v\neq v':  \mathrm{d}_\mathrm{opt}(v,v',\ballpzero[r]) \geq \frac{r}{2}\cstLip \delta s^{-1/p},
    \end{equation*}
    where we used that $s^{1/p^\ast - 1} = s^{-1/p}$.

    By \citep[Lemma 2, Sec.\ A.1]{levy2019necessary}, we have
    \begin{align}
        \inf_{\hat{x}\in\ballpzero[r]} \sup_{P\in\mathcal{P}}\mathbb{E}_{S\sim P^n}  \left[ \delta f(\hat x)\right] &\geq \frac{r}{4} \cstLip \delta s^{-1/p}\left(1-\frac{3n\delta^2 + \log 2}{s/8}\right). \label{eq:lowerbound_delta}
    \end{align}
    Set $\delta =  \sqrt{\frac{s}{48n}}$. Since $s\leq 48 n$ we have
    \begin{equation*}
         \inf_{\hat{x}\in\ballpzero[r]} \sup_{P\in\mathcal{P}}\mathbb{E}_{S\sim P^n}  \left[ \delta f(\hat x)\right] \geq 
                    \frac{r}{16}\cstLip \frac{s^{1/2-1/p}}{\sqrt{n}},
    \end{equation*}
    when $s \geq 32 \log(2)$. In the low-dimensional case, when $d \leq 48n$, we can set $s = d$ to recover the lower bound in \citep{levy2019necessary}. 
    
    In the high-dimensional case, when $d > 48n$ and $1+\log^{-1}(d)\leq p \leq 2$ we choose $s=1$ to get a lower bound as
    \begin{equation*}
            \inf_{\hat{x}\in\ballpzero[r]} \sup_{P\in\mathcal{P}}\mathbb{E}_{S\sim P^n}  \left[ \delta f(\hat x)\right] \geq 
                    \frac{r}{16} \frac{\cstLip}{\sqrt{n}}.
    \end{equation*}
    Otherwise, when $d > 48 n$ and $p\geq 2$, we can set $s = n$, for which $\delta = \sqrt{\frac{1}{48}} \leq 1$, to get that 
    \begin{equation*}
            \inf_{\hat{x}\in\ballpzero[r]} \sup_{P\in\mathcal{P}}\mathbb{E}_{S\sim P^n}  \left[ \delta f(\hat x)\right] \geq 
                    \frac{r}{16}\cstLip n^{-1/p}.
    \end{equation*}
\end{proof}

\section{\texorpdfstring{{Proofs for results in Section \ref{sec:blackbox_stability}}}{Proofs for results in Section \ref{sec:blackbox_stability}}}

\subsection{Proof of \cref{thm:blackbox_uniform_stabp}} \label{subsec:proof_blackbox_uniform_stabp}
\begin{proof}\linkofproof{thm:blackbox_uniform_stabp}
        Let $f_S^{(\cstUnifC)}(x) = f_S(x) + \cstUnifC\frac{\cstRegC}{p}\| x - x_0\|_p^p$ be the regularized \ERM{}. From the theorem's assumption we have that $\norm{x^\ast-x_0}_p\leq R$ for $x^\ast = \argmin_{x\in \X} f_S(x)$, which implies by \cref{lemma:distance_bound} also that $\norm{x_\cstUnifC^\ast-x_0}_p\leq R$ for $x_\cstUnifC^\ast = \argmin_{x\in \R^d} f_S^{(\cstUnifC)}(x)$. Based on the value of $p$, we consider two cases:

        \paragraph{Case 1 ($p\geq 2$):} When $p\geq 2$, by the definition of $\cstRegC = 2^{p-2}$ the regularization term $\cstUnifC\frac{\cstRegC}{p}\norm{x - x_0}_p^p$ is $(\cstUnifC, p)$-uniformly convex and, by \cref{lemma:p_norm_ball_smooth_str_convex}, it is also $(p-1)\cstUnifC \cstRegC R^{p-2}$-smooth in $\ballp[x_0][R]$ w.r.t\ $\ell_p$-norm. As a result $f^{(\cstUnifC)}_S(x) = f_S(x) + \cstUnifC\frac{\cstRegC}{p}\norm{x-x_0}_p^p$ is $(\cstUnifC,p)$-uniformly convex and $\hat{\cstSmooth}$-smooth where $ \hat{\cstSmooth} = \cstSmooth + (p-1)\cstUnifC\cstRegC R^{p-2}$. Assume $\cstUnifC \leq \cstSmooth  R^{2-p}/((p-1)\cstRegC)$ that implies  $\hat{\cstSmooth} \leq 2\cstSmooth$ and which we later show is satisfied for our choice of $\cstUnifC$ when $T$ is large enough.

        We start with $x_0$ for which we know that $\norm{ x_0 - x^\ast}_p \leq R_0$, and by \cref{lemma:distance_bound}, also that $\norm{ x_0 - x_\cstUnifC^\ast}_p \leq R_0$, where  $R_0 \defi R$. From $f_S^{(\cstUnifC)}$ being $2\cstSmooth$-smooth we have $f_S^{(\cstUnifC)}(x_0) - f_S^{(\cstUnifC)}(x^\ast_{\cstUnifC})\leq \hat{\varepsilon}_{0} \defi \cstSmooth R_0^2 $.

        The algorithm will take $i = 1, \ldots, r$ stages to achieve the final accuracy $\hat\varepsilon$. At the stage $i$, the algorithm starts with an estimate $x_{i-1}$ within the distance $\norm{x_{i-1} - x^\ast_{\cstUnifC}}_p \leq R_{i-1}$ and will output a point $x_i$ that achieves accuracy $\hat\varepsilon_i$, such that $\hat\varepsilon_{i} = \hat\varepsilon_{i-1} / 2$. From the $(\cstUnifC, p)$-uniform convexity of $f_S^{(\cstUnifC)}$ combined with the guaranteed accuracy of $x_i$ on $f_S^{(\cstUnifC)}$, we then have that the output $x_i$ of stage $i$ satisfies
        \begin{equation*}
            \hat\varepsilon_i \geq f_S^{(\cstUnifC)}(x_i) - f_S^{(\cstUnifC)}(x^\ast_{\cstUnifC}) \geq \frac{\cstUnifC}{p} \| x_i - x^\ast_{\cstUnifC}\|_p^p.
        \end{equation*}
        Thus, from the uniform convexity of $f_S^{(\cstUnifC)}(x)$, at the next stage, $i+1$, the algorithm is initialized with a point $x_i$ for which $\norm{x_{i} - x^\ast_\cstUnifC}_p \leq R_{i} \defi ( p \hat\varepsilon_i /\cstUnifC  )^\frac1p$. We can initialize the algorithm at the next stage with this $R_{i}$.

        To achieve $\hat\varepsilon$ accuracy, it is sufficient to run the algorithm for $r = \log_2( \hat{\cstSmooth} R_0^2 /(2\hat\varepsilon) ) = \log_2( \cstSmooth R_0^2 /\hat\varepsilon )$ stages, which amounts to the following number of gradient oracle calls
        \begin{align*}
            T = \sum_{i=1}^{r} \hat{T}_i &= C \sum_{i=1}^{r} \left( \frac{\hat{\cstSmooth} R_{i-1}^2}{\hat\varepsilon_{i}}  \right)^\frac{1}{\gamma}\\
            &= C \sum_{i=0}^{r-1} \left( \hat{\cstSmooth} \left( \frac{p}{\cstUnifC}\right)^\frac{2}{p} \left(\frac{1}{\hat\varepsilon_i}\right)^{1-\frac{2}{p}} \right)^\frac{1}{\gamma} \\
            &\leq C  \hat{\cstSmooth}^\frac1\gamma \left( \frac{p}{\cstUnifC} \right)^\frac{2}{p\gamma}  \sum_{i = 0}^{r-1} \left( \frac{1}{2^{r-i-1}\hat\varepsilon} \right)^{\left(1-\frac{2}{p}\right) \frac1\gamma}\\
            &= C  \hat{\cstSmooth}^\frac1\gamma \left( \frac{p}{\cstUnifC} \right)^\frac{2}{p\gamma} \left( \frac1{\hat\varepsilon} \right)^{\left(1-\frac{2}{p}\right)/\gamma} \sum_{i = 0}^{r-1} \left( \frac{1}{2^{r-i-1}} \right)^{\left(1-\frac{2}{p}\right) \frac1\gamma},
        \end{align*}
        where for the inequality we used that $\hat\varepsilon_i = \hat\varepsilon_{r-1}2^{r-1-i} \geq \hat\varepsilon 2^{r-i-1}$ for all $i \in \{1, \ldots, r\}$.

    It remains to evaluate the sum
    \begin{align*}
        \sum_{i = 0}^{r-1} \left(\left(\frac{1}{2}\right)^{\left(1-\frac{2}{p} \right)/\gamma}\right)^{r-i-1} \leq \sum_{i = 0}^{r-1} \left(1^{1/\gamma}\right)^{r-i-1} = r = \log_2( \cstSmooth R^2 /\hat\varepsilon) 
    \end{align*}
    since for a fixed $\gamma>0$ we have $(\frac{1}{2})^{(1-2/p)/\gamma} \leq 1$ when $p\geq 2$.
    
    As a result, to reach accuracy $\hat\varepsilon$, it is sufficient to make the following number of gradient oracle calls 
    \begin{equation}
        T =  C \log_2\left( \frac{\cstSmooth R^2}{\hat\varepsilon} \right)  p^{\frac{2}{p\gamma}} \cstSmooth^\frac1\gamma \cstUnifC^{-\frac{2}{p\gamma}} \hat\varepsilon^{-\left(1-\frac{2}{p}\right)/\gamma},\label{eq:thm_black_box_stability_p_iteration_bound1}
    \end{equation}
    where we used that $\hat\cstSmooth \leq 2 \cstSmooth$ when $\cstUnifC \leq \cstSmooth R^{2-p}/((p-1)\cstRegC)$.

    From \cref{lemma:stability_unif_reg} with $\cstLip = 2\cstSmooth R$ we have that, in order for the output to be uniformly stable, we need to achieve accuracy $\hat\varepsilon$ satisfying both $\hat\varepsilon \leq \cstUnifC D^p = \cstUnifC\frac{\cstRegC}{p}R^p$ and $\hat\varepsilon \leq (p/\cstUnifC)^{1/(p-1)}\left( \frac{4\cstSmooth R}{n}\right)^{p/(p-1)}$.

    When $\cstUnifC \leq \cstRegC^{\frac{1-p}{p}} R^{2-p} \frac{4p\cstSmooth}{n}$, we have that $\cstUnifC\frac{\cstRegC}{p}R^p\leq (p/\cstUnifC)^{1/(p-1)}\left( \frac{4\cstSmooth R}{n}\right)^{p/(p-1)}$ and thus the accuracy we require is $\hat\varepsilon \leq \cstUnifC \frac\cstRegC{p} R^p $. To reach an accuracy that is upper bounded as $\hat\varepsilon \leq \cstUnifC \frac\cstRegC{p} R^p $ we get from  \eqref{eq:thm_black_box_stability_p_iteration_bound1} that the number of iterations must be at least
    \begin{align}
        \begin{aligned}
        T &\geq C \log_2\left(\frac{\cstSmooth R^{2-p} p}{ \cstUnifC \cstRegC}\right)   p^{1/\gamma} \cstSmooth^\frac1\gamma  \cstUnifC^{-\frac{1}{\gamma}} \cstRegC^{\frac{2-p}{p\gamma}}   R^{\frac{2-p}{\gamma}}.\label{eq:thm_black_box_stability_p_iteration_bound2}
        \end{aligned}
    \end{align}
    To ensure \eqref{eq:thm_black_box_stability_p_iteration_bound2} holds for a given $T$, we choose
    \begin{equation*}
        \cstUnifC = \left( \frac{C}{T}\right)^\gamma  p  \cstSmooth  \cstRegC^{\frac{2-p}{p}} R^{2-p} \log_2\left(\left(\frac{T}{C}\right)^{\gamma} \cstRegC^{-2/p}\right)^{\gamma},
    \end{equation*}
    which satisfies \eqref{eq:thm_black_box_stability_p_iteration_bound2} when $T\geq C(2 \cstRegC^{2/p})^{1/\gamma}$. For this choice of $\cstUnifC$, the requirement that $\cstUnifC \leq \cstRegC^{\frac{1-p}{p}} R^{2-p} \frac{4p\cstSmooth}{n}$ is met when $T = \tilde\Omega_p\left(n^{1/\gamma} \right)$. To ensure that $\hat\cstSmooth \leq 2\cstSmooth$, we need  $\cstUnifC \leq \cstSmooth R^{2-p}/((p-1)\cstRegC)$, which for our choice of $\cstUnifC$ is satisfied when $T\geq   C \left(p(p-1) \cstRegC^\frac2p\right)^{1/\gamma}$. Joining the two conditions gives
    \begin{equation*}
        T = \tilde\Omega_p\left( n^{1/\gamma} \right).
    \end{equation*}

    For this $\cstUnifC$ the accuracy $\hat\varepsilon$  we wish to reach from \cref{lemma:stability_unif_reg} is
    \begin{equation*}
        \hat\varepsilon \leq  \cstUnifC \frac{\cstRegC}{p} R^p =  \cstRegC^{2/p}  \cstSmooth   R^2 \left( \frac{C}{T}\right)^\gamma \log_2\left(\left(\frac{T}{C}\right)^{\gamma} \cstRegC^{-2/p}\right)^{\gamma},
    \end{equation*}
    which gives us the requirement for the minimum number of stages $r = \log_2(\cstSmooth R^2 / \hat\varepsilon)$.
    
    By \cref{lemma:stability_unif_reg} with $\cstLip = 2\cstSmooth R$, for this range of $T$, the optimization error will be
    \begin{equation*}
        \mathcal{E}_\mathrm{opt} = f_S(x_T) - f_S(x^\ast) \leq \cstUnifC \frac{\cstRegC}{p} R^p =  \left( 4 \frac{C}{T}\right)^\gamma  \log_2\left(\left(\frac{T}{C}\right)^{\gamma} \cstRegC^{-2/p}\right)^{\gamma} \cstRegC^{2/p}  \cstSmooth   R^2.
    \end{equation*}
    and the stability is
    \begin{align*}
        \mathcal{E}_\mathrm{stab}(\mathcal{A}) \leq 3\left( \frac{2p}{n\cstUnifC} \cstLip^p\right)^\frac{1}{p-1} &= 3  \left( 2 \cstSmooth^{p-1} p^{p-1} \cstRegC^{\frac{p-2}{p}} R^{2(p-1)} \left(\frac{T}{4 C} \right)^{\gamma} \log_2\left(\left(\frac{T}{C}\right)^{\gamma} \cstRegC^{-2/p}\right)^{-\gamma} \frac{1}{n}  \right)^\frac{1}{p-1} \\
        &=   \tilde\bigO_p\left( \left(\frac{T^{\gamma}}{n}\right)^\frac{1}{p-1} \cstSmooth R^2 \right),
    \end{align*}
    when $T = \tilde\Omega_p\left( n^{1/\gamma}\right)$.
    
    \paragraph{Case 2 ($p\in (1,2)$):} When $p\in (1,2)$, by the choice of $\cstRegC = 2^{p-3}(1-1/p)^{p-1}$ and \cref{lemma:unif_cvxty_of_norm_p_to_the_p}, we have that $\cstUnifC\frac{\cstRegC}{p}\norm{x}_p^p$ is $(\cstUnifC , p)$-Hölder smooth globally w.r.t\ $\ell_p$-norm. From \cref{lemma:p_norm_ball_smooth_str_convex} we have that $\cstUnifC\frac{\cstRegC}{p} \norm{x}_p^p$ is $\cstUnifC \cstRegC R^{p-2} (p-1)$-strongly convex inside of  $x\in\ballpzero[R]$ w.r.t\ $\ell_p$-norm. 
    
    In the following, denote $\psi(x) = \cstUnifC\frac{\cstRegC}{p}\norm{x-x_0}_p^p$ for ease of notation. We derive the combined Hölder smoothness of the regularized $\ERM{}$ as follows
    \begin{align*}
        f_S(x) + \psi(x) - \left( f_S(y) + \psi(y)+ \innp{\nabla f_S(y) + \nabla \psi(x), x-y}\right) &\leq \frac{\cstSmooth}{2}\norm{x-y}^2_p + \frac{\cstUnifC}{p}\norm{x-y}_p^p \\
        &\leq \frac{1}{p}\left(p 2^{1-p}\cstSmooth R^{2-p} + \cstUnifC\right)\norm{x-y}_p^p,
    \end{align*}
    for all $x,y\in\ballpzero[R]$. Consequently, we have that $f_S^{(\cstUnifC)}(x)$ is $(\hat{\cstSmooth}, p)$-Hölder smooth globally, where $\hat{\cstSmooth} = p2^{1-p}\cstSmooth R^{2-p} + \cstUnifC$ and $\cstUnifC \cstRegC R^{p-2}(p-1)$-strongly convex for $x\in\ballpzero[R]$ w.r.t\ $\ell_p$-norm. Assume $\cstUnifC \leq p2^{1-p}\cstSmooth R^{2-p}$ that implies  $\hat\cstSmooth \leq p2^{2-p}\cstSmooth R^{2-p}$ and which we later show is satisfied for our choice of $\cstUnifC$ when $T$ is large enough.

    We start with $x_0$ for which we know that $\norm{ x_0 - x^\ast_\cstUnifC}_p \leq R_0 \defi R$. By Hölder smoothness of $f_S^{(\cstUnifC)}$ we have that $f_S^{(\cstUnifC)}(x_0) - f_S^{(\cstUnifC)}(x^\ast_\cstUnifC) \leq \hat{\varepsilon}_0 \defi \hat{\cstSmooth} R_0^p / p$. 
    
    The algorithm will take $i = 1, \ldots, r$ stages to achieve final accuracy $\hat{\varepsilon}$. At the stage $i$, the algorithm starts with an estimate $x_{i-1}$ within the distance $\norm{x_{i-1} - x^\ast_\cstUnifC}_p \leq R_{i-1}$ and will output a point $x_i$ that achieves accuracy $\hat{\varepsilon}_i$, such that $\hat{\varepsilon}_{i} = \hat{\varepsilon}_{i-1} / 2$. From $\cstUnifC \cstRegC R^{p-2}(p-1)$-strong convexity of $f_S^{(\cstUnifC)}$ we have that the output $x_i$ of stage $i$ satisfies
    \begin{equation*}
        \hat{\varepsilon}_i \geq f_S^{(\cstUnifC)}(x_i) - f_S^{(\cstUnifC)}(x^\ast_\cstUnifC) \geq \cstUnifC \cstRegC R^{p-2}(p-1) \| x_i - x^\ast_\cstUnifC\|_p^2.
    \end{equation*}
    Thus, from the strong convexity of $f_S^{(\cstUnifC)}$, at the next stage, $i+1$, the algorithm is initialized with a point $x_i$ for which $\norm{x_{i} - x^\ast_\cstUnifC}_p \leq R_{i} \leq (R^{2-p}\hat{\varepsilon}_i  \cstUnifC^{-1}\cstRegC^{-1} (p-1)^{-1} )^{1/2}$.

    To achieve $\hat\varepsilon$ accuracy, it is sufficient to run the algorithm for $r = \log_2( \hat{\cstSmooth} R_0^p / (p\hat{\varepsilon}) )$ stages, which amounts to the following number of gradient oracle calls
    \begin{align*}
        T = \sum_{i=1}^{r} \hat{T}_i &= C \sum_{i=0}^{r-1} \left( \frac{\hat{\cstSmooth} R_i^p}{\hat\varepsilon_i}  \right)^\frac{1}{\gamma} \\
        &= C \sum_{i=0}^{r-1} \left( \hat{\cstSmooth} \left( \frac{R^{2-p}}{\cstUnifC \cstRegC (p-1)}\right)^\frac{p}{2} \left(\frac{1}{\hat{\varepsilon}_i}\right)^{1-\frac{p}{2}} \right)^\frac{1}{\gamma} \\
        &\leq C  \hat{\cstSmooth}^\frac1\gamma \left( \frac{ R^{2-p}}{\cstUnifC\cstRegC(p-1)}\right)^\frac{p}{2\gamma}  \sum_{i = 0}^{r-1} \left( \frac{1}{2^{r-i-1}\hat{\varepsilon}} \right)^{\left(1-\frac{p}{2}\right) \frac1\gamma}\\
        &= C  \hat{\cstSmooth}^\frac1\gamma \left( \frac{ R^{2-p}}{\cstRegC\cstUnifC(p-1)}\right)^\frac{p}{2\gamma} \left(\frac{1}{\hat\varepsilon}\right)^{\left(1-\frac{p}{2}\right)/\gamma} \sum_{i = 0}^{r-1} \left( \frac{1}{2^{r-i-1}} \right)^{\left(1-\frac{p}{2}\right) \frac1\gamma}
    \end{align*}
    where for the inequality we used that $\hat{\varepsilon}_i = \hat{\varepsilon}_{r-1}2^{r-1-i} \geq \hat{\varepsilon} 2^{r-i-1}$ for all $i\in\{ 1, \ldots, r\}$.

    It remains to evaluate the sum
    \begin{align*}
        \sum_{i = 0}^{r-1} \left(\left(\frac{1}{2}\right)^{\left(1-\frac{p}{2} \right)/\gamma}\right)^{r-i-1} \leq \sum_{i = 0}^{r-1} \left(1^{1/\gamma}\right)^{r-i-1} = r = \log_2( \cstSmooth R^p /(p\hat\varepsilon)) 
    \end{align*}
    since for a fixed $\gamma>0$ we have $(\frac{1}{2})^{(1-p/2)/\gamma} \leq 1$ when $p\in(1,2)$.

    As a result, to reach accuracy $\hat{\varepsilon}$, it is sufficient to make the following number of gradient oracle calls 
    \begin{equation}
        T = \log_2\left( \frac{\cstSmooth R^p}{p\hat\varepsilon}\right) C \left(\frac{2 p}{(p-1)^{p/2}} \right)^{1/\gamma}   \cstSmooth^{1/\gamma} R^{\frac{(2-p)(2+p)}{2\gamma}} \cstRegC^{-\frac{p}{2\gamma}} \cstUnifC^{-\frac{p}{2\gamma}} \hat\varepsilon^{-\left(1-\frac{p}{2}\right)/\gamma},\label{eq:thm_black_box_stability_p12_iteration_bound1}
    \end{equation}
    where we used that $\hat\cstSmooth \leq p2^{2-p} R^{2-p} \cstSmooth$ and that $2^{1-p}\leq 2$ for $p\in (1,2)$

    From \cref{lemma:stability_unif_reg} with $\cstLip = 2\cstSmooth R$, we have that for a $\cstUnifC \cstRegC R^{p-2}(p-1)$-strongly convex regularizer to result in an output that is uniformly stable, we need to achieve accuracy $\hat\varepsilon$ satisfying both $\hat\varepsilon \leq \cstUnifC \cstRegC R^{p-2}(p-1) D^2 = \cstUnifC \cstRegC^{1+2/p} p^{-2/p} (p-1) R^p$ and $\hat\varepsilon \leq 32 \cstSmooth^2 R^{4-p}  \cstUnifC^{-1} \cstRegC^{-1} (p-1)^{-1} n^{-2}$.

    When $\cstUnifC \leq 2^{2+\frac12} \cstSmooth R^{2-p} (p^{1/p}/(p-1)) \cstRegC^{-\frac{p+1}{p}} / n$, we have that $\cstUnifC \cstRegC^{1+2/p} p^{-2/p} (p-1) R^p \leq  32 \cstSmooth^2 R^{4-p}  \cstUnifC^{-1} \cstRegC^{-1} (p-1)^{-1} n^{-2}$ and thus the accuracy we need is $\hat\varepsilon \leq\cstUnifC \cstRegC^{1+2/p} p^{-2/p} (p-1) R^p$. To reach an accuracy that is upper bounded as $\hat\varepsilon \leq \cstUnifC \cstRegC^{1+2/p} p^{-2/p} (p-1) R^p$ we get from  \eqref{eq:thm_black_box_stability_p12_iteration_bound1} that it is sufficient to perform a number of iterations at least
    \begin{align}
        \begin{aligned}
        T &\geq C  2^{1/\gamma} \left(\frac{p^{2/p}}{p-1}\right)^\frac{1}{\gamma} R^{\frac{2-p}{\gamma}} \cstSmooth^\frac1\gamma  \cstUnifC^{-\frac{1}{\gamma}} \cstRegC^{-\frac{2}{p\gamma}} \log_2\left( \frac{\cstSmooth p^{2/p-1}}{(p-1) \cstUnifC \cstRegC^{(1+2/p)}}\right).  \label{eq:thm_black_box_stability_p12_iteration_bound2}
        \end{aligned}
    \end{align}

    To ensure \eqref{eq:thm_black_box_stability_p12_iteration_bound2} holds for a given $T$, we choose
    \begin{equation}
        \cstUnifC = \left( \frac{C}{T} \right)^{\gamma} \frac{4}{p-1} \cstSmooth R^{2-p} \cstRegC^{-\frac2p} \log_2\left(\frac{1}{2p\cstRegC}\left(\frac{T}{C}\right)^\gamma\right)^{\gamma}, \label{eq:thm_black_box_stability_p12_alpha}
    \end{equation}
    where we used that $p^{p/2} \leq 2$ for $p\in(1,2)$ and require  $T\geq C (4p\cstRegC)^{1/\gamma}$. 
    
    The choice of $\cstUnifC$ in \eqref{eq:thm_black_box_stability_p12_alpha} satisfies the requirement $\cstUnifC \leq 2^{2+\frac12} \cstSmooth R^{2-p} (p^{1/p}/(p-1)) \cstRegC^{-\frac{p+1}{p}} / n$ when $T = \tilde \Omega_p (n^{1/\gamma})$. In order to satisfy $\cstUnifC \leq p2^{1-p}\cstSmooth R^{2-p}$ which we need to have  $\hat\cstSmooth \leq p2^{2-p}\cstSmooth R^{2-p}$, for the choice of $\cstUnifC$ in \eqref{eq:thm_black_box_stability_p12_alpha}, we need $T\geq 2^{3(1+1/\gamma)} C^{1/\gamma} \cstRegC^{-\frac{2}{p\gamma}} /(p-1)^{1/\gamma}$.

    With the choice of $\cstUnifC$ as in \eqref{eq:thm_black_box_stability_p12_alpha}, we aim to achieve accuracy
    \begin{equation*}
        \hat\varepsilon \leq \cstUnifC \cstRegC^{1+2/p} p^{-2/p} (p-1) R^p= 2 \left(\frac{C}{T} \right)^{\gamma} \log_2\left(\frac{1}{2p\cstRegC}\left(\frac{T}{C}\right)^\gamma\right)^{\gamma} \cstRegC \cstSmooth R^2,
    \end{equation*}
    which gives us the requirement for the minimum number of stages $r=\log_2( \cstSmooth R^p /(p\hat\varepsilon)) = \log_2(R^{p-2}/(2p\cstRegC))$. From \cref{lemma:stability_unif_reg}, we have that the optimization error will be
    \begin{equation*}
        \mathcal{E}_{\mathrm{opt}} = f_S(x_T) - f_S(x^\ast) \leq 4 \left(\frac{C}{T} \right)^{\gamma} \cstRegC \cstSmooth R^2 \log_2\left(\frac{1}{2p\cstRegC}\left(\frac{T}{C}\right)^\gamma\right)^{\gamma}
    \end{equation*}
    and the stability is
    \begin{align*}
        \mathcal{E}_\mathrm{stab}(\mathcal{A}) 
        & =\tilde\bigO_p\left( \frac{T^{\gamma} \cstSmooth R^2}{n} \right),
    \end{align*}
    when $T =\tilde\Omega_p(n^{1/\gamma})$.
\end{proof}

\subsection{Proof of \cref{corr:usolp_excess_risk_low_dim}} \label{subsec:proof_usolp_excess_risk_low_dim}
\begin{proof}\linkofproof{corr:usolp_excess_risk_low_dim}
    Let $p>1$ and denote $\hat{p} = \max\{ p, 2\}$. Let $\psi(x) = \frac{1}{2}\norm{x-x_0}_2^2$. When $x^\ast\in\ballp[x_0]$, we have due to the choice of $\psi(x)$ that
    \begin{equation*}
        \norm{x^\ast-x_0}_p \geq \sqrt{2} d^{1/\hat{p}-1/2} \left(\frac12\norm{x^\ast-x_0}_2^2 \right)^{1/2},
    \end{equation*}
    which implies that we can apply \cref{thm:risk_upper} with $D = \sqrt{1/2} d^{1/2-1/\hat{p}}R$.

    The regularized objective $f^{(\cstUnifC)}_S$ is $(\cstSmooth + \cstUnifC)$-smooth. Assume that $\cstUnifC \leq \cstSmooth$, which we soon show that it holds for $n$ large enough, such that the objective $f^{(\cstUnifC)}_S$ is $2\cstSmooth$-smooth. 
    
    We apply \cref{thm:risk_upper} using the fact that $\cstLip = 2\cstSmooth R$ for $x\in\ballp[x_0]$ as explained in \eqref{eq:smooth_to_lipschitz}, which requires to choose $\cstUnifC = 2^{1-1/2}\sqrt{3} D^{-1} \cstLip =  2 \sqrt{3} d^{1/\hat{p}-1/2} n^{-1/2} \cstSmooth$ where $\cstLip$-Lipschitz and $\cstSmooth$-smooth constants of the loss are w.r.t\, $\ell_p$-norm. To ensure that $\cstUnifC \leq \cstSmooth$ we assumed earlier, we need $n\geq 12 d^{2/\hat{p}-1}$.

    By the developments in the convergence analysis of USOLP in \cref{subsec:proof_blackbox_uniform_stabp} for $p = 2$, the equation \eqref{eq:thm_black_box_stability_p_iteration_bound1} gives that the total number of gradient oracle calls for this choice of $\cstUnifC$ is 
    \begin{equation*}
        T = \tilde\Omega_p\left( \left(\frac{n^{1/2}}{d^{1/\hat{p}-1/2}}\right)^{1/\gamma}\right).
    \end{equation*}
    and the bound on the excess risk is
    \begin{equation*}
        \mathbb{E}_S \left[ \delta f(\hat{x}_\cstUnifC)\right] =\bigO_p\left(  \cstSmooth R^2 \frac{d^{1/\hat{p}-1/2}}{n^{1/2}}\right),
    \end{equation*}
    where we used that $D = \sqrt{1/2} d^{1/2-1/\hat{p}}R$.
\end{proof}

\subsection{Proof of \cref{corr:usolp_excess_risk_high_dim}} \label{subsec:proof_usolp_excess_risk_high_dim}
\begin{proof}\linkofproof{corr:usolp_excess_risk_high_dim}
    \textbf{Case 1 ($p \geq2$):} We choose $\cstUnifC = (\frac{3}{p-1})^{1-\frac1p} 2^{2/p} p^{1+\frac1{p(p-1)}} \cstRegC^{\frac{1-p}{p}} \left(\frac{1}{n}\right)^\frac{1}{p} R^{2-p} \cstSmooth$ as given in the proof in \eqref{eq:alpha_minimum} with $\cstLip = 2\cstSmooth D$ and $D = (\cstRegC/p)^{1/p} R$. When $n \geq 3^{p-1} 4 (p-1) p^{p+\frac{1}{p-1}}\cstRegC$, we have that $\cstUnifC \leq \cstSmooth R^{2-p}/((p-1)\cstRegC)$ and thus $\hat \cstSmooth \leq 2\cstSmooth$ as given in the proof of \cref{thm:blackbox_uniform_stabp} for $p\geq2$, where $\hat\cstSmooth$ is the smoothness of the regularized empirical risk objective.
    
    By the developments in the convergence analysis of USOLP given in \cref{subsec:proof_blackbox_uniform_stabp} for case $p\geq 2$, the equation \eqref{eq:thm_black_box_stability_p_iteration_bound1} gives that the number of iterations for $x_T$ to be $\hat\varepsilon$-accurate on $f_S^{(\cstUnifC)}$ is
    \begin{equation*}
        T = \tilde\Omega_p \left(  \cstSmooth^{\frac1\gamma(1-\frac2p)} R^{2\frac{p-2}{p\gamma}} \cstRegC^{2\frac{p-1}{p^2\gamma}} n^{\frac{2}{p^2\gamma}} \hat\varepsilon^{-\left(1-\frac{2}{p}\right)/\gamma} \right),
    \end{equation*}
    which, since we need to ensure $\hat\varepsilon \leq 6 \cstSmooth (\cstRegC/p)^{1/p}R^2 / n^{1+1/p}$ in order for \cref{thm:risk_upper} to apply, implies we need to have number of iterations as
    \begin{align*}
        T &=\tilde \Omega_p\left( \left( n^{1-1/p} \right)^{1/\gamma} \right).
    \end{align*}
    Consequently, \cref{thm:risk_upper} upper bounds the excess risk of $x_T$ as $\mathbb{E}_S[\delta f(x_T)] \leq 16  (\cstRegC/p)^{1/p} \cstSmooth R^2 / n^{1/p}$ for $p\geq 2$.

    \textbf{Case 2 ($p \in (1,2)$):} We have that $f_S^{(\cstUnifC)}$ is $(\hat\cstSmooth, p)$-Hölder smooth and $\cstUnifC \cstRegC (p-1) R^{p-2}$-strongly convex in $\mathcal{B}_{\norm{\cdot}_p}(x_0, R)$. We set $\cstUnifC = 4 \sqrt{3} \cstRegC^{-3/2}  \sqrt{\frac{1}{n}} R^{2-p}\cstSmooth/(p-1)$, which is equivalent to $\cstUnifC (p-1) D^{p-2} = 4 \sqrt{3} \sqrt{\frac{1}{n}}  \cstSmooth $, and corresponds to the optimal choice of the strong-convexity constant in \cref{thm:risk_upper} when the regularizer is $\cstUnifC \cstRegC (p-1) R^{p-2}$-strongly convex. By the developments in the convergence analysis of USOLP given in \cref{subsec:proof_blackbox_uniform_stabp} for case $p\in (1,2)$, the equation \eqref{eq:thm_black_box_stability_p12_iteration_bound1} gives that the number of iterations for $x_T$ to be $\hat\varepsilon$-accurate on $f_S^{(\cstUnifC)}$ is
    \begin{equation*}
        T = \tilde\Omega_p \left(  \cstSmooth^{\frac{2-p}{2\gamma}} R^{\frac{2-p}{\gamma}} n^{\frac{p}{4\gamma}} \hat\varepsilon^{-\left(1-\frac{p}{2}\right)/\gamma}\right)
    \end{equation*}
    when $\hat\cstSmooth \leq p2^{2-p} \cstSmooth R^{2-p}$. Since we need to ensure $\hat\varepsilon \leq 6 \cstSmooth  R^2 (\cstRegC / p)^{1/p} / n^{3/2}$ in order for \cref{thm:risk_upper} to apply, implies we need to have number of iterations as
    \begin{align*}
        T &= \tilde\Omega_p \left(  n^{\frac{3-p}{2\gamma}}\right)
    \end{align*}
    
    When $n\geq 48 p^{2p-4} (p-1)^{2p-4}$, we have that $\cstUnifC \leq \cstSmooth R^{1-p}p^{2-p} (p-1)^{p-1}$, which guarantees that $\hat\cstSmooth \leq p 2^{2-p} R^{2-p}\cstSmooth$ by the developments in \cref{subsec:proof_blackbox_uniform_stabp} for $p\in(1,2)$.
    
    Consequently, \cref{thm:risk_upper} upper bounds the excess risk of $x_T$ as $\mathbb{E}_S[\delta f(x_T)] \leq 16 (\cstRegC/p)^{1/p} \cstSmooth R^2 / n^{1/2}$ for $p\in (1,2)$.
\end{proof}

\subsection{Proof of \cref{thm:diakonikolas_stability_bound} \label{subsec:proof_diakonikolas_stability_bound}}
\begin{proof}\linkofproof{thm:diakonikolas_stability_bound}
Consider the Generalized AGD+ algorithm from \citep{diakonikolas2021complementary} for minimizing $f_S^{(\cstUnifC)} = f_S(x) + \cstUnifC \psi(x)$ , where $f_S(x):\mathcal{X} \rightarrow \R$ is $\cstSmooth$-smooth, $\cstUnifC \psi(x)$ is $(\cstUnifC,p)$-uniformly convex, and we assume that $\cstSmooth^{p/2}\geq \cstUnifC$. From \citep[(eq. 15)]{diakonikolas2021complementary}, we know the algorithm takes
\begin{align}
    \begin{aligned}
    T &\geq c\left(\min \left\{ \left(\frac{1}{\hat\epsilon}\right)^{\frac{p-2}{p+2}} \left(\frac{\cstSmooth^{p/2}}{\cstUnifC}\right)^{\frac{2}{p+2}} \log\left( \frac{\cstSmooth D}{\hat\epsilon}\right), \left(\frac{\cstSmooth}{\hat\epsilon}\right)^{\frac{p}{p+2}} \left(\frac{D^{p/2}}{\cstUnifC}\right)^{\frac{2}{p+2}}\right\}\right) \\
       &\Big(\leq c \left(\frac{1}{\hat\epsilon}\right)^{\frac{p-2}{p+2}} \left(\frac{\cstSmooth^{p/2}}{\cstUnifC}\right)^{\frac{2}{p+2}} 
    \log\left(\frac{\cstSmooth D^p}{\hat\varepsilon}\right)\Big)
    \end{aligned} \label{eq:optimization_error1}
\end{align}
gradient oracle calls for some constant $c>0$ in order to achieve $f^{(\cstUnifC)}(x_T) - f^{(\cstUnifC)}(x_\cstUnifC^*)\leq \hat\epsilon$. Due to \cref{lemma:distance_bound} we have that $D^p \geq D_\Psi(x_\cstUnifC^*,x_0)$ and for the second inequality we picked the first argument of the minimum. \emph{(Here, in the notation of \citep{diakonikolas2021complementary}, we take $\phi(u) = \frac1\cstUnifC D_\Psi(u, x_0)$ and upper bound it with $D/\cstUnifC$.)}

In order to apply \cref{lemma:stability_unif_reg} with $\cstLip = 2\cstSmooth R$ and $D = (\cstRegC/p)^{1/p} R$, we need the Generalized AGD+ algorithm to reach $\hat{\varepsilon}$-minimizer, satisfying both: $\hat\varepsilon \leq \cstUnifC (\cstRegC/p) R^p$ and $\hat\varepsilon \leq \left(p/\cstUnifC\right)^{1/(p-1)} \left(\frac{4 \cstSmooth R}{n}\right)^{p/(p-1)}$.

When $\cstUnifC \leq \cstRegC^\frac{1-p}{p}R^{2-p}  \frac{4p\cstSmooth}{n}$, we have that $\cstUnifC \frac{\cstRegC}{p} R^p \leq \left(p/\cstUnifC\right)^{1/(p-1)} \left(4\cstSmooth R/n\right)^{p/(p-1)}$. Then the accuracy we need is $\hat\varepsilon \leq \cstUnifC \frac{\cstRegC}{p} R^p$ and the number of required iterations is lower bounded as
\begin{align}
    \begin{aligned}
    T &\geq c \left( \frac{1}{\cstUnifC}\right)^{\frac{p}{p+2}}  \left(\frac{\cstRegC}{p}\right)^{\frac{p-2}{p+2}} R^{-p\frac{p-2}{p+2}} \cstSmooth^\frac{p}{p+2} \log\left( \frac{p \cstSmooth}{\cstUnifC}\right) \label{eq:iteration_bound2}
    \end{aligned}.
\end{align}

For a given $T$, we can choose 
\begin{equation}
    \cstUnifC = \left(\frac{c}{T}\right)^{1+\frac2p} \left(\frac{p}{\cstRegC}\right)^{1-\frac{2}{p}} R^{2 - p} \cstSmooth \left(\log\left( \left(\frac{T}{c}\right)^{1+2/p} p^{2/p} R^{p-2}\right)\right)^{1+\frac2p}, \label{eq:genagd_alpha_choice}
\end{equation}
which for $T \geq c e^\frac{p}{p+2} (p/\cstRegC)^{-2/(p+2)} R^{-p\frac{p-2}{p+2}}$, satisfies \eqref{eq:iteration_bound2}. 

The required bound on $\cstUnifC \leq \cstRegC^\frac{1-p}{p} R^{2-p} \frac{4p\cstSmooth }{n}$ is satisfied when
\begin{equation}
    T\geq c \left(\frac{n }{4 }\right)^\frac{p}{p+2} \left(\frac{\cstRegC}{p}\right)^{\frac{2}{p+2}} \log\left( \left(\frac{T}{c}\right)^{1+\frac2p} p^{2/p} R^{p-2}\right). \label{eq:iteration_bound3}
\end{equation}

By \citep[Lemma 4]{baudry2024multi}, we have that for $x\geq3$ and constants $A,B>0$, if $x\geq 3 \frac{A}{B} \log(A)$, then also $x \geq A \log(Bx)$. Applying the result to \eqref{eq:iteration_bound3}, yields that the Generalized AGD+ algorithm reaches the required accuracy when the number of iterations lower bounded as
\begin{align*}
    T &\geq 3 p^{-\frac{4}{p+2}} \left(1+\frac{2}{p}\right) c^2 \left( \frac{n}{4 }\right)^{\frac{p}{p+2}} R^{2-p} \log \left( c \left(\frac{n}{4}\right)^\frac{p}{p+2} p^{-\frac{2}{p+2}} \right)\\
    & = \tilde\Omega_p\left( c^2 R^{2-p} n^{\frac{p}{p+2}} \right).
\end{align*}
 
For this range of $T$ and the choice of $\cstUnifC$ in \eqref{eq:genagd_alpha_choice}, the optimization error is
\begin{align*}
    \mathcal{E}_{\mathrm{opt}} = f_S(x_T) - f_S(x^\ast) \leq 2\cstRegC\cstUnifC R^p /p &= 2^{p-1} \left(\frac{c}{T}\right)^{1+\frac2p} p^{-\frac{2}{p}} D^2 \cstSmooth \left(\log\left( \left(\frac{T}{c}\right)^{1+2/p} p^{2/p} D^{p-2}\right)\right)^{1+\frac2p} \\
    &= \mathcal{\tilde O}_p\left( D^2 \cstSmooth \left(\frac1T\right)^{1+2/p} \right)
\end{align*}
and the stability is 
\begin{align*}
    \mathcal{E}_{\mathrm{stab}}(\mathcal{A})&\leq 3 \left( \frac{2p}{n\cstUnifC} \cstLip^p \right)^{1/(p-1)} \\
    &= 3 \left( \frac{2^{1+p}p^{2/p}\cstSmooth^p D^p}{\cstSmooth n D^{2-p}} \left(\frac{T}{c}\right)^{1+2/p} \right)^\frac{1}{p-1}  \left(\log\left( \left(\frac{T}{c}\right)^{1+2/p} D^{p-2}\right)\right)^{-(1+\frac2p)\frac{1}{p-1}} \\
    & = \mathcal{\tilde O}_p\left( \left(\frac{T^{1+2/p}}{n}\right)^{1/(p-1)} \cstSmooth D^2 \right).
\end{align*}
\end{proof}

\subsection{Proof of \cref{thm:diakonikolas_generalization_bound} \label{subsec:proof_diakonikolas_generalization_bound}}
\begin{proof}\linkofproof{thm:diakonikolas_generalization_bound}
    For $p \geq 2$, choose $\cstUnifC = (\frac{3}{p-1})^{1-\frac1p} 2^{2/p} p^{1+\frac1{p(p-1)}} \cstRegC^{\frac{1-p}{p}} \left(\frac{1}{n}\right)^\frac{1}{p} R^{2-p} \cstSmooth$ as in \cref{thm:risk_upper}, and plug it into the convergence rate of the Generalized AGD+ given in \eqref{eq:optimization_error1} in the proof in \cref{subsec:proof_diakonikolas_stability_bound}. We get that the number of required iterations for $\hat\varepsilon$ accuracy is
    \begin{equation*}
        T\geq c \left( \frac{p-1}{3}\right)^{2\frac{p-1}{p+2}} 2^{-\frac{4}{p(p+2)}} p^{-\frac{2}{p+2}} n^{\frac{2}{p(p+2)}} R^{2\frac{p-2}{p+2}} \cstSmooth^{\frac{p-2}{p+2}} \left(\frac{1}{\hat\epsilon}\right)^{\frac{p-2}{p+2}}  
    \log\left(\frac{\cstSmooth R^p}{\hat\varepsilon}\right),
    \end{equation*}
    and to achieve accuracy $\hat\varepsilon \leq 12 \cstSmooth R^2 / n^{1+\frac1p}$, we need
    \begin{align*}
        T&= \tilde\Omega_p\left(  n^{\frac{p-1}{p+2}}\right)
    \end{align*}
    Consequently, \cref{thm:risk_upper} upper bounds the excess risk of $x_T$ as $\mathbb{E}_S[\delta f(x_T)] \leq 16 \cstSmooth R^2 / n^{1/p}$ for $p\geq 2$.
\end{proof}
\end{document}